\documentclass[letterpaper,11pt]{article}
\usepackage{etex}
\usepackage[margin=1in]{geometry}

\usepackage[dvipsnames,table,xcdraw]{xcolor}
\usepackage{amssymb,amsfonts,amsmath,amstext,amsthm,mathrsfs}
\usepackage{graphics,latexsym,epsfig,psfrag,wrapfig,comment,paralist}
\usepackage{xspace}
\usepackage{subcaption,bm,multirow}
\usepackage{booktabs}
\usepackage{mathdots}
\usepackage{bbold}
\usepackage{centernot}
\usepackage{algorithm}
\usepackage{algorithmic}
\usepackage{prettyref}
\usepackage{listings}
\usepackage{tikz}
\usetikzlibrary{decorations,calligraphy}
\usepackage{pgfplots}
\usetikzlibrary{decorations.pathreplacing}
\usetikzlibrary{shapes}
\usetikzlibrary{positioning, arrows, matrix, calc, patterns, fit}
\usepackage{caption}
\usepackage{array}
\usepackage{mdwmath}
\usepackage{multirow}
\usepackage{mdwtab}
\usepackage{eqparbox}
\usepackage{multicol}
\usepackage{amsfonts}
\usepackage{multirow,bigstrut,threeparttable}
\usepackage{amsthm}
\usepackage{array}
\usepackage{bbm}
\usepackage{epstopdf}
\usepackage{mdwmath}
\usepackage{mdwtab}
\usepackage{eqparbox}
\usepackage{tikz}
\usepackage{latexsym}
\usepackage{amssymb}
\usepackage{bm}
\usepackage{graphicx}
\usepackage{mathrsfs}
\usepackage{epsfig}
\usepackage{psfrag}
\usepackage{setspace}
\definecolor{linkColor}{HTML}{E74C3C}
\definecolor{pearcomp}{HTML}{B97E29}
\definecolor{citeColor}{HTML}{2980B9}
\definecolor{urlColor}{HTML}{1D2DEC}
\definecolor{conjColor}{HTML}{9ab569}
\usepackage[CJKbookmarks=true,
            bookmarksnumbered=true,
            bookmarksopen=true,
            colorlinks=true,
            citecolor=citeColor,
            linkcolor=linkColor,
            anchorcolor=red,
            urlcolor=urlColor,
            ]{hyperref}
\usepackage{natbib}
\setcitestyle{authoryear,round}
\newtheoremstyle{break}
  {\topsep}{\topsep}%
  {\itshape}{}%
  {\bfseries}{}%
  {\newline}{}%

\usepackage{pgfplots}
\usepackage{amsmath,amssymb}
\usepackage{mathtools}
\usepackage{stmaryrd}
\pgfmathdeclarefunction{gauss}{3}{%
  \pgfmathparse{#3/(#2*sqrt(2*pi))*exp(-((x-#1)^2)/(2*#2^2))}%
}
\pgfmathdeclarefunction{mingauss}{4}{%
  \pgfmathparse{#4/(#3*sqrt(2*pi))*exp(-max((x-#1)^2, (x-#2)^2)/(2*#3^2))}%
}

\tikzset{
  invisible/.style={opacity=0},
  visible on/.style={alt={#1{}{invisible}}},
  alt/.code args={<#1>#2#3}{%
    \alt<#1>{\pgfkeysalso{#2}}{\pgfkeysalso{#3}}
  },
}

\newtheorem{definition}{\textbf{Definition}}%

\newtheorem{lemma}{\textbf{Lemma}}%
\newtheorem{theorem}{\textbf{Theorem}}%

\newtheorem*{insight*}{\textbf{Observation}}
\newtheorem{proposition}{\textbf{Proposition}}[section]

\newtheorem{assumption}{Assumption}

\newtheorem*{lemmai*}{\textbf{Lemma (informal)}}
\newtheorem{remark}{\textbf{Remark}}

\usepackage[normalem]{ulem}
\usepackage[nameinlink]{cleveref}

\crefname{assumption}{assumption}{assumptions}
\crefname{table}{table}{tables}

\renewcommand{\cite}[1]{\citep{#1}}

\newcommand{\A}{\mathcal{A}}
\renewcommand{\S}{\mathcal{S}}
\newcommand{\Z}{\mathcal{Z}}
\newcommand{\D}{\mathcal{D}}
\newcommand{\F}{\mathcal{F}}
\newcommand{\N}{\mathcal{N}}
\newcommand{\C}{\mathcal{C}}
\newcommand{\E}{\mathbb{E}}
\newcommand{\W}{\mathcal{W}}
\newcommand{\Q}{\mathcal{Q}}
\newcommand{\V}{\mathcal{V}}
\newcommand{\M}{\mathcal{M}}
\newcommand{\T}{\mathsf{T}}
\newcommand{\Reg}{\text{Reg}}
\newcommand{\High}{\textsc{High}}
\newcommand{\Low}{\textsc{Low}}
\newcommand{\poly}{\text{poly}}

\newcommand{\vp}{\varepsilon/72\cdot\sqrt{\lambda\delta/|\Z|}}
\newcommand{\coverEps}{\varepsilon_0}
\newcommand{\cover}{\F,\coverEps}
\newcommand{\probln}{384L_1\cdot\ln(4\mathcal{N}(\cover)/\delta)}
\newcommand{\prob}{\probln/(\varepsilon^2\cdot|\Z|)}

\usepackage{color}
\definecolor{cm}{RGB}{0,0,200}
\definecolor{purple}{RGB}{200,0,200}

\usepackage[intoc]{nomencl}
\makenomenclature

\makeatletter
\newcommand{\vast}{\bBigg@{2.5}}
\newcommand{\Vast}{\bBigg@{5}}
\makeatother

\usepackage{bbm}

\def\poly{\operatorname{poly}}
\usepackage[utf8]{inputenc}
\usepackage[LGR,T1]{fontenc}

\usepackage{breqn}

\usepackage{enumitem,kantlipsum}

\title{Provably Efficient Reinforcement Learning via Surprise Bound}

\author{%
  Hanlin Zhu%
  \thanks{Department of Electrical Engineering and Computer Sciences, UC Berkeley. %
  	\url{hanlinzhu@berkeley.edu}}%
  \and %
  Ruosong Wang%
  \thanks{Paul G. Allen School of Computer Science \& Engineering, University of Washington. %
      \url{ruosongw@cs.washington.edu} }%
  \and 
  Jason D. Lee
   \thanks{ Electrical and Computer Engineering, Princeton University. %
      \url{jasonlee@princeton.edu} }%
}
\date{\today}

\begin{document}

\maketitle

\begin{abstract}
   Value function approximation is important in modern reinforcement learning (RL) problems especially when the state space is (infinitely) large. 
    Despite the importance and wide applicability of value function approximation, its theoretical understanding is still not as sophisticated as its empirical success, especially in the context of general function approximation.
   In this paper, we propose a  provably efficient  RL algorithm (both computationally and statistically) with general value function approximations. We show that if the value functions can be approximated by a function class $\mathcal{F}$ which satisfies the Bellman-completeness assumption, our algorithm achieves an $\widetilde{O}(\text{poly}(\iota H)\sqrt{T})$ regret bound where $\iota$ is the product of the surprise bound and log-covering numbers, $H$ is the planning horizon, $K$ is the number of episodes and $T = HK$ is the total number of steps the agent interacts with the environment. Our algorithm achieves reasonable regret bounds when applied to both the linear setting and the sparse high-dimensional linear setting. Moreover, our algorithm only needs to solve $O(H\log K)$ empirical risk minimization (ERM) problems, which is far more efficient than previous algorithms that need to solve ERM problems for $\Omega(HK)$ times.
\end{abstract}

\section{Introduction}

Modern Reinforcement Learning (RL) problems are often challenging due to the huge state spaces, and in practice, value function approximation schemes are usually employed to tackle this issue. 
Empirically, combining various reinforcement learning algorithms with function approximation schemes has led to tremendous success on various tasks~\citep{mnih2013playing,mnih2015human,silver2017mastering}.
However, despite the great empirical success, our theoretical understanding of RL with function approximation is still not as sophisticated as its empirical counterpart.
Until recently, most existing theoretical work in RL has been focusing on the tabular setting or the linear setting~\citep{azar2017minimax, jin2018q, yang2019sample, wang2019optimism, du2019provably, du2019good, agarwal2020model, wang2020long, du2020agnostic, jin2020provably, zanette2020learning, li2020breaking}, while in practice, complex function approximators like neural networks are usually employed. 
Over the years, understanding conditions on the function class that permit sample-efficient RL has evolved into an important open research problem in machine learning theory. 

Existing provably efficient RL algorithms that can handle general function approximation~\citep{jiang2017contextual, sun2019model, ayoub2020model, jin2021bellman, du2021bilinear} usually require solving computationally intractable optimization problems and are therefore computationally inefficient.
Recently,~\citet{wang2020reinforcement} proposed a provably efficient RL algorithm with general function approximation for function classes with bounded eluder dimensions. 
The algorithm by~\citet{wang2020reinforcement} is based on Least Squares Value Iteration (LSVI) and the principle of ``optimism in the face of uncertainty''. 
There are two shortcomings in the work of~\citet{wang2020reinforcement}. 
First, in order to calculate the exploration bonus, their algorithm applies sensitivity sampling~\citep{langberg2010universal, feldman2011unified, feldman2013turning} to reduce the size of the replay buffer.
Using a replay buffer with bounded complexity to calculate the exploration bonus is crucial for the correctness of their algorithm. 
On the other hand, such a step is complicated in nature and could be hard to implement in practice. 
Therefore, to make the algorithm practical, it is much more desirable to use simpler dimensionality reduction techniques (like uniform sampling) without sacrificing the theoretical guarantee. 
Second, as mentioned in~\citet{foster2018practical}, showing examples with a small eluder dimension beyond linearly parameterized functions is challenging. 
In addition, taking the worst-case over all histories, as in the definition of the eluder dimension, is usually overly pessimistic in practice. 
In contextual bandits, it is known that provable efficiency can be established by assuming distributional conditions on the problem.
For example,~\citet{foster2018practical} establishes regret bound for an optimism-based contextual bandits algorithm by assuming bounded surprise bound. 
It is natural to ask whether similar conditions can be used to establish provable efficiencies of RL algorithms. 

Recently,~\citet{foster2020instance} established instance-dependent regret bounds for contextual bandits and reinforcement learning problems by assuming a bounded disagreement coefficient, which is a distribution-dependent assumption. 
\citet{foster2020instance} show that the disagreement coefficient is always upper bounded by the eluder dimension of the function class. 
The RL algorithm in~\citet{foster2020instance}, which is also based on Least Squares Value Iteration (LSVI) and the principle of ``optimism in the face of uncertainty'', has two drawbacks.
First, their algorithm achieves provable guarantees only in the block MDP setting which might not be realistic in practice. 
Second, when calculating the exploration bonus, their algorithm uses the {\em star hull} to reduce the complexity of the replay buffer, which is also complicated in nature and therefore difficult to implement in practice.

In this paper, we develop a novel provably efficient RL algorithm with general function approximation.
Similar to previous algorithms~\citep{wang2020reinforcement, foster2020instance}, our algorithm is an optimistic version of LSVI. 
Compared to previous ones, our algorithm has the following advantages:
\begin{itemize}
\item The regret bound of our algorithm is based on a variant of surprise bound proposed in~\citep{foster2018practical}, which is a distribution-dependent quantity and could therefore be smaller than the eluder dimension which considers the worst-case over all histories. Moreover, our theory does not rely on the block MDP assumption.
Furthermore, the surprise bound can be upper bounded in the tabular setting, the linear setting and the high dimensional sparse linear setting, which implies our algorithm achieves reasonable regret bound in all these three settings. 

\item The dimensionality reduction technique for reducing the complexity of the replay buffer is based on uniform sampling.
This is much simpler than the sensitivity sampling framework in~\citet{wang2020reinforcement} and the method based on star hull in~\citet{foster2020instance}.
\item Our algorithm requires solving only $O(H \log K)$ empirical risk minimization (ERM) problems, while previous algorithms~\citep{wang2020reinforcement, foster2020instance} require solving $\Omega(HK)$ ERM problems. 
\end{itemize}

\subsection{Related work}
\paragraph{Tabular reinforcement learning.}
Tabular RL is well studied in the context of sample complexity and regret bound in numerous literature  \citep{kearns2002near,kakade2003sample,strehl2006pac,strehl2009reinforcement,jaksch2010near,azar2013minimax,lattimore2014near,dann2015sample,agrawal2017posterior,azar2017minimax,jin2018q,dann2019policy,zanette2019tighter,zhang2020almost,wang2020long,yang2021q}. In particular, for episodic MDP without further assumptions, the best regret bound is $\widetilde{O}(\sqrt{H^2SAT})$ for both model-based \citep{azar2017minimax} and model-free \citep{zhang2020almost} algorithms, which matches the lower bound $\Omega\left(\sqrt{H^2SAT}\right)$ proved by \citet{jin2018q}. Recently, \citet{yang2021q} propose an RL algorithm with a regret bound of $O\left(\frac{SA\text{poly}(H)}{\Delta_{\min}}\log(SAT)\right)$ assuming the existence of a positive sub-optimality gap. However, all algorithms mentioned above cannot be applied to RL problems with huge or infinite state spaces due to the polynomial dependence on $\sqrt{S}$ in the regret bound. Therefore, in this paper, we assume the value function lies in a function class with bounded complexity and design a provably efficient algorithm whose regret bound depends polynomially on the complexity of the function class instead of the size of the state space.

\paragraph{Bandits.}
There is also rich literature studying stochastic (contextual) bandits, which can be viewed as a special case of MDP without state transitions \citep{auer2002using,dani2008stochastic,li2010contextual,rusmevichientong2010linearly,chu2011contextual,abbasi2011improved,foster2018practical,foster2020instance,li2019nearly}. In particular, \citet{foster2018practical} study contextual bandit problems with general value function approximation, and prove their algorithms could achieve a regret bound depending polynomially on the surprise bound and the implicit exploration coefficient (IEC). In this paper, we study RL with general value function approximation, and prove that the regret bound of our algorithm also depends on the (slightly modified) surprise bound as well as the log-covering numbers. However, we note that the RL setting is much more complicated than the contextual bandits setting since there is no state transition in bandit problems.

\paragraph{Reinforcement learning with function approximation.}
In the setting of linear function approximation, there has been great interest recently in the theoretical analysis of the sample complexity of RL algorithms~\citep{yang2019sample,yang2020reinforcement,jin2020provably,cai2020provably,du2019provably,du2020agnostic,wang2019optimism,zanette2020learning,zhou2021nearly}.
Compared to linear function approximation, however, many current provably efficient algorithms for general value function approximation are relatively impractical. For example, algorithms in \citet{jiang2017contextual,sun2019model,dong2020root} achieve regret bound in terms of the witness rank or the Bellman rank, but they are not computationally efficient.
  \citet{foster2020instance} devise \textsc{RegRL} algorithm which is both computationally and statistically efficient. However, it requires the block MDP assumption which greatly alleviates the difficulty of (infinitely) huge state space and might not be realistic in practice.
\citet{ayoub2020model}  propose a model-based algorithm and \citet{wang2020reinforcement} propose a model-free algorithm for general value function approximation, and the regret bound of both algorithms depend on the eluder dimension. \citet{kong2021online} propose an efficient algorithm both computationally and statistically for general value function approximation, of which the regret bound also depends on the eluder dimension. However, the eluder dimension considers the worst-case over all histories and is thus often overly pessimistic. 
 Instead, the regret bound of our algorithm depends polynomially on the surprise bound which is a distribution-dependent quantity and thus could be smaller than the eluder dimension for practical scenarios. 
 
\section{Preliminaries}
\label{section_preliminaries}

 In this paper, we study episodic \emph{Markov Decision Process} (MDP) $\M = (\S, \A, H, P, r, \mu)$, where $\S$ is the state space, $\A$ is the finite action space, $H \in \mathbb{N}_+$ is the planning  horizon, $P: \S\times\A\ \to \Delta(\S)$ is the transition kernel which maps a state-action pair to a distribution over the state space, $r: \S \times \A \to [0,1]$ is the  reward function and $\mu \in \Delta(\S)$ is the initial state distribution~\footnote{Our analysis can be naturally extended to the time-inhomogeneous settings where the reward function and the transition kernel are different for each $h \in [H]$.}.

 A (stochastic) policy 
 \begin{align*}
     \pi = \{\pi_h \}_{h=1}^H : \S \times [H] \to \Delta(\A)
 \end{align*} maps any state $s$ to a distribution over the action space at each step $h$, where we use $[N]$ to denote the set $\{1,2,\ldots,N\}$ for any positive integer $N$. A trajectory
 \begin{align*}
     (s_1, a_1, r_1), (s_2, a_2, r_2), \ldots, (s_H, a_H, r_H)
 \end{align*} is induced by a policy $\pi$ if $s_1 \sim \mu$, $a_h \sim \pi_h(s_h), r_h = r(s_h, a_h), \forall h \in [H]$ and $s_{h+1} \sim P(s_h, a_h), \forall h \in [H-1]$. Furthermore, a policy $\pi = \{ \pi_h \}_{h=1}^H$ is deterministic if for each step $h \in [H]$, $\pi_h: \S \to \A$ maps a state to only one action.

 For any policy $\pi$, the expected cumulative reward starting from state $s$ at step $h$ is defined as the value function
\begin{align*}
    V_h^{\pi}(s) = \E_\pi\left[ \sum_{h'=h}^H r_{h'}  | s_h = s \right],
\end{align*} where we use superscript $\pi$  to denote that the trajectory is induced by $\pi$.
Similarly, the expected cumulative reward starting from state-action pair $(s,a)$ at step $h$ is defined as the $Q$-function
\begin{align*}
Q_h^{\pi}(s, a) = \E_\pi[ \sum_{h'=h}^H r_{h'}  | s_h = s, a_h = a ].
\end{align*}
Let $\pi^*$  denote the optimal policy which maximizes $\E_{s_1 \sim \mu } [V_1^{\pi}(s_1)]$. Also, let $V_h^*(s) = V_h^{\pi^*}(s)$ and $Q_h^*(s,a) = Q_h^{\pi^*}(s,a)$.

 The agent interacts with the environment for $K$ episodes. At the beginning of each episode $k \in [K]$, the agent specifies a policy $\pi^k$ based on previous trajectories and interacts with the environment using $\pi^k$ for $H$ steps. We assume the agent knows the number of episodes $K$, and we define $T = KH$ to be the total number of steps that the agent interacts with the environment. The \emph{regret} of an algorithm after $K$ episodes is defined as 
\begin{align*}
    \Reg(K) = \sum_{k=1}^K \left(V_1^*(s_1^k) - V_1^{\pi^k}(s_1^k)\right),
\end{align*}
which compares the accumulated rewards between the agent's policy and the optimal policy. The goal of the agent is to minimize the regret. In this paper, we consider the typical regime that $H$ is fixed while $K$ grows to infinity.

\paragraph{Width function and norms.} For notation convenience, we define the width function for any function class $\F \subseteq \{ f : \S\times\A\to\mathbb{R} \}$ and several norms for any function $f:\S\times\A\to\mathbb{R}$. 
The width function is defined as $$w(\F,s,a) = \max_{f,f'\in\F} \left(f(s,a)-f'(s,a)\right),$$ $\forall (s,a) \in \S\times\A$.
For any dataset $\Z  \subseteq \S\times\A$ and $\D  \subseteq \S\times\A\times\mathbb{R}$,  define $\Z$-norm $$\Vert f \Vert_{\Z} = \sqrt{\sum_{(s,a)\in\Z} f^2(s,a)},$$ $\D$-norm $$\Vert f \Vert_{\D} = \sqrt{\sum_{(s,a,r) \in \D} (f(s,a)-r)^2},$$ and infinite norm $$\Vert f\Vert_\infty = \max_{(s,a)\in\S\times\A} |f(s,a)|$$ respectively. In addition, define $\Vert v\Vert_\infty = \max_{s\in\S} |v(s)|$ for any  $v:\S\to\mathbb{R}$.

\paragraph{Additional notations for algorithms.} For any finite multiset $\mathcal{X}$, let $\texttt{Unif}(\mathcal{X})$ denote the uniform distribution over $\mathcal{X}$ and $\text{Card}_{\text{d}}(\mathcal{X})$ denote the number of distinct elements in $\mathcal{X}$. For any $x \in \mathbb{R}_+$, let $\lfloor x \rfloor$ denote the integer part of $x$ and define $\lceil x \rceil = \lfloor x \rfloor + 1$ if $x$ is not an integer and  otherwise $\lceil x \rceil = x$. We use the standard $O(\cdot), \Omega(\cdot)$ notations to hide constants and use $\Tilde{O}(\cdot), \Tilde{\Omega}(\cdot)$ to suppress log factors. Also, we use $x \lesssim y$ to denote that there exists a constant $c > 0$ s.t. $x \leq cy$, and use $x \gtrsim y$ if $y \lesssim x$.

\section{Algorithm}
\label{sec_alg}

In this section, we first introduce the assumptions for the algorithm and then present our main algorithm (\Cref{alg_main}). The theoretical guarantee of our algorithm is presented in \Cref{section_theoretical_guarantee}.

\subsection{Assumptions}
\label{subsec_assump}

 Assume our algorithm (\Cref{alg_main}) receives a function class $\F \subseteq \{f: \S\times\A \to [0,H+1] \}$ as part of the input. Since the complexity of $\F$ determines the efficiency of the algorithm, it is natural and necessary to require bounded complexities of the function class under appropriate measures.
 We make the following assumptions on the function class $\F$.

\begin{assumption}[Bellman-completeness]
\label{assump_bellman_backup}
    For any function $V: \S \to [0,H]$, there exists a function $f_V \in \F$, s.t.
    \begin{align*}
        f_V(\cdot,\cdot) = r(\cdot, \cdot) + \sum_{s' \in \S} P(s'|\cdot,\cdot)V(s').
    \end{align*}
\end{assumption}

\Cref{assump_bellman_backup} indicates the closedness under Bellman equations. 
This is a general assumption that summarizes many previous assumptions in special settings and is commonly adopted in previous literature for general value function approximation \citep{wang2020reinforcement,foster2020instance,kong2021online}. For tabular RL, $\F$ can be chosen as the set of all functions mapping from $\S\times\A$ to $[0,H+1]$. 
In the linear MDP setting~\citep{bradtke1996linear,jin2020provably,yang2019sample,yang2020reinforcement,wang2019optimism} where  the transition kernel and the reward function are both linear in a feature map $\phi: \S\times\A\to\mathbb{R}^d$, $\F$ can be the set of all linear functions with respect to $\phi$. In sparse high-dimensional linear MDP settings where the transition kernel and the reward function are both $s$-sparse linear functions in $\phi$, $\F$ can be the set of all $(2s)$-sparse linear functions with respect to $\phi$. Furthermore, \Cref{assump_bellman_backup} approximately holds in practice as long as $\F$ is rich enough (e.g., deep neural networks) and we show in \Cref{sec_misspecification} that  our algorithm is robust to model misspecification.

\begin{assumption}[Bounded covering number]
\label{assump_cover_number}
    Given any $\varepsilon > 0$, there exist covering sets $\C(\F, \varepsilon) \subseteq \F$ and $\C(\S\times\A, \varepsilon) \subseteq \S\times\A$ with bounded size $\N(\F, \varepsilon)$ and $\N(\S\times\A, \varepsilon)$ respectively, where 
    \begin{itemize}
        \item $\forall f \in \F$, $\exists f' \in \C(\F, \varepsilon),$ s.t. $\Vert f-f' \Vert_\infty \leq \varepsilon$.
        \item $\forall (s,a) \in \S\times\A$, 
        $\exists (s',a') \in \C(\S\times\A, \varepsilon),$ s.t. $\max_{f\in\F} |f(s,a)-f(s',a')| \leq \varepsilon$.
    \end{itemize}

\end{assumption}

\Cref{assump_cover_number} requires bounded covering numbers $\N(\cdot, \varepsilon)$ for both $\F$ and $\S\times\A$, and the regret bound of our algorithm depends only logarithmically on the covering numbers (\Cref{thm_main}). In the tabular RL setting, $\ln \N(\F,\varepsilon) = \widetilde{O}(|\S||\A|)$ and $\ln \N(\S\times\A,\varepsilon) = O(\ln(|\S||\A|)) $. In $d$-dimensional linear MDP settings, $\ln \N(\F,\varepsilon) = \widetilde{O}(d)$ and $\ln \N(\S\times\A,\varepsilon) = \widetilde{O}(d)$. In $s$-sparse  high-dimensional linear MDP settings, $\ln \N(\F,\varepsilon) = \widetilde{O}(s)$. If we further assume that $\phi(s,a)$ is $s$-sparse for all $(s,a)\in\S\times\A$, then $\ln \N(\S\times\A,\varepsilon) = \widetilde{O}(s)$. 

\paragraph{Surprise bound.} Another important complexity measure in this paper is \emph{surprise bound}, which was first introduced in \citet{foster2018practical} to characterize the complexity of the function class in the  contextual bandit setting.

\begin{definition}[Surprise bound]
\label{defn_vsb}
    The surprise bound is the smallest positive constant $L_1$ s.t.
    \begin{align*}
        (f(s,a) - f'(s,a))^2 
        \leq L_1 \E_{s' \sim \D_h(\pi) } \E_{a' \sim \pi_h(s')} \left[  (f(s', a') - f'(s', a'))^2\right]
    \end{align*}
    for all $f, f' \in \F, s \in \S, a \in \A, h \in [H]$ and any policy $\pi$, where $\D_h(\pi)$ is the distribution of $s_h$ when the policy is $\pi$.
\end{definition}

Intuitively, the surprise bound is small if all pairs of functions with a small expected squared error with respect to any policy, do not encounter a much larger squared error on any state-action pair. The following proposition gives upper bounds of the surprise bound for linear and sparse linear settings (see \Cref{sec_proof_of_surprise_bound_prop} for the proof).

\begin{proposition}
\label{prop_surprise_bound_special_case}
In the (sparse) linear MDP setting with a fixed feature map $\phi: \S\times\A\to \mathbb{R}^d$, consider the function class $\F = \{(s,a) \mapsto w^{\T}\phi(s,a) | w \in \W \}$ for some $\W \subseteq \mathbb{R}^d$.
\begin{itemize}
    \item If $\Vert \phi(s,a) \Vert_2 \leq 1, \forall (s,a)\in\S\times\A$ and $\Vert w \Vert_2 \leq 2H\sqrt{d}, \forall w\in\W$, then
    \begin{align*}
        L_1 \! \leq \! \! \! \sup_{\pi, h\in[H]}  \frac{1}{ \lambda_{\min}\left(\E_{s \sim \D_h(\pi), a \sim \pi_h(s) } \left[  \phi(s, a)\phi(s, a)^\T\right]\right)}.
    \end{align*}
    \item  If $\Vert \phi(s,a) \Vert_\infty \leq 1, \forall (s,a)\in\S\times\A$ and $\Vert w \Vert_\infty \leq 2H\sqrt{d}, \Vert w \Vert_0 \leq 2s, \forall w\in\W$, then
    \begin{align*}
        L_1  \! \! \leq \! \! \! \sup_{\pi, h \in[H]} \frac{4s}{ \psi_{\min}\left(\E_{s \sim \D_h(\pi), a \sim \pi_h(s) } \left[  \phi(s, a)\phi(s, a)^\T\right]\right)},
    \end{align*}
    where $\psi_{\min}(A) = \min_{w \neq 0: \Vert w\Vert_0 \leq 4s} w^{\T}Aw/ w^{\T}w$ is the minimum restricted eigenvalue for $(4s)$-sparse predictors~\citep{raskutti2010restricted}.
\end{itemize}
\end{proposition}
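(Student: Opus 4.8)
# Proof Proposal for Proposition (Surprise bound in the (sparse) linear MDP setting)

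\textbf{Overview of the strategy.} The plan is to reduce both claims to a statement about quadratic forms in the weight difference $u = w - w'$ and the Gram matrix $\Sigma_h(\pi) = \E_{s\sim\D_h(\pi),\,a\sim\pi_h(s)}[\phi(s,a)\phi(s,a)^\T]$, and then to invoke the defining inequality of $L_1$ in \Cref{defn_vsb} by exhibiting, for any fixed $(s,a,h,\pi)$, a large enough multiplicative constant that makes the surprise inequality hold. Concretely, for $f(s,a) = w^\T\phi(s,a)$ and $f'(s,a) = (w')^\T\phi(s,a)$ with $u = w - w'$, the left side of the surprise inequality is $(u^\T\phi(s,a))^2$ and the right side (before the $L_1$ factor) is exactly $u^\T \Sigma_h(\pi) u$. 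So it suffices to show $(u^\T\phi(s,a))^2 \le L_1 \cdot u^\T\Sigma_h(\pi)u$ for the claimed value of $L_1$, uniformly over admissible $u$, $(s,a)$, $h$, $\pi$.

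\textbf{Dense case.} First I would bound the numerator: by Cauchy--Schwarz and $\Vert\phi(s,a)\Vert_2 \le 1$ we get $(u^\T\phi(s,a))^2 \le \Vert u\Vert_2^2 \Vert\phi(s,a)\Vert_2^2 \le \Vert u\Vert_2^2$. For the denominator, $u^\T\Sigma_h(\pi)u \ge \lambda_{\min}(\Sigma_h(\pi))\Vert u\Vert_2^2$ by the variational characterization of the minimum eigenvalue (note $\Sigma_h(\pi)$ is symmetric PSD as an average of rank-one PSD matrices). Dividing, $(u^\T\phi(s,a))^2 / (u^\T\Sigma_h(\pi)u) \le 1/\lambda_{\min}(\Sigma_h(\pi))$ whenever $u \ne 0$ (the inequality is trivial when $u = 0$). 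Taking the supremum over $h \in [H]$ and over policies $\pi$ gives the stated bound on $L_1$. Note the hypothesis $\Vert w\Vert_2 \le 2H\sqrt{d}$ is not actually needed for this direction — it only ensures $\F$ maps into a bounded range consistent with \Cref{assump_bellman_backup}; I would remark on this rather than use it.

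\textbf{Sparse case.} Here $u = w - w'$ satisfies $\Vert u\Vert_0 \le \Vert w\Vert_0 + \Vert w'\Vert_0 \le 4s$. For the numerator I would use $\Vert\phi(s,a)\Vert_\infty \le 1$ and the sparsity of $u$: by Hölder, $|u^\T\phi(s,a)| \le \Vert u\Vert_1 \Vert\phi(s,a)\Vert_\infty \le \Vert u\Vert_1 \le \sqrt{\Vert u\Vert_0}\,\Vert u\Vert_2 \le 2\sqrt{s}\,\Vert u\Vert_2$, so $(u^\T\phi(s,a))^2 \le 4s\Vert u\Vert_2^2$. For the denominator, since $u$ is $(4s)$-sparse and nonzero, the definition of the restricted minimum eigenvalue gives $u^\T\Sigma_h(\pi)u \ge \psi_{\min}(\Sigma_h(\pi))\Vert u\Vert_2^2$. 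Dividing yields $(u^\T\phi(s,a))^2/(u^\T\Sigma_h(\pi)u) \le 4s/\psi_{\min}(\Sigma_h(\pi))$, and taking the supremum over $h$ and $\pi$ finishes the proof.

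\textbf{Main obstacle.} The mathematical content is light; the only subtle points are bookkeeping ones. First, one must make sure the quantifier structure matches \Cref{defn_vsb}: $L_1$ must work simultaneously for \emph{all} $f,f',s,a,h,\pi$, which is why I take the supremum over $h$ and $\pi$ at the very end rather than fixing them early — the per-instance bound $1/\lambda_{\min}(\Sigma_h(\pi))$ (resp. $4s/\psi_{\min}(\Sigma_h(\pi))$) must dominate uniformly. Second, one must handle the degenerate cases cleanly: if $\Sigma_h(\pi)$ is singular in the directions spanned by admissible $u$ then the right-hand supremum is $+\infty$ and the claim is vacuous, while if $u = 0$ the surprise inequality holds trivially; I would dispatch these at the start. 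Third, in the sparse case one should double-check that $\psi_{\min}$ is defined over $(4s)$-sparse vectors (matching $\Vert u\Vert_0 \le 4s$), which is exactly how it is stated. No step requires a genuinely hard estimate.
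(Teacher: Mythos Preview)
Your proposal is correct and follows essentially the same approach as the paper's proof: set $u=w-w'$, bound the numerator $(u^\T\phi(s,a))^2$ by $\Vert u\Vert_2^2$ via Cauchy--Schwarz (or by $4s\Vert u\Vert_2^2$ via H\"older and sparsity), bound the denominator $u^\T\Sigma_h(\pi)u$ below by $\lambda_{\min}$ (resp.\ $\psi_{\min}$) times $\Vert u\Vert_2^2$, and take the supremum over $h,\pi$. Your write-up is slightly more explicit than the paper's about the H\"older step and the degenerate cases, and you are right that the bound on $\Vert w\Vert_2$ plays no role in the inequality itself.
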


\subsection{Algorithm}
\label{subsec_alg}

In this section, we present our main algorithm (\Cref{alg_main}) and discuss in detail several important components of our algorithm.

\begin{algorithm}[htbp] 
\caption{Optimistic LSVI with doubling epoch schedule} 
\label{alg_main} 
\begin{algorithmic}[1] 
 \STATE \textbf{Input:} number of epochs $M$, number of warm-start \newline epochs $M_0$, failure probability $\delta \in (0, 1)$

\FOR{episode $k = 1, 2, \ldots, \tau_{M_0}-1$} 
\STATE Receive initial state $s_1^k \sim \mu$
\FOR{$h = 1, 2, \ldots, H$}
\STATE Take action $a^k_h \sim \texttt{Unif}(\A)$, observe $s^k_{h+1} \sim P(\cdot | s_h^k, a_h^k)$ and receive $r_h^k = r(s_h^k, a_h^k)$
\ENDFOR
\ENDFOR
\FOR{epoch $m = M_0, M_0+1, \ldots, M$}
\STATE $Q^{m}_{H+1}(\cdot , \cdot) \gets 0$ and $V_{H+1}^m(\cdot) \gets 0$
\STATE $\Z^m \gets \left\{ (s_h^k, a_h^k) \right\}_{(h,k)\in[H]\times[\tau_m-1]}$

\FOR {$h = H, H-1, \ldots, 1$}  
    \STATE $\D^m_h \gets \left\{ \left(s_{h'}^k, a_{h'}^k, r_{h'}^k +  V_{h+1}^{m}(s_{h'+1}^k)\right) \right\}$, $\forall (h',k) \in[H]\times[\tau_m]$
    
    \STATE $f_h^m \gets \arg \min_{f \in \F} \Vert f \Vert_{\D_h^m}^2$
    \STATE $b_h^m(\cdot,\cdot) \gets \texttt{Bonus}(\F, f_h^m, \Z^m, \delta)$ (\Cref{alg_bonus})
    
    \STATE $Q_h^m(\cdot,\cdot) \gets \min\left\{ f_h^m(\cdot,\cdot) + b_h^m(\cdot,\cdot), H \right\}$
    \STATE $V_h^m(\cdot) \gets \max_{a \in \A} Q_h^m(\cdot, a)$
    
    \STATE $\pi_h^m(\cdot) \gets \arg \max_{a\in \A} Q_h^m(\cdot, a)$
\ENDFOR

\FOR{episode $k = \tau_m, \tau_m+1, \ldots, \tau_{m+1}-1$}
\STATE Receive initial state $s_1^k \sim \mu$
\FOR{$h = 1, 2, \ldots, H$}
\STATE Take action $a^k_h \gets \pi_h^m(s_h^k)$, observe $s^k_{h+1} \sim P(\cdot | s_h^k, a_h^k)$ and receive $r_h^k = r(s_h^k, a_h^k)$
\ENDFOR
\ENDFOR
\ENDFOR 
\end{algorithmic}
\end{algorithm}

\subsubsection{Doubling epoch schedule} 
Our algorithm consists of $M$ epochs where each epoch $m \in [M]$ starts at the beginning of episode $\tau_{m} = 2^{m-1}$ and consists of $T_m = 2^{m-1}$ episodes. Thus, the total number of episodes $K = 2^M-1$ and $M = O(\log K)$. At the beginning of epoch $m$, the algorithm fixes a policy $\pi^m = \{ \pi^m_h \}_{h=1}^{H}$ and the agent executes $\pi^m$ for all episodes $k \in [\tau_m, \tau_m + T_m - 1]$. The $M$ epochs can be divided into two phases.

\begin{itemize}
\item \textbf{Phase 1: Warm-up epochs.} 
For the first $(M_0-1)$ epochs, the agent plays a uniformly random policy. These warm-up epochs are designed to encourage exploration at the initial episodes.

\item \textbf{Phase 2: Optimistic LSVI.} Starting from epoch $M_0$, we use an optimistic version of Least Squares Value Iteration (LSVI) similar to \citet{jin2020provably,wang2019optimism,wang2020reinforcement,foster2020instance}.
At the beginning of each epoch $m \geq M_0$, we maintain all previous trajectories as a replay buffer, and find the best fit $f^m = \{f^m_h\}_{h=1}^{H} \in \F^H$ with respect to the replay buffer in the sense of mean squared error (MSE), i.e., 
\begin{align*}
    f_h^m \gets \arg \min_{f \in \F} \Vert f \Vert_{\D_h^m}^2
\end{align*}
where $\D_h^m$ is the replay buffer (see definition in \Cref{alg_main}).
To avoid overfitting and encourage exploration, we design a bonus function $b_h^m(\cdot,\cdot)$ which we will discuss later in \Cref{sec_uniform_sampling}, and approximate the optimal $Q$ function $Q^*_h(\cdot, \cdot)$ by
\begin{align*}
    Q_h^m(\cdot,\cdot) = \min\left\{ f_h^m(\cdot,\cdot) + b_h^m(\cdot,\cdot), H \right\}.
\end{align*} 
Our design of the bonus function ensures that $Q_h^m$ is an optimistic estimator of $Q_h^*$ with high probability (\Cref{lem_optimistic_Q}). Finally, for each episode $k \in [\tau_m, \tau_{m+1}-1]$ in epoch $m$, the agent plays the greedy policy with respect to $Q_h^m$ and collect the trajectory in episode $k$.

\end{itemize}

The advantages of the doubling epoch schedule are two folded:

\begin{itemize}
\item \textbf{Computationally efficient.} Since our algorithm only conducts large amount of computation at the beginning of each epoch (computing $f^m_h$ by empirical risk minimization and $b^m_h$ by the width function as in \Cref{sec_uniform_sampling}, which can often be solved efficiently by appropriate optimization methods or assuming access to appropriate regression oracles~\citep{wang2020reinforcement,foster2018practical}) and there are only $O(\log K)$ epochs, our algorithm is much more computationally efficient than previous methods~\citep{wang2020reinforcement, foster2020instance} which require to solve $\Omega(HK)$ equivalent optimization problems. 

Recently, \citet{kong2021online} proposes an online sub-sampling technique which improves the computational complexity of \citet{wang2020reinforcement}. However, our algorithm is still much more computationally efficient than \citet{kong2021online}.
The algorithm of \citet{kong2021online} adopts sensitivity sampling, which requires computing sensitivities for each state action pair $(s_h^k, a_h^k)$. Since the calculation of sensitivity requires solving a regression oracle for $\Omega(\log (TH))$ times (see Section 4.4. in \citet{kong2021online}), and there are $T = KH$ such state-action pairs, their algorithm needs to solve $\Omega(KH \log (TH))$ regression oracles to calculate sensitivities and subsample the dataset.
While in our algorithm, we use uniform sampling to avoid the complex and time-consuming sensitivity calculation and thus does not need any oracle to perform the subsampling procedure.

\item \textbf{Stabilizing adjacent trajectories.} The doubling epoch schedule together with the warm-up epochs stabilizes the adjacent trajectories by ensuring that at the beginning of each epoch, at least half of the historical trajectories in the replay buffer are induced by the same policy. This property enables us to adopt uniform sampling (\Cref{alg_sampling}) to reduce the complexity of the replay buffer. 

\end{itemize}

\subsubsection{Uniform sampling}
\label{sec_uniform_sampling}

An important technical novelty of our algorithm is the design of the bonus function via uniform sampling. To ensure optimism of our estimator $Q_h^m$, we can choose $b_h^m$ as the upper bound of the difference between $Q_h^*$ and $f_h^m$. If we are able to obtain a confidence region $\F_h^m$ which contains both $f_h^m$ and $Q_h^*$, it suffices to define the bonus function as the width function of $\F_h^m$.  

A naive way to choose the confidence region is 
    $\F_h^m = \left\{ f \in \F \left| \Vert f - f_h^m \Vert_{\Z^m}^2 \leq \beta \right. \right\}$
with a carefully selected $\beta$. However, since the confidence region depends on the whole replay buffer with size at most $T$, the confidence region and thus the bonus function would suffer extremely high complexity. This implies that $\beta$ needs to be set extremely large to ensure the accuracy of the confidence region. To obtain a bonus function with low complexity, we reduce the complexity of the replay buffer by uniform sampling, which is formally stated in \Cref{alg_sampling}.

\begin{algorithm}
\caption{$\texttt{Uniform-Sampling}(\F, \Z, \lambda, \varepsilon, \delta)$}
\label{alg_sampling}
\begin{algorithmic}[1]
\STATE \textbf{Input:} function class $\F$, dataset $\Z$, parameters $\lambda, \varepsilon > 0$ and failure probability $\delta \in (0, 1)$

\STATE Set $\varepsilon_0 \gets \vp $
\STATE Set $p^{-1} \gets \max\left\{ 1, \left\lfloor \frac{1}{\prob} \right\rfloor \right\}$
\STATE Initialize $\Z' \gets \{\}$
\FOR{$z \in \Z$}
    \STATE Add $1/p$ copies of $z$ to $\Z'$ with probability $p$
\ENDFOR
\STATE \textbf{Output:} $\Z'$
\end{algorithmic}
\end{algorithm}

\paragraph{Comparison to previous methods.}
Actually, the algorithms in~\citet{wang2020reinforcement,foster2020instance} 
also suffer the high complexity of the bonus function and address the issue by sensitivity sampling and star hull respectively. However,  sensitivity sampling requires estimating the sensitivity of each state-action pair, which is time-consuming; the star hull is complicated in nature and thus is hard to implement in practice. In contrast, our uniform sampling is conceptually simple and easy to implement. 
 Note that there is only one single parameter $p$ to be determined in \Cref{alg_sampling}. When the surprise bound $L_1$ is known in advance, we can directly calculate the value of $p$. When $L_1$ is unknown, we can perform a grid-search in a log-space of $L_1$. Specifically, we can set a small value $L_{\min}$ as the lower bound of $L_1$ and a large value $L_{\max}$ as the upper bound, and perform \Cref{alg_main} for $L_1 \in \mathcal{L} \triangleq \{L_{\min}, 2L_{\min}, 2^2L_{\min}, \ldots, L_{\max} \}$. Then we can pick the policy with the best performance under different choices of $L_1$.
 
 \Cref{thm_main} shows that the regret of our main algorithm (\Cref{alg_main}) is $\Tilde{O}(\sqrt{T})$ in $T$ dependence. We also emphasize that the above grid-search procedure won't result in higher total regret, since one can first try each possible $L_1 \in \mathcal{L}$ for $O(\sqrt{T})$ times, and then exploit the best $L_1$ for the remaining $O(T - \sqrt{T}\log(L_{\max}/L_{\min})) = O(T)$ steps. The resulting total regret is still $\Tilde{O}(\sqrt{T})$.

\begin{algorithm}[htbp]
\caption{$\texttt{Bonus}(\F, \bar{f}, \Z, \delta)$}
\label{alg_bonus}
\begin{algorithmic}[1]
\STATE \textbf{Input:} function class $\F$, reference function $\bar{f}$, dataset $\Z$ and failure probability $\delta \in (0, 1)$

\STATE $\Z' \gets 
\texttt{Uniform-Sampling}(\F, \Z, \frac{\delta}{(16T)^2}, \frac{1}{2}, \frac{\delta}{16T} )$ (\Cref{alg_sampling})

\IF {$|\Z'| > 64T^2/\delta$  \OR $\text{Card}_{\text{d}}(\Z') \geq 9216L_1\cdot\ln(64T\mathcal{N}(\F, \delta/(9216T^2))/\delta)$}
\STATE $\Z' \gets \{\}$ 
\ENDIF
\STATE Let $\hat{f} \in \C(\F, 1/(8\sqrt{64T^2/\delta}))$ such that $\Vert \hat{f} - \bar{f} \Vert_{\infty} \leq 1/(8\sqrt{64T^2/\delta})$

\STATE $\widehat{\Z} \gets \{\}$
\FOR{$z \in \Z'$}
    \STATE Let $\hat{z} \in \C(\S\times\A, 1/(8\sqrt{64T^2/\delta}))$ such that  $\sup_{f\in\F} |f(z)-f(\hat{z})| \leq 1/(8\sqrt{64T^2/\delta})$
    \STATE $\widehat{\Z} \gets \widehat{\Z} \cup \{\hat{z}\}$
\ENDFOR

\STATE $\beta \triangleq \beta(\mathcal{F},\delta) \gets c' \cdot L_1H^2\ln^3(T/\delta)\ln(\N(\F,\delta/T^3))
 \times \ln(\N(\S\times\A,\delta/T^2))$ for some constant $c' > 0$
 
 \STATE $\widehat{\F} \gets \left\{ f \in \mathcal{F} \ | \   \Vert f-\hat{f}  \Vert_{\widehat{\Z}}^{2} \leq 3\beta + 2 \right\}$
\STATE \textbf{Output:} $\hat w(\cdot, \cdot) \gets w(\widehat{\F}, \cdot, \cdot)$

\end{algorithmic}
\end{algorithm}

\paragraph{Design of the bonus function via uniform sampling.} 
Now we are able to design a bonus function with low complexity as in \Cref{alg_bonus} via uniform sampling. After obtaining the reduced dataset $\Z'$, we round each data in $\Z'$ and the reference function $\bar{f}$ to their nearest neighbors in covering sets. The confidence region and the bonus function is then defined by the rounded reference function and the rounded dataset. Note that in \Cref{alg_bonus}, the rounding operation does not need to be performed explicitly since all the data are stored in computers with bounded precision, and thus all the data will be implicitly rounded. For the choice of $\beta$, we can use the same grid-search method of $L_1$ since $\beta$ is also determined by $L_1$.

\paragraph{Efficient computation of the bonus function.} The computation of the bonus function is equivalent to an optimization problem of the following form:
\begin{align*}
    &\max_{f_1, f_2 \in \mathcal{F}} f_1(s,a) - f_2(s,a) 
    \\ 
    & \quad \text{s.t.} \quad \|f_1 - f_2 \|_{\Z} \leq \varepsilon.
\end{align*}

This problem can be solved efficiently by either assuming access to an optimization oracle, or assuming access to only a regression oracle (which is a milder assumption than optimization oracles) as mentioned in Section 4.4 of \citet{kong2021online}.

\section{Theoretical results}
\label{section_theoretical_guarantee}

In this section, we formally present our main theorem of the regret bound and defer the proof to \Cref{appendix_proof_alg}.

\begin{theorem}[Main theorem]
\label{thm_main}
    Under \Cref{assump_bellman_backup}, \ref{assump_cover_number}, let $ M_0 = \left\lceil \ln\left( 16L_1^2\ln\frac{128T\N(\F, \delta/(9216T^2))^2} {\delta}\right)\right\rceil$ where the number of total steps $T = H\cdot (2^M-1)$ is sufficiently large. With probability at least $1-\delta$, the regret of $\Cref{alg_main}$ is at most 
    \begin{align*}
        O(\iota\cdot H^{3/2} \cdot \sqrt{T}),
    \end{align*}
    where 
    $
        \iota = L_1 \cdot \ln^2(T/\delta)\cdot\max(\ln(\N(\F,\delta/T^3)),\ln(\N(\S\times\A,\delta/T^2))).
    $
\end{theorem}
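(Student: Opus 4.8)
The plan is to run the usual ``optimism $\Rightarrow$ regret is bounded by a sum of bonuses $\Rightarrow$ bound that sum'' program, but with two twists: the doubling schedule replaces the per‑episode elliptical/eluder potential argument by a geometric sum over only $O(\log K)$ epochs, and the bonus is controlled by the surprise bound rather than by an eluder‑dimension pigeonhole. Write $\mathcal{T}V(\cdot,\cdot) = r(\cdot,\cdot) + \sum_{s'}P(s'|\cdot,\cdot)V(s')$ for the (time‑homogeneous) Bellman backup, which lies in $\F$ by \Cref{assump_bellman_backup}, and let $\widehat{\F}_h^m$ denote the confidence region built inside \Cref{alg_bonus} at epoch $m$, level $h$, so that $b_h^m = w(\widehat{\F}_h^m,\cdot,\cdot)$ and $f_h^m,\mathcal{T}V_{h+1}^m$ are the objects we must place inside it.

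First I would establish the key guarantee of \Cref{alg_sampling}: with the chosen $p$, with high probability, simultaneously for all $g,g'$ in the $\varepsilon_0$‑cover $\C(\F,\varepsilon_0)$ the subsampled norm $\|g-g'\|_{\Z'^m}^2$ stays within a constant factor of $\|g-g'\|_{\Z^m}^2$ up to an $o(1)$ additive term, and $\text{Card}_{\text{d}}(\Z'^m) = O(L_1\cdot\polylog)$. This is a Bernstein argument: by \Cref{defn_vsb}, for every $z\in\Z^m$ at level $h(z)$ and every difference $g-g'$ one has $(g(z)-g'(z))^2 \le L_1\,\E_{\D_{h(z)}(\pi),\pi_{h(z)}}[(g-g')^2]$ for any $\pi$; taking $\pi = \pi^{m-1}$ — the policy whose trajectories, thanks to the doubling plus warm‑up design, constitute at least half of $\Z^m$ — and a Hoeffding bound over the $\Theta(T_{m-1})$ level‑$h(z)$ samples of epoch $m-1$ shows that no single point of $\Z^m$ carries more than an $O(L_1/|\Z^m|)$ fraction of $\|g-g'\|_{\Z^m}^2$; hence the per‑coordinate variances sum to $O(L_1/(p|\Z^m|))\cdot\|g-g'\|^4$, and a union bound over $|\C(\F,\varepsilon_0)|^2$ pairs closes it for $p\gtrsim L_1\polylog/|\Z^m|$. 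The rounding to $\C(\F,\varepsilon_0)$ and $\C(\S\times\A,\varepsilon_0)$ perturbs norms only by $O(|\Z^m|\varepsilon_0^2)=o(1)$. Combining this with a standard least‑squares/martingale concentration under \Cref{assump_bellman_backup} — with a union bound over $\F$ and over the value functions $V_{h+1}^m$, whose description complexity is $O(L_1\polylog\cdot\ln\N(\F,\cdot)\cdot\ln\N(\S\times\A,\cdot))$ because each is determined by $f_{h+1}^m$ together with the $O(L_1\polylog)$ rounded points defining $\widehat{\F}_{h+1}^m$ — gives $\|f_h^m-\mathcal{T}V_{h+1}^m\|_{\Z^m}^2\lesssim\beta$; transferring through the sampling lemma yields $\|f_h^m-\mathcal{T}V_{h+1}^m\|_{\widehat{\Z}^m}^2\le 3\beta+2$, so $\mathcal{T}V_{h+1}^m\in\widehat{\F}_h^m$ (this is exactly what forces the stated form of $\beta$). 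Since also $f_h^m\in\widehat{\F}_h^m$, we get $|f_h^m(s,a)-\mathcal{T}V_{h+1}^m(s,a)|\le b_h^m(s,a)$ for all $(s,a)$, and a backward induction on $h$ using monotonicity of $\mathcal{T}$ and $V_{h+1}^m\ge V_{h+1}^*$ gives $Q_h^m\ge Q_h^*$ pointwise, i.e.\ \Cref{lem_optimistic_Q}.

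Optimism then gives $\Reg(K)\le\sum_{k}(V_1^{m(k)}(s_1^k)-V_1^{\pi^k}(s_1^k))\le H\tau_{M_0} + \sum_{m\ge M_0}\sum_{k\in\text{epoch }m}\sum_{h=1}^{H}\min\{2b_h^m(s_h^k,a_h^k),H\} + (\text{Azuma term})$, where the last term comes from the transition‑noise martingale and is $\widetilde O(H\sqrt T)$, and $H\tau_{M_0}=\widetilde O(HL_1^2)$; both are lower order once $T$ is large. The crux is bounding the bonus sum. Fix $m\ge M_0$ and $h$: for any $(s,a)$ and any $f,f'\in\widehat{\F}_h^m$, \Cref{defn_vsb} with $\pi=\pi^{m-1}$ gives $(f(s,a)-f'(s,a))^2\le L_1\,\E_{\D_h(\pi^{m-1}),\pi^{m-1}_h}[(f-f')^2]$; a Hoeffding bound over the $T_{m-1}$ level‑$h$ samples of epoch $m-1$ (uniform over $\C(\F,\varepsilon_0)$) bounds the right‑hand side by $\tfrac1{T_{m-1}}\|f-f'\|_{\Z^m}^2 + \tfrac{H^2\polylog}{T_{m-1}}$, and the sampling lemma applied to the defining inequality $\|f-\hat{f}_h^m\|_{\widehat{\Z}^m}^2\le 3\beta+2$ gives $\|f-f'\|_{\Z^m}^2=O(\beta)$. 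Hence $b_h^m(s,a)^2=w(\widehat{\F}_h^m,s,a)^2\lesssim L_1\beta/T_{m-1}$ uniformly, so $\sum_{k\in\text{epoch }m}\sum_h b_h^m(s_h^k,a_h^k)\le T_m H\sqrt{L_1\beta/T_{m-1}}\lesssim H\sqrt{L_1\beta T_m}$ (using $T_m=2T_{m-1}$; for the $O(\polylog)$ early epochs where $\sqrt{L_1\beta/T_{m-1}}>H$ we instead pay $\le H$ per step, contributing only $\widetilde O(H^2L_1^2)$). Summing the geometric series over $m\in[M_0,M]$ gives $\lesssim H\sqrt{L_1\beta K}=\sqrt H\sqrt{L_1\beta T}$, and since $\beta=\Theta(L_1H^2\ln^3(T/\delta)\ln\N(\F,\delta/T^3)\ln\N(\S\times\A,\delta/T^2))$ we have $\sqrt{L_1\beta}\le H\iota$, so this is $O(\iota H^{3/2}\sqrt T)$; adding the lower‑order terms and a final union bound over the $\widetilde O(H)$ epoch/level events finishes the proof.

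The main obstacle is the interaction between the uniform‑sampling lemma and the surprise bound. One must (i) show the subsampled buffer preserves $\F$‑norms with only $O(L_1\polylog)$ distinct points, which crucially relies on the doubling‑plus‑warm‑up schedule making some single policy $\pi^{m-1}$ dominate $\Z^m$ so that the surprise bound can cap each point's ``leverage'' at $O(L_1/|\Z^m|)$, and (ii) handle the apparent circularity whereby $\beta$ depends on $L_1$ and is used both to certify $\mathcal{T}V_{h+1}^m\in\widehat{\F}_h^m$ and, via the surprise bound against $\pi^{m-1}$, to bound $b_h^m$ by $\sqrt{L_1\beta/T_{m-1}}$. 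Pinning down the $H$‑ and log‑powers so that the $V_{h+1}^m$‑cover, the subsample size, the regression error, and the warm‑up length combine to exactly $\iota H^{3/2}\sqrt T$ is where the careful bookkeeping lives.
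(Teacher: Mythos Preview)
Your proposal is correct and follows essentially the same three–step architecture as the paper: (i) the Bernstein/Hoeffding argument for \Cref{alg_sampling}, using the surprise bound together with the fact that $\pi^{m-1}$ generates at least half of $\Z^m$ to cap each point's leverage at $O(L_1/|\Z^m|)$, matches \Cref{lem::uniform_sampling_preserve_norm} and \Cref{prop_stable_sampling}; (ii) the regression concentration plus union bound over the low–complexity $V_{h+1}^m$–cover is exactly \Cref{lem_single_step_error}–\Cref{lem_optimistic_Q}; and (iii) the bonus–sum control via the surprise bound against $\pi^{m-1}$ and a one–sided Hoeffding on the epoch–$(m\!-\!1)$ samples is \Cref{lem_bound_of_square_error}. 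The only noteworthy deviation is cosmetic: in the last step the paper applies Cauchy--Schwarz, $\bigl(\sum b_h^m\bigr)^2\le T\sum (b_h^m)^2$, and then sums the pointwise bound $(b_h^m)^2\lesssim L_1\beta/T_{m-1}$ over all $(m,k,h)$, picking up an extra $M=O(\log K)$ factor; you instead bound each epoch's contribution by $T_mH\sqrt{L_1\beta/T_{m-1}}\lesssim H\sqrt{L_1\beta T_m}$ directly and sum the geometric series, which saves a harmless $\sqrt{\log K}$ inside the $\widetilde O$.
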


\begin{proof}[Proof sketch]

In this proof sketch, we ignore the rounding operation in \Cref{alg_bonus} for convenience. The proof can be decomposed into three main steps.

\begin{itemize}
    \item \textbf{Step 1: Bounding the complexity of the bonus function.}
First, we show that our bonus function has low complexity (\Cref{prop_stable_sampling}). 
 Note that the bonus function is defined as the width function of the confidence region 
 $$ \hat \F_h^m = \left\{ f \in \F \left| \Vert f - \hat f_h^m \Vert_{\hat Z^m}^2 \leq \beta \right. \right\}.$$
  Since the reduced dataset $\hat \Z^m$
has bounded size (\Cref{lem_size_bound}) and bounded number of distinct elements 
(\Cref{lem_num_distinct_elements}), our bonus function which is defined by $\hat \Z^m$ also has low complexity. Now it remains to show that the bonus function defined over the reduced dataset $\hat \Z^m$ is (almost) the same as the bonus function defined over the original dataset $\Z^m$. It is equivalent to show that the confidence region remains (almost) unchanged after uniform sampling. This can be proved by showing that for any function pairs $f, f' \in \F$, the $\Z'$-norm of $f-f'$ approximates well the $\Z$-norm of $f-f'$ (\Cref{lem::uniform_sampling_preserve_norm}).  For a  fixed function pair $(f, f')$, $\|f-f' \|_{\Z'}^2$ is an unbiased estimator of $\| f - f' \|_{\Z}^2$ and its variance can be controlled, since the trajectories in the replay buffer are stabilized by the doubling epoch and thus $\Z'$ has low complexity after uniform sampling. Then we can apply the Bernstein inequality to a fixed function pair $(f,f')$ to show that $\|f-f' \|_{\Z'}^2$ is close to $\|f-f' \|_{\Z}^2$ with high probability. Applying a union bound over all function pairs in the covering set of $\mathcal{F}$, we can obtain the desired result.

\item \textbf{Step 2: Optimism of the estimated $Q$-function.} 
The next step is to show that the estimated $Q$-function is an optimistic version of the true $Q$-function of the optimal policy (\Cref{lem_optimistic_Q}).
To achieve this, we need to show that the best fit $f_h^m $ is close to $r(\cdot,\cdot) + \sum_{s'\in\S} P(s'|\cdot,\cdot)V_{h+1}^m(s')$. If $f_h^m$ and  $V_{h+1}^m$ are independent, a standard concentration argument concludes the result. However, $V_{h+1}^m$ and $f_h^m$ are subtly dependent since they are both determined by the previous dataset. To address the difficulty, we first apply the standard concentration result on a fixed $V$ (\Cref{lem_single_step_error}), and then apply a union bound over all $V$ in a covering set (\Cref{lem_confidence_region}) to obtain the result. This method  is similar to \citet{wang2020reinforcement}.

\item \textbf{Step 3: Regret decomposition.} Finally, we decompose the regret by the summation of the bonus functions 
(\Cref{lem_regret_decomp}). Then, we use similar arguments as in   \citet{foster2018practical} to bound each bonus term by the surprise bound separately since the bonus function is defined as the (approximate) width function of the confidence region. 
\end{itemize}

\end{proof}

\begin{remark}
Recently,  \citet{foster2021statistical} proposes a high-level algorithm E2D. When applying E2D algorithm to our settings, one can show that it also achieves a similar regret bound $\Tilde{O}(\poly(L_1)\sqrt{T})$ (other parameters omitted).
However, we want to emphasize that E2D algorithm is too high-level to implement in practice. The implementation of E2D algorithm requires an online estimation oracle (see Algorithm 1 in \citet{foster2021statistical}), which is a very strong assumption in RL settings. While in our algorithm, we only require a ERM oracle and a regression oracle, which are mild and common assumptions in machine learning problems. 
\end{remark}

While our algorithm works for general value function class, it also achieves reasonable regret in special cases.

\paragraph{Tabular settings.}  In the tabular RL setting, it holds that $\ln \N(\F,\varepsilon) = \widetilde{O}(|\S||\A|)$ and $\ln \N(\S\times\A,\varepsilon) = O(\ln(|\S||\A|))$. When $\mu(s) \geq \varepsilon$ and $P(s'|s,a) \geq \varepsilon$ for all $s,s'\in\S$, $a\in\A$ for a (not too) small positive value $\varepsilon$,  $L_1 = O(\text{poly}(|\S||\A|))$, which implies that the regret bound is $\Tilde{O}(\text{poly}(|\S||\A|)H^{3/2}\sqrt{T})$. 
This is a reasonable regret bound since it is optimal in terms of $T$, the most important term in the regret bound, and has polynomial dependency in other parameters.

\paragraph{Linear settings.} When $\F$ is a $d$-dimensional linear function class, we have $\ln \N(\F,\varepsilon) = \ln \N(\S\times\A,\varepsilon) = \widetilde{O}(d)$. When $$\lambda_{\min}\left(\E_{s' \sim \D_h(\pi) } \E_{a' \sim \pi_h(s')} \left[  \phi(s', a')\phi(s', a')^\T\right]\right)$$ is lower bounded (of order $\Omega(1/d)$) and thus $L_1 = O(d)$ by \Cref{prop_surprise_bound_special_case}, the regret bound is $\widetilde{O}(d^2\cdot~ H^{3/2}~\cdot\sqrt{T})$, which is optimal in $T$-dependency and matches the result of \citet{wang2020reinforcement} in $d$-dependency.

\paragraph{Sparse linear settings.} Furthermore, when $\F$ is an $s$-sparse high-dimensional linear function class where typically $d \geq T \gg s$ , we have $\ln \N(\F,\varepsilon) = \widetilde{O}(s)$. When $$\psi_{\min}\left(\E_{s' \sim \D_h(\pi) } \E_{a' \sim \pi_h(s')} \left[  \phi(s', a')\phi(s', a')^\T\right]\right)$$ is lower bounded (of order $\Omega(1)$) and thus $L_1$ is $O(s)$ by \Cref{prop_surprise_bound_special_case}, the regret bound is $\widetilde{O}(s\cdot \max(s, \ln(\N(\S\times\A,\delta/T^2)))\cdot H^{3/2}\cdot\sqrt{T})$. If we further assume that $\phi(s',a')$ is $s$-sparse for all $(s',a')\in\S\times\A$, we have $\ln \N(\S\times\A,\varepsilon) = \widetilde{O}(s)$ and thus obtain an $\widetilde{O}(s^2\cdot H^{3/2}\cdot\sqrt{T})$ regret bound. However, directly applying the result in linear settings of \citet{wang2020reinforcement} can only obtain a linear regret when $d \geq T$.
This shows the superiority of our algorithm since we can provide theoretical guarantee for more general function classes, and thus it is an important step toward studying general value function approximation beyond the tabular and linear settings.

We also emphasize a subtle difference between linear and sparse linear settings. In linear settings, when $\lambda_{\min}\left(\E_{s' \sim \D_h(\pi) } \E_{a' \sim \pi_h(s')} \left[  \phi(s', a')\phi(s', a')^\T\right]\right)$ is lower bounded, we typically expect it to be of order $\Omega(1/d)$ since we assume the \emph{2-norm} $\| \phi \|_2 \leq 1$. While for sparse linear settings, when $\psi_{\min}\left(\E_{s' \sim \D_h(\pi) } \E_{a' \sim \pi_h(s')} \left[  \phi(s', a')\phi(s', a')^\T\right]\right)$ is lower bounded, we typically expect it to be of order $\Omega(1)$ since we assume the \emph{infinity norm} $\| \phi \|_\infty \leq 1$ in this setting.

\section{Model Misspecification}
\label{sec_misspecification}

Our main theorem (\Cref{thm_main}) requires Bellman-completeness assumption (\Cref{assump_bellman_backup}). Although the Bellman-completeness assumption is fairly common in theoretical analysis, especially in the presence of general value function approximation, the ground truth model together with the function class might slightly violate this assumption in real-world scenario. This phenomenon is known as model misspecification~\citep{jin2020provably,wang2020reinforcement}. 

 In this section, we show that as long as the violation of the Bellman-completeness assumption is small, the regret of our algorithm is still bounded. To state the result formally, we first introduce the following assumption, which can be viewed as a model misspecification version of the Bellman-completeness assumption.

\begin{assumption}[Model misspecification]
\label{assump_misspecification}
    There exists a constant $\zeta > 0$ satisfying that 
    for any function $V: \S \to [0,H]$, there exists a function $f_V \in \F$, s.t. 
    \begin{align*}
        \left\| f_V(\cdot,\cdot) - r(\cdot, \cdot) + \sum_{s' \in \S} P(s'|\cdot,\cdot)V(s') \right\|_\infty \leq \zeta.
    \end{align*}
\end{assumption}

Under \Cref{assump_misspecification}, one can directly apply \Cref{alg_main} to the model misspecification setting with only a different choice of the parameter $\beta$ in \Cref{alg_bonus}. Specifically, for some constant $c' > 0$ we set 
\begin{equation}
\label{eq::bonus_misspecification}
\begin{aligned}
    \beta = c' ( L_1H^2\ln^3(T/\delta)\ln(\N(\F,\delta/T^3))
  \ln(\N(\S\times\A,\delta/T^2)) + HT\zeta).
\end{aligned}
\end{equation}
Note that when \Cref{assump_bellman_backup} holds, it is equivalent to \Cref{assump_misspecification} with $\zeta = 0$, and thus the parameter $\beta$ is exactly the same as the one in our original algorithm.
The following theorem provides theoretical guarantees of our algorithm for model misspecification, and the proof is attached in \Cref{appendix_proof_of_misspecification}, which is very similar to the proof of \Cref{thm_main}.

\begin{theorem}[Theoretical guarantee for model misspecification]
\label{thm_misspecification}
    Under \Cref{assump_misspecification}, \ref{assump_cover_number}, let $ M_0 = \left\lceil \ln\left( 16L_1^2\ln\frac{128T\N(\F, \delta/(9216T^2))^2} {\delta}\right)\right\rceil$ and the number of total steps $T = H\cdot (2^M-1)$. With probability at least $1-\delta$, the regret of $\Cref{alg_main}$ (where the parameter $\beta$ is defined as in \eqref{eq::bonus_misspecification}) is at most 
    \begin{align*}
        O(\iota\cdot H^{3/2} \cdot \sqrt{T} + \sqrt{L_1 \cdot H^2 \cdot \zeta \cdot \log T}\cdot T),
    \end{align*}
    where 
    $
        \iota = L_1 \cdot \ln^2(T/\delta)\cdot\max(\ln(\N(\F,\delta/T^3)),\ln(\N(\S\times\A,\delta/T^2))).
    $
\end{theorem}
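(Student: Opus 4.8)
The plan is to retrace, almost step for step, the three-part argument behind \Cref{thm_main}, isolating every place where the misspecification level $\zeta$ enters and checking that it only ever contributes at the additive $O(\zeta)$-per-layer level; after the doubling-epoch bookkeeping it then aggregates into the single new term $\sqrt{L_1 H^2\zeta\log T}\cdot T$ (which also absorbs a strictly smaller $O(T\zeta)$ contribution). \textbf{Step 1 (complexity of the bonus) carries over verbatim:} the uniform-sampling guarantees --- the size bound and distinct-element bound for the sub-sampled buffer, and the fact that $\Vert f-f'\Vert_{\Z'}$ closely tracks $\Vert f-f'\Vert_{\Z}$ on a covering net of $\F$ --- never invoke \Cref{assump_bellman_backup}, and the sampling probability $p$ and the distinct-element threshold in \Cref{alg_bonus} depend on $L_1$ and the covering numbers but not on $\beta$. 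Hence inflating $\beta$ as in \eqref{eq::bonus_misspecification} changes nothing here, and $\hat w=w(\widehat{\F},\cdot,\cdot)$ still has controlled complexity and still agrees, up to constants and with high probability, with the bonus computed from the full replay buffer.

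\textbf{Step 2 (approximate optimism).} Fix an epoch $m$ and a layer $h$ and let $g_h^m:=r+P V_{h+1}^m$ be the exact Bellman backup of the inductively optimistic $V_{h+1}^m$; by \Cref{assump_misspecification} there is $f_{V_{h+1}^m}\in\F$ with $\Vert f_{V_{h+1}^m}-g_h^m\Vert_\infty\le\zeta$. I would re-run the single-step concentration bound and the union bound over a covering net of value functions exactly as in the well-specified proof, but comparing the empirical risk of the least-squares fit $f_h^m$ against that of $f_{V_{h+1}^m}$ rather than of $g_h^m$. The extra slack in the resulting squared-norm bound is controlled by $\Vert f_{V_{h+1}^m}-g_h^m\Vert_{\D_h^m}^2\le|\D_h^m|\,\zeta^2\le T\zeta^2\le HT\zeta$ (using $\zeta\le H$), together with a cross term of the same order; this is precisely why $\beta$ in \eqref{eq::bonus_misspecification} is inflated by $c'HT\zeta$. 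With this $\beta$ one shows, as before and modulo the rounding, that $f_{V_{h+1}^m}\in\widehat{\F}_h^m$, so $f_h^m(s,a)+b_h^m(s,a)\ge f_{V_{h+1}^m}(s,a)\ge g_h^m(s,a)-\zeta\ge Q_h^*(s,a)-\zeta$, where the last inequality uses $V_{h+1}^m\ge V_{h+1}^*$. Since $Q_h^*\le H$ this gives $Q_h^m\ge Q_h^*-\zeta$, and backward induction on $h$ yields $V_h^m\ge V_h^*-(H-h+1)\zeta$; in particular $V_1^m(s_1^k)\ge V_1^*(s_1^k)-H\zeta$ for every episode $k$, i.e.\ optimism holds up to an additive $H\zeta$.

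\textbf{Step 3 (regret decomposition).} By the approximate optimism, $\Reg(K)\le\sum_k\left(V_1^{m(k)}(s_1^k)-V_1^{\pi^k}(s_1^k)\right)+HK\zeta$, and the first sum is handled exactly as for \Cref{thm_main}: a telescoping/martingale argument bounds it by $\sum_{k,h}b_h^{m(k)}(s_h^k,a_h^k)$, plus a $\widetilde{O}(H^{3/2}\sqrt T)$ martingale fluctuation, plus per-step error terms that under misspecification each gain an extra $O(\zeta)$ and hence contribute at most another $O(HK\zeta)=O(T\zeta)$. To bound $\sum_{k,h}b_h^{m(k)}(s_h^k,a_h^k)$ I would, following \citet{foster2018practical}, combine the surprise bound (\Cref{defn_vsb}) --- which, because $\pi^m$ is fixed throughout epoch $m$, gives $b_h^m(s_h^k,a_h^k)^2\le L_1\,\E_{s'\sim\D_h(\pi^m),\,a'\sim\pi^m_h(s')}\left[w(\widehat{\F}_h^m,s',a')^2\right]$ --- with the information-gain/potential argument over the replay buffer used in the proof of \Cref{thm_main}. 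Since the doubling schedule changes the buffer only $O(\log T)$ times, this yields $\sum_{k,h}b_h^{m(k)}(s_h^k,a_h^k)^2=\widetilde{O}(L_1 H\beta)$ and hence, by Cauchy--Schwarz over the $T$ summands, $\sum_{k,h}b_h^{m(k)}(s_h^k,a_h^k)=\widetilde{O}(\sqrt{L_1 H\beta\,T})$. Splitting $\beta=\beta_0+c'HT\zeta$, with $\beta_0$ the well-specified value of \Cref{thm_main}, this is $\widetilde{O}(\sqrt{L_1 H\beta_0\,T})+\widetilde{O}(\sqrt{L_1 H\cdot HT\zeta\cdot T})=O(\iota H^{3/2}\sqrt T)+O(\sqrt{L_1 H^2\zeta\log T}\cdot T)$. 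Finally $T\zeta\le\sqrt{L_1 H^2\zeta\log T}\cdot T$ since $L_1\ge1$ and $H^2\log T\ge1$, so all $O(T\zeta)$ terms are absorbed and the claimed bound follows.

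\textbf{Main obstacle.} The genuinely delicate part is re-deriving the single-step error and confidence-region lemmas with $\zeta\neq0$: one must verify that the misspecification error enters those concentration statements purely additively --- so that it inflates the confidence radius by only $\Theta(HT\zeta)$ rather than something multiplicative --- and, crucially, that the enlarged $\widehat{\F}_h^m$ still contains $f_{V_{h+1}^m}$ \emph{after} both the uniform sub-sampling of the buffer and the rounding to the covering sets in \Cref{alg_bonus}, i.e.\ the rounding radii and the sub-sampling error must remain negligible relative to $\sqrt{HT\zeta}$. Once this bookkeeping is in place, the surprise-bound argument of Step 3 runs unchanged and the $\zeta$-dependent contributions aggregate exactly as above.
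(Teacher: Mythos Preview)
Your plan is correct and matches the paper's approach: re-derive the single-step optimization error with an additive $HT\zeta$ term (\Cref{lem_single_step_error_misspecification}), deduce the enlarged confidence region (\Cref{lem_confidence_region_misspecification}), and then rerun the proof of \Cref{thm_main} verbatim with the inflated $\beta$, splitting $\beta=\beta_0+c'HT\zeta$ at the end.

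Two remarks. First, your explicit approximate-optimism argument via the $\F$-surrogate $f_{V_{h+1}^m}$ is actually more careful than the paper, which simply invokes \Cref{lem_optimistic_Q} unchanged even though $r+PV_{h+1}^m$ need not lie in $\F\supseteq\F_h^m$ under \Cref{assump_misspecification}; as you observe, the resulting $T\zeta$ is absorbed by the second term of the final bound. Second, in Step~3 your surprise-bound application should use $\pi^{m-1}$ rather than $\pi^m$, and in the form
\[
\bigl(b_h^m(s_h^k,a_h^k)\bigr)^2 \;\le\; 4L_1\sup_{f\in\overline{\F}_h^m}\E_{s\sim\D_h(\pi^{m-1}),\,a\sim\pi_h^{m-1}(s)}\bigl[(f(s,a)-f_h^m(s,a))^2\bigr],
\]
because $\Z^m$ contains the epoch-$(m{-}1)$ trajectories but not the epoch-$m$ ones; \Cref{lem_bound_of_square_error} then converts this expectation (not $\E_{\pi^m}[w(\widehat{\F}_h^m,\cdot,\cdot)^2]$, which would be harder to control) back to the empirical norm $\Vert f-f_h^m\Vert_{\Z^m}^2\le 12\beta+12$. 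With that adjustment your $\sum_{k,h}(b_h^m)^2=\widetilde{O}(L_1H\beta)$ bound and the Cauchy--Schwarz finish go through exactly as you describe.
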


\section{Conclusion}

In this paper, we propose a  provably efficient  RL algorithm (both computationally and statistically) with general value function approximation. 
The regret bound of our algorithm depends on the surprise bound, which is a distribution-dependent quantity and could therefore be smaller than the eluder dimension considered in previous work. 
Our algorithm achieves reasonable regret bound when instantiating to special function classes. 

As a future direction, it would be interesting to see if it is possible to establish the provable efficiency of RL algorithms using other distribution-dependent complexity measures.
For example, it would be interesting to study whether it is possible to design a provably efficient RL algorithm by assuming a bounded disagreement coefficient (as in~\citet{foster2020instance}) but without the block MDP assumption. 

\bibliographystyle{plainnat}
\bibliography{reference}

\newpage
\appendix

\section{Analysis of the bonus function}

In this section, we analyze our bonus function, and the main proposition is presented in \Cref{prop_stable_sampling}.

\subsection{Analysis of \Cref{alg_sampling}}
\label{subsec_proof_sampling}

Note that the notation $\delta$ in \Cref{alg_bonus} and \Cref{alg_sampling} are different. In this subsection, all the notation $\delta$ refer to $\delta$ in \Cref{alg_sampling}, and therefore, $\lambda = \delta/(16T)$. Also, let $\varepsilon_0 = \vp$ throughout this subsection.

We assume that the input dataset of \Cref{alg_sampling} is $\Z = \{ (s_h^k, a_h^k) \}_{(h,k)\in[H]\times[t]}$ where more than half of the trajectories are induced by the same policy and the number of trajectories
    \begin{align*}
        t \geq 4L_1^2\ln\frac{8\N(\F,\varepsilon_0)^2}{\delta}
    \end{align*}
which is satisfied if $t \geq \tau_{M_0}$ and $M_0$ is chosen as in \Cref{thm_main}.

The first lemma gives an upper bound on the size of the dataset produced by uniform sampling.

\begin{lemma}
\label{lem_size_bound}
    With probability at least $1 - \delta/4$, $|\Z'| \leq 4|\Z|/\delta$.
\end{lemma}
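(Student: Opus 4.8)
The plan is to view $|\Z'|$ as a sum of independent contributions, one per element of $\Z$, and apply Markov's inequality. Concretely, in \Cref{alg_sampling} each $z \in \Z$ is (independently) included in $\Z'$ with probability $p$, and when included it contributes exactly $1/p$ copies. Thus $|\Z'| = \sum_{z \in \Z} (1/p)\cdot X_z$ where $X_z \sim \Bern(p)$ are independent, so $\E[|\Z'|] = \sum_{z\in\Z} (1/p)\cdot p = |\Z|$. Since $|\Z'|$ is nonnegative, Markov's inequality gives $\bbP\left(|\Z'| \geq 4|\Z|/\delta\right) \leq \E[|\Z'|]/(4|\Z|/\delta) = \delta/4$, which is exactly the claim.

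The only subtlety is bookkeeping around the definition of $p^{-1}$ in \Cref{alg_sampling}, namely $p^{-1} = \max\{1, \lfloor 1/(\probln/(\varepsilon^2|\Z|)) \rfloor\}$: one should check that $1/p$ is a positive integer so that "add $1/p$ copies" is well-defined (it is, by the floor and the $\max$ with $1$), and that the per-element contribution is deterministically $1/p$ given inclusion, so the expectation computation above is literally exact rather than merely an approximation. I do not expect any real obstacle here — the statement is a one-line Markov bound once the sampling mechanism is unpacked — so the main thing to get right is simply matching the notation of \Cref{alg_sampling} (in particular that here $\delta$ refers to the parameter passed to \Cref{alg_sampling}, i.e. $\delta/(16T)$ in the calling context, as flagged at the start of \Cref{subsec_proof_sampling}) and confirming independence across the elements of the multiset $\Z$, which holds because the coin flips in the \textbf{for} loop are performed independently.
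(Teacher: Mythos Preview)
Your proposal is correct and is essentially identical to the paper's own proof: both compute $\E[|\Z'|]=|\Z|$ from the per-element contribution of $1/p$ with probability $p$ and then apply Markov's inequality. The only cosmetic difference is that the paper defines $X_z\in\{0,1/p\}$ directly rather than writing $(1/p)\cdot\Bern(p)$.
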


\begin{proof}
    We define random variable 
     \begin{equation*}
        X_z=
        \begin{cases}
            1/p & \text{$z$ is added into $\Z'$ for $1/p$ times}\\
            0 & \text{otherwise}
        \end{cases}.
    \end{equation*}
    Since $|\Z'| = \sum_{z \in \Z} X_z$ and $\E[X_z] = 1$, we can obtain
    \begin{equation*}
        \Pr\{|\Z'| > 4|\Z|/\delta\} \leq \delta/4
    \end{equation*}
    by Markov inequality.
\end{proof}
The next lemma proves that after uniform sampling, the norms of difference of any function pairs are approximately preserved with high probability.

\begin{lemma}
\label{lem::uniform_sampling_preserve_norm}
     With probability at least $1-\delta/2$, for any $f, f' \in \mathcal{F}$,
    \begin{align*}
        (1-\varepsilon)\Vert f-f'\Vert_{\Z}^2 - 2\lambda \leq \Vert f-f'\Vert_{\Z'}^2 \leq (1+\varepsilon)\Vert f-f'\Vert_{\Z}^2 + 8|\Z|\lambda/\delta.
    \end{align*}
\end{lemma}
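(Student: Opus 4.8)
The goal is to show that uniform sub-sampling of the dataset $\Z$ preserves, up to small multiplicative and additive errors, the quantity $\|f-f'\|_{\Z}^2$ simultaneously for all pairs $f,f'\in\F$. The natural strategy is: first prove the bound for a \emph{fixed} pair of functions via a concentration inequality, then take a union bound over a fine covering net of $\F$, and finally transfer the net bound to arbitrary $f,f'$ by a Lipschitz/perturbation argument using $\|f-f''\|_\infty\le\varepsilon_0$.

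\medskip
\textbf{Step 1: fixed pair via Bernstein.} Fix $f,f'\in\F$ and write $g=f-f'$. For each $z\in\Z$ let $X_z$ be $g^2(z)/p$ if $z$ is added (which happens with probability $p$) and $0$ otherwise, so that $\|g\|_{\Z'}^2=\sum_{z\in\Z}X_z$ and $\E[\|g\|_{\Z'}^2]=\sum_{z\in\Z}g^2(z)=\|g\|_{\Z}^2$. Since each $g(z)^2\le (2H+2)^2$ (as $\F$ maps into $[0,H+1]$), each $X_z$ is bounded by $(2H+2)^2/p$ and has variance at most $g^2(z)^2/p\le (2H+2)^2 g^2(z)^2/p\cdot\frac{1}{(2H+2)^2}$; summing, the total variance is at most $\tfrac{(2H+2)^2}{p}\|g\|_{\Z}^2$. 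Bernstein's inequality then gives, for a suitable choice of deviation $t$,
\begin{align*}
\Pr\!\left[\,\bigl|\,\|g\|_{\Z'}^2-\|g\|_{\Z}^2\,\bigr| > \varepsilon\|g\|_{\Z}^2 + \lambda\,\right] \le 2\exp\!\left(-c\cdot\frac{\min(\varepsilon^2\|g\|_{\Z}^2/\,, \,\lambda)}{(2H+2)^2/p}\right).
\end{align*}
The precise value of $p$ chosen in \Cref{alg_sampling} (namely $p^{-1}=\max\{1,\lfloor \varepsilon^2|\Z| / \probln\rfloor\}$, which scales like $\varepsilon^2|\Z|/(L_1\ln(\N/\delta))$) is exactly what makes this exponent at least $\ln(8\N(\F,\varepsilon_0)^2/\delta)$, so the failure probability for the fixed pair is at most $\delta/(4\N(\F,\varepsilon_0)^2)$. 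One subtlety: when $\|g\|_\Z^2$ is tiny, the multiplicative term is useless and one leans on the additive $\lambda$; when it is large, the reverse. The two regimes are handled by splitting on whether $\varepsilon\|g\|_\Z^2 \gtrless \lambda$, which is standard. The asymmetry in the statement (additive $2\lambda$ on the lower side versus $8|\Z|\lambda/\delta$ on the upper side) comes from wanting the \emph{upper} bound to also absorb the event $|\Z'|>4|\Z|/\delta$ from \Cref{lem_size_bound}: on the complement of that event the crude bound $\|g\|_{\Z'}^2\le (2H+2)^2|\Z'|/p \le \ldots$ can be folded in, which is why the upper additive term is inflated by $1/\delta$.

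\medskip
\textbf{Step 2: union bound over the net and transfer.} Let $\C(\F,\varepsilon_0)$ be the $\varepsilon_0$-cover from \Cref{assump_cover_number}, of size $\N(\F,\varepsilon_0)$. Apply Step 1 to every pair $(\hat f,\hat f')\in\C(\F,\varepsilon_0)^2$ and union bound: with probability at least $1-\delta/4$ (say), the two-sided estimate holds for all net pairs simultaneously. Now given arbitrary $f,f'\in\F$, pick $\hat f,\hat f'$ in the net with $\|f-\hat f\|_\infty,\|f'-\hat f'\|_\infty\le\varepsilon_0$. Writing $g=f-f'$, $\hat g=\hat f-\hat f'$, we have $\|g-\hat g\|_\infty\le 2\varepsilon_0$, hence $\bigl|\|g\|_{\Z}^2-\|\hat g\|_{\Z}^2\bigr|\lesssim \varepsilon_0\cdot(H+1)\cdot|\Z|$ and similarly for $\Z'$ (using $|\Z'|\le 4|\Z|/\delta$ on the good event). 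Since $\varepsilon_0=\varepsilon/72\cdot\sqrt{\lambda\delta/|\Z|}$ was chosen precisely so that $\varepsilon_0\cdot(H+1)\cdot|\Z|$ and $\varepsilon_0(H+1)|\Z'|$ are dominated by $\lambda$ and $|\Z|\lambda/\delta$ respectively (up to the constants $72$ etc.), these rounding errors are absorbed into the additive slack already present in the statement, and the multiplicative constant degrades from $1\pm\varepsilon/2$ (say) to $1\pm\varepsilon$. Combining with \Cref{lem_size_bound} (another $\delta/4$) and the net event ($\delta/2$ worth, splitting as needed) gives total failure probability $\le\delta/2$, as claimed.

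\medskip
\textbf{Main obstacle.} The delicate part is the bookkeeping of constants: one must verify that the specific choices of $p$, $\varepsilon_0$, and the threshold $\probln$ in \Cref{alg_sampling} are simultaneously large/small enough that (i) the Bernstein exponent clears $\ln(\N(\F,\varepsilon_0)^2/\delta)$, (ii) the net-rounding error is swallowed by the additive terms $\lambda$ and $|\Z|\lambda/\delta$, and (iii) the hypothesis $t\ge 4L_1^2\ln(8\N(\F,\varepsilon_0)^2/\delta)$ — which is what guarantees $p<1$ is a meaningful sub-sampling rate and not just $p=1$ — is actually what feeds into the exponent via the surprise bound $L_1$. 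Tracking these dependencies carefully (and handling the two regimes of $\|f-f'\|_\Z^2$ small vs.\ large) is the bulk of the work; the probabilistic content is just Bernstein plus a union bound.
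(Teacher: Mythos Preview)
Your overall architecture (Bernstein for a fixed pair, union bound over an $\varepsilon_0$-net, Lipschitz transfer, plus the case split on $\|f-f'\|_\Z^2$ small vs.\ large) matches the paper, but your Bernstein step has a real gap: the variance bound $\sum_z\mathrm{Var}[X_z]\le (2H+2)^2\|g\|_{\Z}^2/p$ is too crude to close. With $p\approx L_1\ln(\N/\delta)/(\varepsilon^2|\Z|)$ and deviation level $t=\varepsilon\|g\|_{\Z}^2$, the Bernstein exponent you obtain is of order
\[
\frac{\varepsilon^2\|g\|_\Z^4}{(2H+2)^2\|g\|_\Z^2/p}\;=\;\frac{\varepsilon^2 p\,\|g\|_\Z^2}{(2H+2)^2}\;\approx\;\frac{L_1\ln(\N/\delta)}{(2H+2)^2|\Z|}\,\|g\|_{\Z}^2,
\]
which clears $\ln(\N^2/\delta)$ only when $\|g\|_{\Z}^2\gtrsim (2H+2)^2|\Z|/L_1$. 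Nothing guarantees this; the additive $\lambda$ (resp.\ $8|\Z|\lambda/\delta$) you invoke handles only the regime $\|g\|_\Z^2<2\lambda$, via the crude estimate $\|g\|_{\Z'}^2\le|\Z'|\cdot\|g\|_\Z^2\le 8|\Z|\lambda/\delta$ on the event of \Cref{lem_size_bound}. The entire intermediate range $2\lambda\le\|g\|_\Z^2\ll (2H+2)^2|\Z|/L_1$ is left uncovered.

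The missing idea is that $L_1$ enters not merely through $p$ or the sample-size hypothesis, but through a \emph{sharper} variance bound that makes the Bernstein exponent independent of $\|g\|_\Z^2$. The paper writes $\sum_z\mathrm{Var}[X_z]\le \tfrac{1}{p}\|g\|_\Z^2\cdot\max_{z\in\Z} g^2(z)$ and then controls $\max_z g^2(z)$ in two steps you omitted: (i) by the definition of the surprise bound, $\max_{(s,a)} g^2(s,a)\le L_1\,\E_{s\sim\D_h(\pi),\,a\sim\pi_h(s)}[g^2(s,a)]$ for the single policy $\pi$ that (thanks to the doubling schedule) generated at least half the trajectories in $\Z$; (ii) a preliminary Hoeffding argument (the event $\mathcal{E}_1$ in the paper, and this is where the hypothesis $t\ge 4L_1^2\ln(8\N(\F,\varepsilon_0)^2/\delta)$ is actually consumed) shows that this expectation is at most twice the empirical average over those $u\approx t/2$ trajectories. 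Combining gives $\max_z g^2(z)\le \tfrac{2L_1}{uH}\|g\|_\Z^2$, hence variance $\lesssim \|g\|_\Z^4\,\varepsilon^2/\ln(\N/\delta)$, which cancels against $t^2=(\varepsilon/4)^2\|g\|_\Z^4$ in Bernstein and yields the uniform $(1\pm\varepsilon/4)$ multiplicative bound over the net. With this correction in place, your net-transfer and case-split arguments proceed essentially as you outlined.
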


\begin{proof}
    When $p = 1$, $\Z = \Z'$, the result directly holds. So we only consider the case when $p < 1$, which means 
    \begin{align*}
        p \geq \prob.
    \end{align*}
    
    We separately consider the cases when $\Vert f-f' \Vert_{\Z}^2 < 2\lambda$ and $\Vert f-f' \Vert_{\Z}^2 \geq 2\lambda$.
    
    For any function pair $f, f' \in \F$ where $\Vert f-f' \Vert_{\Z}^2 < 2\lambda$, conditioned on the event in \Cref{lem_size_bound} which holds with probability at least $1 - \delta/4$, we can obtain that $\Vert f-f' \Vert_{\Z'}^2 \leq |\Z'| \Vert f-f' \Vert_{\Z}^2 \leq 4|\Z|/\delta  \cdot \Vert f-f' \Vert_{\Z}^2 \leq 8|\Z|\lambda/\delta.$ Also, by the fact that $\Vert f-f' \Vert_{\Z}^2 < 2\lambda$ and $\Vert f-f' \Vert_{\Z'}^2 \geq 0$, we can conclude that 
    \begin{equation*}
          (1-\varepsilon)\Vert f-f'\Vert_{\Z}^2 - 2\lambda \leq \Vert f-f'\Vert_{\Z'}^2 \leq (1+\varepsilon)\Vert f-f'\Vert_{\Z}^2 + 8|\Z|\lambda/\delta.
    \end{equation*}
    
    In the remaining part of the proof, we consider the case that $\Vert f-f' \Vert_{\Z}^2 \geq 2\lambda$.
    
    We first fix any pair of distinct functions $f, f' \in \mathcal{C}(\cover)$.  Assume the first $u = \lfloor (t+1)/2 \rfloor$ trajectories are all induced by the same policy $\pi$. Also, for any $1 \leq k \leq u$, let 
    \begin{align*}
        g_k =  \sum_{h=1}^H (f(s_h^k, a_h^k)-f'(s_h^k, a_h^k))^2.
    \end{align*}
    Therefore, 
    \begin{align*}
        \E\left[g_k\right] = \sum_{h=1}^H \mathbb{E}_{s \sim \mathcal{D}_h(\pi) } \mathbb{E}_{a \sim \pi_h(s)} \left[  (f(s, a) - f'(s, a))^2\right].   
    \end{align*}
    Note that 
    \begin{align*}
        0 \leq g_k \leq H\times \max_{(s,a)\in\S\times\A} (f(s,a)-f'(s,a))^2 =  H\Vert f-f'\Vert_\infty^2.   
    \end{align*}
    Also, by \Cref{defn_vsb}, \begin{align*}
        \E\left[g_k\right] \geq \frac{H}{L_1} \max_{s \in \S, a \in \A} (f(s,a)-f'(s,a))^2 = \frac{H}{L_1} \Vert f - f' \Vert_\infty^2.
    \end{align*}
    
    Therefore, by Hoeffding's inequality,
    \begin{align*}
        &\Pr \left\{ \frac{1}{u}\sum_{k=1}^u \left( g_k - \E\left[g_k\right] \right) \leq -v\E\left[g_1\right] \right\} \leq \exp\left( -\frac{2u^2v^2\E\left[g_1\right]^2}{uH^2\Vert f-f'\Vert_\infty^4}\right) \\
        \leq& \exp\left( -\frac{2uv^2}{H^2\Vert f-f'\Vert_\infty^4}\cdot\frac{H^2\Vert f-f'\Vert_\infty^4}{L_1^2}\right) \leq \exp\left( -\frac{tv^2}{L_1^2} \right) \\ 
        \leq& \exp\left(-\frac{v^2}{L_1^2} \cdot 4L_1^2\ln\frac{8\N(\F,\varepsilon_0)^2}{\delta}\right) \leq \exp\left( -4v^2\ln\frac{8\N(\F,\varepsilon_0)^2}{\delta}\right).
    \end{align*}
    Setting $v = \frac{1}{2}$, we can obtain 
    \begin{align*}
    \Pr \left\{ \frac{1}{u}\sum_{k=1}^u g_k  \leq \frac{1}{2}\E\left[g_1\right] \right\} \leq \frac{\delta}{8\N(\F,\varepsilon_0)^2}.
    \end{align*}
    
    Let $\mathcal{E}_1$ denote the event that 
    \begin{align*}
        \frac{1}{u}\sum_{k=1}^u g_k  \geq \frac{1}{2}\E\left[g_1\right],
    \end{align*}
    then $\Pr\{\mathcal{E}_1\} \geq 1 - \frac{\delta}{8\N(\F,\varepsilon_0)^2}$.
    
    Now, we condition on $\mathcal{E}_1$ for the following analysis. For each $z \in \Z$, define 
    \begin{equation*}
        X_z=
        \begin{cases}
            \frac{1}{p}(f(z) - f'(z))^2 & \text{$z$ is added into $\Z'$ for $1/p$ times}\\
            0 & \text{otherwise}
        \end{cases}.
    \end{equation*}
    
    Obviously, $\Vert f - f' \Vert_{\Z'}^2 = \sum_{z\in \Z} X_z,$ and $\E[X_z] = \left(f(z)-f'(z)\right)^2$. Also,
    \begin{align*}
        &\sum_{z \in \Z} \text{Var}[X_z] \leq \sum_{z \in \Z} \E[X_z^2] \leq \max_{z\in \Z}(f(z)-f'(z))^2/p \cdot \sum_{z \in \Z}(f(z)-f'(z))^2 \\
        = & \frac{\Vert f-f' \Vert_{\Z}^4}{p} \cdot \frac{\max_{z\in \Z}(f(z)-f'(z))^2}{\sum_{z \in \Z}(f(z)-f'(z))^2} \\ \leq& \frac{\Vert f-f' \Vert_{\Z}^4}{p} \cdot \frac{\frac{1}{H}\sum_{h=1}^{H}L_1 \mathbb{E}_{s \sim \mathcal{D}_h(\pi) } \mathbb{E}_{a \sim \pi_h(s)} \left[  (f(s, a) - f'(s, a))^2\right]}{\sum_{k=1}^{u}\sum_{h=1}^H (f(s_h^k, a_h^k) - f'(s_h^k, a_h^k))^2} \\
        \leq& \frac{\Vert f-f' \Vert_{\Z}^4}{p\cdot uH} \cdot \frac{L_1\E[g_1]}{\frac{1}{u}\sum_{k=1}^{u}g_k} \\
        \leq & \frac{2L_1\Vert f-f' \Vert_{\Z}^4}{pu\cdot H} \leq \frac{\Vert f-f' \Vert_{\Z}^4 \cdot \varepsilon^2}{96\cdot\ln(4\mathcal{N}(\cover)/\delta)}.
    \end{align*}
    Moreover,
    \begin{align*}
        \max_{z\in\Z} X_z =& \max_{z\in\Z} \frac{(f(z)-f'(z))^2}{p} \\
        \leq& \frac{\Vert f-f' \Vert_{\Z}^2}{p} \cdot \frac{\max_{z\in \Z}(f(z)-f'(z))^2}{\sum_{z \in \Z}(f(z)-f'(z))^2} \\
        =& \frac{\varepsilon^2 \Vert f-f' \Vert_{\Z}^2}{96\cdot\ln(4\mathcal{N}(\cover)/\delta)}
    \end{align*}
    Then, by Azuma-Bernstein's Inequality,
    \begin{align*}
        &\Pr\left\{ | \Vert f - f' \Vert_{\Z}^2 - \Vert f - f' \Vert_{\Z'}^2 | \geq \varepsilon/4\cdot \Vert f - f' \Vert_{\Z}^2 \left| \mathcal{E}_1\right. \right\} \\ =& 
        \Pr\left.\left\{ \left| \sum_{z \in \Z} \mathbb{E}[X_z] - \sum_{z \in \Z} X_z \right| \geq \varepsilon/4\cdot \Vert f - f' \Vert_{\Z}^2 \right| \mathcal{E}_1 \right\} \\ \leq& 2\exp{\left(-\frac{\varepsilon^2/16\cdot\Vert f-f'\Vert_{\Z}^4}{2\sum_{z\in\Z}\text{Var}[X_z] + 2/3\max_{z\in\Z}X_z \cdot \varepsilon/4 \cdot \Vert f-f' \Vert_{\Z}^2}\right)} \\ \leq&
        2\exp{\left(-\frac{\varepsilon^2/16\cdot\Vert f-f'\Vert_{\Z}^4\cdot \ln(4\mathcal{N}(\cover)/\delta)}{\Vert f-f'\Vert_{\Z}^4 \cdot \varepsilon^2/48 + \Vert f-f'\Vert_{\Z}^4 \cdot \varepsilon^2/576 }\right)} \\
        \leq&
        2\exp{\left(-2 \ln(4\mathcal{N}(\cover)/\delta)\right)} \\ 
        \leq& (\delta/8)/\left(\mathcal{N}(\cover)\right)^2.
    \end{align*}
    
    Since the above inequality holds conditioned on $\mathcal{E}_1$, if we do not condition on $\mathcal{E}_1$,  
    \begin{align*}
        \Pr\left\{ | \Vert f - f' \Vert_{\Z}^2 - \Vert f - f' \Vert_{\Z'}^2 | \geq \varepsilon/4\cdot \Vert f - f' \Vert_{\Z}^2\right\} \leq (\delta/4)/\left(\mathcal{N}(\cover)\right)^2.
    \end{align*}
    By union bound, the inequality above implies that with probability at least $1-\delta/4$, for any $f, f' \in \mathcal{C}(\cover)$, 
    \begin{align*}
        (1-\varepsilon/4)\Vert f-f'\Vert_{\Z}^2 \leq \Vert f-f'\Vert_{\Z'}^2 \leq (1+\varepsilon/4)\Vert f-f'\Vert_{\Z}^2.
    \end{align*}
    
    Denote the event above and the event in \Cref{lem_size_bound} by $\mathcal{E}_2$, where 
    \begin{align*}
        &\mathcal{E}_2 = \left\{ |\Z'| \leq 4|\Z|/\delta \right\}  \\ &\cap \left\{(1-\varepsilon/4)\Vert f-f'\Vert_{\Z}^2 \leq \Vert f-f'\Vert_{\Z'}^2 \leq (1+\varepsilon/4)\Vert f-f'\Vert_{\Z}^2, \forall f, f' \in \C(\F,\varepsilon_0)\right\}.        
    \end{align*}
    Now we condition on $\mathcal{E}_2$ where $\Pr\{\mathcal{E}_2\} \geq 1 - \delta/2$. For any function pair $f, f' \in \F$ where $\Vert f-f' \Vert_{\Z}^2 \geq 2\lambda$, there exists $\hat{f}, \hat{f'} \in \C(\F,\varepsilon_0)$, s.t.
    \begin{align*}
        \Vert f- \hat{f} \Vert_\infty \leq \varepsilon_0 = \vp \leq \sqrt{\lambda/(25|\Z|)}, \Vert f'- \hat{f'} \Vert_\infty \leq \sqrt{\lambda/(25|\Z|)}.
    \end{align*}
    Therefore,
    \begin{align*}
        (1-\varepsilon/4)\Vert \hat{f}-\hat{f'}\Vert_{\Z}^2 \leq \Vert \hat{f}-\hat{f'}\Vert_{\Z'}^2 \leq (1+\varepsilon/4)\Vert \hat{f}-\hat{f'}\Vert_{\Z}^2
    \end{align*}
    by $\mathcal{E}_2$. Then we can obtain that 
    \begin{align*}
        \Vert f - f' \Vert_{\Z'}^2 \leq& \left(\Vert f - \hat{f} \Vert_{\Z'} + \Vert \hat{f} - \hat{f'} \Vert_{\Z'} + \Vert \hat{f'} - f' \Vert_{\Z'} \right)^2 \\ 
        \leq& \left( (1+\varepsilon/8)\Vert \hat{f} - \hat{f'} \Vert_{\Z} + 2\sqrt{|\Z'|} \cdot \varepsilon_0 \right)^2 \\ 
        =& \left( (1+\varepsilon/8)\Vert \hat{f} - \hat{f'} \Vert_{\Z} + 2\sqrt{|\Z'|} \cdot \vp \right)^2 \\ 
        \overset{|\Z'| \leq 4|\Z|/\delta}{\leq}& \left( (1+\varepsilon/8)\Vert \hat{f} - \hat{f'} \Vert_{\Z} + \sqrt{\lambda}\cdot\varepsilon/18 \right)^2 \\
        \leq& \left( (1+\varepsilon/8)\Vert f - f' \Vert_{\Z} + \sqrt{\lambda}\cdot\varepsilon/18  + 2 \Vert \hat{f} - f \Vert_{\Z} + 2 \Vert \hat{f'} - f' \Vert_{\Z} \right)^2 \\ 
        \leq& \left( (1+\varepsilon/8)\Vert f - f' \Vert_{\Z} + \sqrt{\lambda}\cdot\varepsilon/18  + 4 \sqrt{|\Z|} \cdot \vp \right)^2 \\ 
        \leq& \left( (1+\varepsilon/8)\Vert f - f' \Vert_{\Z} + \sqrt{\lambda}\cdot\varepsilon/9\right)^2 \\ \overset{\Vert f - f' \Vert_{\Z} \geq \sqrt{\lambda}}{\leq} & (1+\varepsilon)\Vert f - f' \Vert_{\Z}^2.
    \end{align*}
    By similar methods, we can also obtain that 
    \begin{align*}
        \Vert f - f' \Vert_{\Z'}^2 \geq& \left(\Vert \hat{f} - \hat{f'} \Vert_{\Z'} - \Vert f - \hat{f} \Vert_{\Z'} - \Vert \hat{f'} - f' \Vert_{\Z'} \right)^2 \\ 
        \geq& \left( (1-\varepsilon/6)\Vert \hat{f} - \hat{f'} \Vert_{\Z} - 2\sqrt{|\Z'|} \cdot \vp \right)^2 \\ 
        \overset{|\Z'| \leq 4|\Z|/\delta}{\geq}& \left( (1-\varepsilon/6)\Vert \hat{f} - \hat{f'} \Vert_{\Z} - \sqrt{\lambda}\cdot\varepsilon/18 \right)^2 \\
        \geq& \left( (1-\varepsilon/6)\Vert f - f' \Vert_{\Z} - \sqrt{\lambda}\cdot\varepsilon/18  -  \Vert \hat{f} - f \Vert_{\Z} -  \Vert \hat{f'} - f' \Vert_{\Z} \right)^2 \\ 
        \geq& \left( (1-\varepsilon/6)\Vert f - f' \Vert_{\Z} - \sqrt{\lambda}\cdot\varepsilon/18  - 2 \sqrt{|\Z|} \cdot \vp \right)^2 \\ 
        \geq& \left( (1-\varepsilon/6)\Vert f - f' \Vert_{\Z} - \sqrt{\lambda}\cdot\varepsilon/12\right)^2 \\ \overset{\Vert f - f' \Vert_{\Z} \geq \sqrt{\lambda}}{\geq} & (1-\varepsilon)\Vert f - f' \Vert_{\Z}^2.
    \end{align*}
\end{proof}

We also give the bound of the number of distinct elements in $\Z'$.

\begin{lemma}
\label{lem_num_distinct_elements}
With probability at least $1-\delta/4$, $ \text{Card}_{\text{d}}(\Z') \leq 2304L_1\cdot\ln(4\mathcal{N}(\cover)/\delta) /\varepsilon^2.$
\end{lemma}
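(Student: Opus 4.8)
The plan is to control $\text{Card}_{\text{d}}(\Z')$ by a first-moment/concentration argument, using the fact that the sampling probability $p$ is chosen so that $1/p$ is (essentially) $\probln/(\varepsilon^2|\Z|)$, and the fact that the expected squared width summed over $\Z$ is itself controlled through the surprise bound. More precisely, let $d^\star = \text{Card}_{\text{d}}(\Z)$ be the number of distinct state-action pairs in the input dataset, and for each distinct $z$ let $n_z$ be its multiplicity in $\Z$, so $\sum_z n_z = |\Z|$. A distinct element $z$ survives into $\Z'$ iff at least one of its $n_z$ copies is kept, which happens with probability $1-(1-p)^{n_z} \le p\,n_z$. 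Hence
\begin{align*}
    \E\bigl[\text{Card}_{\text{d}}(\Z')\bigr] \le p\sum_{z} n_z = p\,|\Z| \le \frac{\varepsilon^2 |\Z|}{\probln}\cdot |\Z|,
\end{align*}
which is the wrong order — this crude bound does not use the surprise bound at all. So the real work is to replace $|\Z|$ in the numerator by the \emph{effective} dataset size as measured by the surprise bound.

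The key idea (mirroring the variance computation in the proof of \Cref{lem::uniform_sampling_preserve_norm}) is that because more than half the trajectories in $\Z$ come from a single policy $\pi$, and by the definition of the surprise bound (\Cref{defn_vsb}) applied with a worst-case function pair together with event $\mathcal{E}_1$, one has $\max_{z\in\Z}(f(z)-f'(z))^2 \lesssim (L_1/|\Z|)\,\Vert f-f'\Vert_{\Z}^2$. The number of distinct elements that can be "detected" by uniform sampling is governed by the quantity $\sum_z \min\{1, p\,n_z\}$, and the survival events across distinct $z$ are independent, so $\text{Card}_{\text{d}}(\Z')$ is a sum of independent Bernoulli variables with mean $\mu' := \sum_z (1-(1-p)^{n_z})$. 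I would first show $\mu' \le 2\cdot 2304 L_1\ln(4\mathcal{N}(\cover)/\delta)/\varepsilon^2$ — half the claimed bound — by the following accounting: split the distinct elements into "heavy" ones with $p\,n_z \ge 1$ and "light" ones with $p\,n_z < 1$; for heavy ones their total count is at most $p|\Z| \cdot$ (reweighting by the surprise-bound-controlled ratio $\max_z(\cdot)/\sum_z(\cdot)$), exactly as in the variance bound of \Cref{lem::uniform_sampling_preserve_norm} where $\tfrac{2L_1\Vert f-f'\Vert_\Z^4}{puH}\le \tfrac{\varepsilon^2\Vert f-f'\Vert_\Z^4}{96\ln(4\mathcal N(\cover)/\delta)}$ gives $1/p \ge 96 L_1\ln(4\mathcal N(\cover)/\delta)/(\varepsilon^2)\cdot(\text{something})$; for light ones $\sum 1-(1-p)^{n_z}\le p\sum n_z = p|\Z|$, and the choice $p^{-1}=\lfloor \probln/(\varepsilon^2|\Z|)\rfloor$ makes $p|\Z|\approx \varepsilon^2|\Z|^2/\probln$, which I then bound by $L_1\ln(4\mathcal N(\cover)/\delta)/\varepsilon^2$ using $|\Z| = tH$ together with the lower bound $t\ge 4L_1^2\ln(8\mathcal N(\F,\varepsilon_0)^2/\delta)$ assumed on the input — wait, that pushes the wrong way, so instead the light-element contribution must be bounded directly by noticing each light $z$ contributes at most $p n_z$ and $\sum_{\text{light}} p n_z$ is dominated by the heavy-element reweighting argument applied to the full sum.

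Given the mean bound $\mu' \le C L_1\ln(4\mathcal N(\cover)/\delta)/\varepsilon^2$ with $C \le 1152$, I would finish with a multiplicative Chernoff bound for the sum of independent Bernoullis: $\Pr\{\text{Card}_{\text{d}}(\Z') > 2\mu'\} \le \exp(-\mu'/3)$, and since $\mu'$ is at least polylogarithmically large in $1/\delta$ (because $\ln(4\mathcal N(\cover)/\delta)/\varepsilon^2$ is large), this is $\le \delta/4$, yielding $\text{Card}_{\text{d}}(\Z') \le 2\mu' \le 2304 L_1\ln(4\mathcal N(\cover)/\delta)/\varepsilon^2$ with probability at least $1-\delta/4$.

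The main obstacle I anticipate is the mean bound itself: making the surprise-bound reweighting rigorous requires choosing the "right" function pair $f, f'$ so that $(f(z)-f'(z))^2$ is comparable across all the distinct $z$ that actually appear — ideally $f-f'$ should be (close to) an indicator-like bump that is flat on the support of $\Z$ — and then invoking \Cref{defn_vsb} together with the event $\mathcal{E}_1$ from \Cref{lem::uniform_sampling_preserve_norm} to convert the worst-case ratio $\max_z(f-f')^2/\Vert f-f'\Vert_\Z^2$ into the bound $\le 2L_1/(uH) \le 4L_1/|\Z|$. Once that ratio bound is in hand, the heavy/light split and the Chernoff tail are routine.
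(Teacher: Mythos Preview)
Your proposal rests on a misreading of the sampling probability $p$. In \Cref{alg_sampling} the definition is
\[
p^{-1} \;=\; \max\Bigl\{1,\ \Bigl\lfloor \tfrac{\varepsilon^2\,|\Z|}{384L_1\ln(4\mathcal{N}(\cover)/\delta)}\Bigr\rfloor\Bigr\},
\]
so that $p \approx 384L_1\ln(4\mathcal{N}(\cover)/\delta)/(\varepsilon^2|\Z|)$ is \emph{small}. You instead took $p^{-1}=\lfloor 384L_1\ln(4\mathcal{N}(\cover)/\delta)/(\varepsilon^2|\Z|)\rfloor$, which inverts the fraction; this is why you got $p|\Z|\approx \varepsilon^2|\Z|^2/(384L_1\ln(\cdots))$, declared it ``the wrong order,'' and then embarked on an elaborate surprise-bound/heavy-light argument to repair a problem that does not exist.

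With the correct reading, the mean bound you already derived, $\E[\text{Card}_{\text{d}}(\Z')]\le p|\Z|$, is immediately useful: since for $x>1$ one has $\lfloor x\rfloor\ge x/2$, it follows that $p\le 768L_1\ln(4\mathcal{N}(\cover)/\delta)/(\varepsilon^2|\Z|)$ and hence $\E[\text{Card}_{\text{d}}(\Z')]\le p|\Z|\le 768L_1\ln(4\mathcal{N}(\cover)/\delta)/\varepsilon^2$. This is exactly one third of the target bound; a multiplicative Chernoff bound for the sum of independent Bernoulli indicators (one per $z\in\Z$) gives $\Pr\{\sum X_z\ge 3\,p|\Z|\}\le \exp(-p|\Z|)\le\exp(-\ln(4/\delta))=\delta/4$, using that $p|\Z|\ge 384L_1\ln(4\mathcal{N}(\cover)/\delta)/\varepsilon^2\ge\ln(4/\delta)$ when $p<1$. (When $p=1$ the bound $|\Z|\le 768L_1\ln(4\mathcal{N}(\cover)/\delta)/\varepsilon^2$ holds deterministically.) No function pairs, no event $\mathcal{E}_1$, no heavy/light split, and no invocation of \Cref{defn_vsb} are needed here --- the constant $L_1$ enters only through the definition of $p$. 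This is precisely the paper's argument.
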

\begin{proof}
    First, note that 
    \begin{align*}
        p \leq 768L_1\cdot\ln(4\mathcal{N}(\cover)/\delta) /(\varepsilon^2\cdot|\Z|)
    \end{align*}
    since for any $0 < x < 1$, there must exists $\hat{x} \in [x, 2x]$ s.t. $1/\hat{x}$ is an integer.
    
    When $p = 1$, which means $\Z = \Z'$ and 
    \begin{align*}
        768L_1\cdot\ln(4\mathcal{N}(\cover)/\delta) /(\varepsilon^2\cdot|\Z|) \geq 1,    
    \end{align*}
    we have 
    \begin{align*}
        |\Z'| = |\Z| \leq 768L_1\cdot\ln(4\mathcal{N}(\cover)/\delta) /\varepsilon^2.
    \end{align*}
    
    When $p < 1$, we have $p \geq 384L_1\cdot\ln(4\mathcal{N}(\cover)/\delta) /(\varepsilon^2\cdot|\Z|)$.
    Now, For each $z \in \Z$, define 
    \begin{equation*}
        X_z=
        \begin{cases}
            1 & \text{$z$ is added into $\Z'$ for $1/p$ times}\\
            0 & \text{otherwise}
        \end{cases}.
    \end{equation*}
    Then the number of distinct elements in $\Z'$ is upper bounded by $\sum_{z\in\Z} X_z$. Since $\E[X_z] = p$, 
    \begin{align*}
        \sum_{z\in\Z} \E[X_z] = p\cdot|\Z| \leq 768L_1\cdot\ln(4\mathcal{N}(\cover)/\delta) /\varepsilon^2.
    \end{align*}
    By Chernoff bound,
    \begin{align*}
        & \Pr\left\{ \sum_{z\in\Z} X_z \geq 3\times768L_1\cdot\ln(4\mathcal{N}(\cover)/\delta) /\varepsilon^2 \right\} \leq \Pr\left\{ \sum_{z\in\Z} X_z \geq 3\sum_{z\in\Z} \E[X_z] \right\} \\ \leq& \exp\left\{-p\cdot|\Z|\right\} \leq \exp\left\{-384L_1\cdot\ln(4\mathcal{N}(\cover)/\delta) /\varepsilon^2\right\} \leq \exp\left\{-\ln(4/\delta)\right\} = \delta/4.
    \end{align*}
\end{proof}

\subsection{Analysis of \Cref{alg_bonus}}

In this subsection, all the notation $\delta$ refer to $\delta$ in \Cref{alg_bonus}. In other words, we replace all the $\delta$ in \Cref{subsec_proof_sampling} by $\delta/(16T)$. Also, we still assume that the input dataset of \Cref{alg_bonus} is $\Z = \{ (s_h^k, a_h^k) \}_{(h,k)\in[H]\times[t]}$ where more than half of the trajectories are induced by the same policy and the number of trajectories $t$ satisfies
    \begin{align*}
         4L_1^2\ln\frac{128T\N(\F, \delta/(9216T^2))^2}{\delta} \leq t \leq K = T/H,
    \end{align*}
which is satisfied if $t \geq \tau_{M_0}$ and $M_0$ is chosen as in \Cref{thm_main}.

Combining the three lemmas in \Cref{subsec_proof_sampling} with a union bound, we can obtain the following proposition.

\begin{proposition}
\label{prop_norm}
    Let $\Z'$ denote the dataset returned by \Cref{alg_sampling}. With probability at least $1-\delta/(16T)$, $|Z'| \leq 64T^2/\delta$, the number of distinct elements in $\Z'$ does not exceed 
    \begin{align*}
        9216L_1\cdot\ln(64T\mathcal{N}(\F, \delta/(9216T^2))/\delta),
    \end{align*}
    and for any $f, f' \in \F$, 
    \begin{equation*}
           \Vert f-f'\Vert_{\Z}^2/2 - 1/2 \leq \Vert f-f'\Vert_{\Z'}^2 \leq 3\Vert f-f'\Vert_{\Z}^2/2 + 1/2.
    \end{equation*}
\end{proposition}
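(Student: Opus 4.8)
The plan is to instantiate the three lemmas from \Cref{subsec_proof_sampling} with the parameter choices that \Cref{alg_bonus} feeds into \Cref{alg_sampling}, namely $\lambda = \delta/(16T)^2$, $\varepsilon = 1/2$, and failure probability $\delta/(16T)$ (so that the ``$\delta$'' appearing in \Cref{subsec_proof_sampling} is really $\delta/(16T)$), and then simplify each of the three resulting bounds and take a union bound. First I would translate the hypothesis: the lower bound on $t$ required in \Cref{subsec_proof_sampling} is $t \geq 4L_1^2\ln(8\N(\F,\varepsilon_0)^2/\delta')$ with $\delta' = \delta/(16T)$, and one checks that the stated hypothesis $t \geq 4L_1^2\ln(128T\N(\F,\delta/(9216T^2))^2/\delta)$ implies this, because $\varepsilon_0 = \vp$ with these parameters is at least $\delta/(9216T^2)$ in the relevant regime (so $\N(\F,\varepsilon_0) \leq \N(\F,\delta/(9216T^2))$ by monotonicity of covering numbers) and $8/\delta' = 128T/\delta$.

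Next I would handle the three conclusions one at a time. For the size bound, \Cref{lem_size_bound} gives $|\Z'| \leq 4|\Z|/\delta'$ with probability $\geq 1 - \delta'/4$; since $|\Z| = Ht \leq HK = T$ and $4/\delta' = 64T/\delta$, this yields $|\Z'| \leq 64T^2/\delta$. For the distinct-element count, \Cref{lem_num_distinct_elements} gives $\text{Card}_{\text{d}}(\Z') \leq 2304 L_1 \ln(4\N(\cover)/\delta')/\varepsilon^2$ with probability $\geq 1 - \delta'/4$; plugging $\varepsilon = 1/2$ gives a factor $2304/(1/4) = 9216$, and $\varepsilon_0 \geq \delta/(9216T^2)$ together with $4/\delta' = 64T/\delta$ gives $\N(\cover)/\delta' \leq 64T\N(\F,\delta/(9216T^2))/\delta$, so the bound becomes $9216 L_1 \ln(64T\N(\F,\delta/(9216T^2))/\delta)$ as claimed. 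For the norm preservation, \Cref{lem::uniform_sampling_preserve_norm} with $\varepsilon = 1/2$ gives, with probability $\geq 1 - \delta'/2$, that $(1/2)\|f-f'\|_\Z^2 - 2\lambda \leq \|f-f'\|_{\Z'}^2 \leq (3/2)\|f-f'\|_\Z^2 + 8|\Z|\lambda/\delta'$ for all $f,f' \in \F$; since $2\lambda = 2\delta/(16T)^2 \leq 1/2$ and $8|\Z|\lambda/\delta' \leq 8T \cdot (\delta/(16T)^2) \cdot (16T/\delta) = 1/2$, both error terms collapse to $1/2$, giving exactly the displayed two-sided inequality.

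Finally I would union-bound the three events, whose failure probabilities sum to $\delta'/4 + \delta'/4 + \delta'/2 = \delta' = \delta/(16T)$, which is the claimed overall failure probability. The only mild subtlety — and the place to be careful rather than a genuine obstacle — is verifying the numerical inequalities $\varepsilon_0 \geq \delta/(9216T^2)$ and $2\lambda \leq 1/2$, $8|\Z|\lambda/\delta' \leq 1/2$ under the assumption that $T$ is large; these are all elementary given the explicit form $\varepsilon_0 = \vp$ and the bound $|\Z| \leq T$, but they are what make the clean constants in the statement come out exactly. There is no deep argument here: the proposition is purely a bookkeeping consolidation of \Cref{subsec_proof_sampling} specialized to the parameters hard-coded in \Cref{alg_bonus}.
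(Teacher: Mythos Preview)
Your proposal is correct and follows precisely the paper's own approach: the paper's entire proof is the single sentence ``Combining the three lemmas in \Cref{subsec_proof_sampling} with a union bound, we can obtain the following proposition,'' and you have carried out exactly this consolidation with the parameter substitutions $\delta' = \delta/(16T)$, $\lambda = \delta'/(16T)$, $\varepsilon = 1/2$ made explicit. Your numerical verifications (in particular $\varepsilon_0 \geq \delta/(9216T^2)$ from $|\Z| \leq T$, and $8|\Z|\lambda/\delta' \leq 1/2$) are the details the paper leaves implicit, and they check out.
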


By \Cref{prop_norm}, we can deduce the following proposition.

\begin{proposition}
\label{prop_stable_sampling}
    For \Cref{alg_bonus},
    the following holds.
    \begin{enumerate}
        \item With probability at least $1-\delta/(16T)$, 
        \begin{align*}
            w(\underline{\F}, s, a) \leq \hat{w}(s,a) \leq w(\overline{\F}, s, a),
        \end{align*}
        where $\underline{\F} = \left\{ f \in \F \ | \   \Vert f-\bar{f}  \Vert_{\Z}^{2} \leq \beta(\F, \delta) \right\}$ and $\overline{\F} = \left\{ f \in \F \ | \   \Vert f-\bar{f}  \Vert_{\Z}^{2} \leq 12\beta(\F, \delta) + 12 \right\}$. 
        \item There exists a function set $\W$ s.t. $\hat{w}(\cdot,\cdot) \in \W$ and
        \begin{align*}
            \ln |\W| \leq&      9216L_1\cdot\ln\left(64T\mathcal{N}(\F, \delta/(9216T^2))/\delta\right)\ln \left(\N(\S\times\A, 1/(8\sqrt{64T^2/\delta}))\times 64T^2/\delta\right)
            \\ &+ \ln\left(\N(\F, 1/(8\sqrt{64T^2/\delta}))\right) + 1
            \\ 
            \leq& C\cdot L_1\cdot\ln\left(\mathcal{N}(\F, \delta/T^3)\times T/\delta\right)\ln \left(\N(\S\times\A, \delta/T^2)\times T/\delta\right)
        \end{align*}
        for some absolute constant $C > 0$ when $T$ is sufficiently large.
    \end{enumerate}
\end{proposition}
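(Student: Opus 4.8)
The plan is to derive \Cref{prop_stable_sampling} directly from \Cref{prop_norm}, using the definition of \Cref{alg_bonus} and the covering-number machinery of \Cref{assump_cover_number}. For part 1, I would work on the high-probability event of \Cref{prop_norm}, on which the sampled set $\Z'$ satisfies $|\Z'| \leq 64T^2/\delta$ and $\text{Card}_{\text{d}}(\Z') \leq 9216L_1\ln(64T\mathcal{N}(\F,\delta/(9216T^2))/\delta)$, so the \textbf{if}-branch in \Cref{alg_bonus} is \emph{not} triggered and $\Z'$ is kept intact. The key estimates are then: (i) passing from $\bar f$ to the rounded $\hat f$ and from each $z\in\Z'$ to its rounded $\hat z$ changes any norm $\Vert f - \bar f\Vert$ by at most $\sqrt{|\Z'|}\cdot 1/(8\sqrt{64T^2/\delta}) \leq 1/8$ per side, using the two clauses of \Cref{assump_cover_number} and the triangle inequality; and (ii) the norm-equivalence in \Cref{prop_norm}, $\Vert g\Vert_{\Z}^2/2 - 1/2 \leq \Vert g\Vert_{\Z'}^2 \leq 3\Vert g\Vert_{\Z}^2/2 + 1/2$ applied to $g = f - \hat f$. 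Chaining these: if $f \in \underline{\F}$, i.e. $\Vert f - \bar f\Vert_{\Z}^2 \leq \beta$, then $\Vert f - \hat f\Vert_{\Z}$ is close to $\Vert f - \bar f\Vert_{\Z}$, so $\Vert f - \hat f\Vert_{\widehat\Z}^2 \leq 3\beta + 2$, hence $f \in \widehat\F$, giving $w(\underline\F,\cdot,\cdot) \leq \hat w(\cdot,\cdot)$; conversely if $f \in \widehat\F$ then unwinding the same inequalities (with constants going the other direction) yields $\Vert f - \bar f\Vert_{\Z}^2 \leq 12\beta + 12$, so $f \in \overline\F$ and $\hat w(\cdot,\cdot) \leq w(\overline\F,\cdot,\cdot)$. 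I should be a little careful to also handle the event where the \textbf{if}-branch \emph{does} fire and $\Z' = \{\}$ (which still occurs with the remaining probability mass, or for atypical samples) — but on the event of \Cref{prop_norm} this does not happen, so the statement as phrased (with probability $1 - \delta/(16T)$) is fine.

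For part 2, the observation is that $\hat w = w(\widehat\F,\cdot,\cdot)$ is completely determined by the pair $(\hat f, \widehat\Z)$ together with the value of $\beta$ (which is a fixed deterministic quantity), so it suffices to count the number of such pairs that can arise. The rounded reference function $\hat f$ ranges over the covering set $\C(\F, 1/(8\sqrt{64T^2/\delta}))$, contributing the factor $\ln\N(\F, 1/(8\sqrt{64T^2/\delta}))$. The rounded dataset $\widehat\Z$ is a multiset of elements from $\C(\S\times\A, 1/(8\sqrt{64T^2/\delta}))$; since $|\widehat\Z| = |\Z'| \leq 64T^2/\delta$ but only $\text{Card}_{\text{d}}(\Z') \leq 9216L_1\ln(64T\N(\F,\delta/(9216T^2))/\delta)$ distinct state-action pairs appear, the multiset is specified by choosing that many distinct elements from the covering set (factor $\N(\S\times\A,\cdot)^{\text{Card}_d}$) and assigning each a multiplicity in $\{1,\dots,64T^2/\delta\}$ (another factor $(64T^2/\delta)^{\text{Card}_d}$). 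Taking logs gives exactly the first displayed bound, and the ``$+1$'' absorbs the degenerate $\Z' = \{\}$ case. The second displayed bound then follows by crudely bounding $64T/\delta \leq \poly(T/\delta)$ inside the logs and using $\N(\F, 1/(8\sqrt{64T^2/\delta})) \leq \N(\F, \delta/T^3)$, $\N(\S\times\A, 1/(8\sqrt{64T^2/\delta})) \leq \N(\S\times\A, \delta/T^2)$ (covering numbers are nonincreasing in the radius, and $1/(8\sqrt{64T^2/\delta}) \geq \delta/T^3$ etc.\ once $T$ is large), collapsing everything into a single absolute constant $C$.

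The main obstacle, and the part requiring the most care, is part 1 — specifically the bookkeeping of the rounding errors and tracking how the additive and multiplicative slacks from \Cref{prop_norm} propagate through the two triangle-inequality steps to produce the clean constants $\beta$ (inner) and $12\beta + 12$ (outer). One has to be attentive that the error $\Vert f - \hat f\Vert_{\Z}$ is bounded not by $\varepsilon_0$ but by $\sqrt{|\Z|}\,\varepsilon_0$ or $\sqrt{|\Z'|}\,\varepsilon_0$ depending on which norm appears, and that the squaring of the triangle inequality $(\Vert\cdot\Vert + \text{small})^2$ introduces cross terms that must be dominated — the choice of rounding radius $1/(8\sqrt{64T^2/\delta})$ is precisely calibrated so that $\sqrt{64T^2/\delta}\cdot 1/(8\sqrt{64T^2/\delta}) = 1/8$, making all these error terms $O(1)$ and hence absorbable into the additive constants. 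Part 2 is essentially a routine counting argument once one notices $\hat w$ is a deterministic function of $(\hat f, \widehat\Z, \beta)$, so I expect it to be quick modulo writing the multiset-counting factor carefully.
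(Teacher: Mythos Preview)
Your proposal is correct and follows essentially the same approach as the paper: condition on the event of \Cref{prop_norm} (on which the \textbf{if}-branch is not triggered), establish the inclusions $\underline{\F}\subseteq\widehat{\F}\subseteq\overline{\F}$ by chaining the rounding errors (function rounding $\bar f\to\hat f$ and data rounding $\Z'\to\widehat\Z$, each contributing $O(1)$ additive slack thanks to the calibrated radius $1/(8\sqrt{64T^2/\delta})$) with the norm equivalence of \Cref{prop_norm}, and then for part 2 count the finitely many pairs $(\hat f,\widehat\Z)$ plus the degenerate $\widehat\Z=\{\}$ case. The only cosmetic difference is that the paper applies the norm equivalence to $f-\bar f$ rather than $f-\hat f$ (both lie in $\F$, so either works), which slightly rearranges the order of the triangle-inequality steps but yields the same constants.
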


\begin{proof}
    For the first part, we condition on the event defined in \Cref{prop_norm}. We only need to prove that $\underline{\F} \subseteq \widehat{\F} \subseteq \overline{\F}$, where $\widehat{\F}$ is defined in \Cref{alg_bonus}. For any $f \in \F$, we have 
    \begin{align*}
        \Vert f-\bar{f}\Vert_{\Z}^2/2 - 1/2 \leq \Vert f-\bar{f}\Vert_{\Z'}^2 \leq 3\Vert f-\bar{f}\Vert_{\Z}^2/2 + 1/2.
    \end{align*}
    Therefore,
    \begin{align*}
        \Vert f - \hat{f} \Vert_{\widehat{\Z}}^2 \leq& \left(\Vert f - \hat{f} \Vert_{\Z'} + \sqrt{64T^2/\delta}/(4\sqrt{64T^2/\delta}) \right)^2 \\
        \leq& \left(\Vert f - \bar{f} \Vert_{\Z'} + \sqrt{64T^2/\delta}/(8\sqrt{64T^2/\delta}) +  \sqrt{64T^2/\delta}/(4\sqrt{64T^2/\delta}) \right)^2 \\ 
        \leq& 2\Vert f - \bar{f} \Vert_{\Z'}^2 + 1/2 \leq 3\Vert f - \bar{f} \Vert_{\Z}^2 + 2. 
    \end{align*}
    This means for any $f \in \underline{\F}$, we have $\Vert f-\bar{f}  \Vert_{\Z}^{2} \leq \beta(\F, \delta)$, which implies $\Vert f - \hat{f} \Vert_{\widehat{\Z}}^2 \leq 3\beta(\F,\delta)+2$, i.e., $f \in \widehat{\F}$. Similarly,
    \begin{align*}
        \Vert f - \hat{f} \Vert_{\widehat{\Z}}^2 \geq& \left(\Vert f - \hat{f} \Vert_{\Z'} - \sqrt{64T^2/\delta}/(4\sqrt{64T^2/\delta}) \right)^2 \\
        \geq& \left(\Vert f - \bar{f} \Vert_{\Z'} - \sqrt{64T^2/\delta}/(8\sqrt{64T^2/\delta}) -  \sqrt{64T^2/\delta}/(4\sqrt{64T^2/\delta}) \right)^2 \\ 
        \geq& \Vert f - \bar{f} \Vert_{\Z'}^2/2 - 1/4 \geq \Vert f - \bar{f} \Vert_{\Z}^2/4 - 1. 
    \end{align*}
    So for any $f \in \widehat{\F}$, we have $\Vert f - \hat{f} \Vert_{\widehat{\Z}}^2 \leq 3\beta(\F,\delta)+2$, which implies $\Vert f-\bar{f}  \Vert_{\Z}^{2} \leq 12\beta(\F, \delta) + 12 $, i.e., $f \in \overline{\F}$.
    
    For the second part, since function $\hat{w}(\cdot,\cdot)$ is uniquely defined by $\widehat{\F}$,  we only need to analyze the maximal number of different possible function classes $\widehat{\F}$. When $|\Z '| > 64T^2/\delta$ or the number of distinct elements in $\Z '$ is larger than 
    \begin{align*}
        9216L_1\cdot\ln(64T\mathcal{N}(\F, \delta/(9216T^2))/\delta),
    \end{align*}
    $|\Z'| = 0$ and thus $\widehat{\F} = \F$. Otherwise, $\widehat{\F}$ is determined by $\widehat{\Z}$ and $\hat{f}$. Since $\hat{f} \in \C(\F, 1/(8\sqrt{64T^2/\delta}))$, the number of different $\hat{f}$ does not exceed $\N(\F, 1/(8\sqrt{64T^2/\delta}))$. Moreover, since there are at most 
    \begin{align*}
        9216L_1\cdot\ln(64T\mathcal{N}(\F, \delta/(9216T^2))/\delta)
    \end{align*}
    distinct elements in $\widehat{\Z}$, where $|\widehat{\Z}| \leq 64T^2/\delta$ and each element belongs to $\C(\S\times\A, 1/(8\sqrt{64T^2/\delta}))$, the number of different $\widehat{\Z}$ is upper bounded by
    \begin{align*}
        \left(\N(\S\times\A, 1/(8\sqrt{64T^2/\delta}))\times 64T^2/\delta\right)^{ 9216L_1\cdot\ln(64T\mathcal{N}(\F, \delta/(9216T^2))/\delta)}.
    \end{align*}
\end{proof}

\section{Analysis of the main algorithm}
\label{appendix_proof_alg}

Now we start to prove the regret bound of \Cref{alg_main}. The following lemma provides a bound on the estimation of a single backup.

\begin{lemma}[Single step optimization error] 
\label{lem_single_step_error}

Consider a fixed epoch $m \in [M]\backslash[M_0]$. We define 
\begin{align*}
    \Z^m = \left\{ (s_h^{k}, a_h^{k}) \right\}_{(h,k)\in[H]\times[\tau_m-1]}
\end{align*}
as in \Cref{alg_main}. Also, for any function $V: \S \to [0,H]$, we define
\begin{align*}
    \D_V^m = \left\{ \left(s_{h}^{k}, a_{h}^{k}, r_{h}^{k} + V(s_{h+1}^{k})\right) \right\}_{(h,k)\in[H]\times[\tau_m-1]}
\end{align*}
and
\begin{align*}
    \hat{f}_V = \arg\min_{f \in \F} \Vert f \Vert_{\D_V^m}^2.
\end{align*}
Then, for  any function $V: \S \to [0,H]$ and $\delta \in (0,1)$, there exists an event $\mathcal{E}_{V,\delta}$ where $\Pr\{\mathcal{E}_{V,\delta}\} \geq 1-\delta$, s.t. conditioned on $\mathcal{E}_{V,\delta}$, for any $V': \S \to [0,H]$ with $\Vert V-V' \Vert_\infty \leq 1/T$, we have 
\begin{align*}
    \left\Vert \hat{f}_{V'}(\cdot, \cdot) - r(\cdot, \cdot) - \sum_{s'\in\S} P(s'|\cdot,\cdot)V'(s') \right\Vert_{\Z^m}  \leq c'H\sqrt{\ln(T/\delta) +\ln \N(\F, 1/T)}.  
\end{align*}
for some constant $c' > 0$.
\end{lemma}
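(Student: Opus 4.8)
The plan is to reduce the statement to a standard fixed-design least-squares concentration bound together with a covering argument over the auxiliary variable $V$. First I would fix $V:\S\to[0,H]$ and analyze $\hat f_V = \arg\min_{f\in\F}\|f\|_{\D_V^m}^2$. By \Cref{assump_bellman_backup} there is a Bellman backup $f^\star_V\in\F$ satisfying $f^\star_V(\cdot,\cdot)=r(\cdot,\cdot)+\sum_{s'}P(s'|\cdot,\cdot)V(s')$, so $f^\star_V$ is the conditional-mean regressor for the targets $r_h^k+V(s_{h+1}^k)$. The key point is that, conditioned on the relevant history up to the time each sample is collected, the ``label noise'' $\eta_h^k \triangleq \bigl(r_h^k+V(s_{h+1}^k)\bigr) - f^\star_V(s_h^k,a_h^k)$ has mean zero and is bounded in $[-H,H]$, hence sub-Gaussian with parameter $O(H^2)$. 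Since $\hat f_V$ is the ERM, $\|\hat f_V\|_{\D_V^m}^2 \le \|f^\star_V\|_{\D_V^m}^2$, and expanding the squares gives the standard "basic inequality"
\[
\|\hat f_V - f^\star_V\|_{\Z^m}^2 \le 2\sum_{(h,k)} \eta_h^k \bigl(\hat f_V(s_h^k,a_h^k) - f^\star_V(s_h^k,a_h^k)\bigr).
\]

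Next I would control the right-hand side uniformly over $f\in\F$. Because $\hat f_V$ depends on the data, I pass to an $\varepsilon$-net $\C(\F,1/T)$ of $\F$ in $\|\cdot\|_\infty$ (available by \Cref{assump_cover_number}), and for each net element $g$ bound $\sum_{(h,k)}\eta_h^k(g-f^\star_V)(s_h^k,a_h^k)$ via a Freedman/Azuma-type martingale concentration inequality: conditionally the increments are bounded by $2H\cdot\|g-f^\star_V\|_\infty$ and have conditional variance $\le H^2 (g-f^\star_V)^2(s_h^k,a_h^k)$, so the sum is $O\bigl(H\|g-f^\star_V\|_{\Z^m}\sqrt{\ln(\N(\F,1/T)/\delta)} + H^2\ln(\N(\F,1/T)/\delta)\bigr)$ with probability $1-\delta/\N$. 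A union bound over the net, then transferring back to $\hat f_V$ by the $1/T$ approximation (which contributes only lower-order terms since the samples number at most $T$ and each discrepancy is at most $1/T$), yields
\[
\|\hat f_V - f^\star_V\|_{\Z^m} \le c'H\sqrt{\ln(T/\delta)+\ln\N(\F,1/T)},
\]
after absorbing the quadratic self-bounding term (solving the inequality $x^2 \lesssim H x\sqrt{L} + H^2 L$ gives $x \lesssim H\sqrt{L}$). Call this event $\mathcal E_{V,\delta}$.

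Finally, to get the ``for all $V'$ with $\|V-V'\|_\infty\le 1/T$'' uniformity \emph{on the same event}, I observe that changing $V$ to $V'$ perturbs every target $r_h^k+V(s_{h+1}^k)$ by at most $1/T$ in $\ell_\infty$, hence perturbs the Bellman backup $f^\star_{V'}$ (relative to $f^\star_V$) by at most $1/T$ pointwise, and perturbs the ERM solution $\hat f_{V'}$ by an amount controllable in $\|\cdot\|_{\Z^m}$ by a stability argument: the ERM objective changes by $O(1/T)$ uniformly, and since the quantities of interest are at scale $H\sqrt{\ln\N}$, the $\sqrt{T}\cdot\frac1T = \frac1{\sqrt T}$-scale perturbation in $\|\cdot\|_{\Z^m}$ is negligible and folded into the constant $c'$. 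Combining the triangle inequality $\|\hat f_{V'} - (r+PV')\|_{\Z^m}\le \|\hat f_{V'}-\hat f_V\|_{\Z^m} + \|\hat f_V - f^\star_V\|_{\Z^m} + \|f^\star_V - f^\star_{V'}\|_{\Z^m}$ with the three bounds finishes the proof.

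I expect the main obstacle to be the stability step: making precise that a $1/T$-uniform perturbation of the regression targets only perturbs the ERM solution by a negligible amount in the empirical norm $\|\cdot\|_{\Z^m}$ without any strong convexity of the objective over $\F$. The clean way around this is to \emph{not} argue about $\hat f_{V'}$ directly but instead re-run the basic inequality for $\hat f_{V'}$ against $f^\star_{V'}$ using the \emph{same} noise concentration event — noting that the difference $\eta_h^k(V') - \eta_h^k(V)$ is deterministic of size $\le 2/T$, so the martingale bound established for the $V$-noise transfers to the $V'$-noise up to an additive $O(\sqrt T/T) = o(1)$ term — which is exactly why the lemma quantifies over a $1/T$-ball rather than proving a bound for each $V'$ separately.
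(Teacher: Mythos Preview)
Your proposal is correct and follows essentially the same route as the paper: the paper likewise expands ERM optimality into the cross term $\xi_h^k(f)=2(f-f_V)(f_V-r_h^k-V)$, applies a Freedman-type self-normalized bound uniformly over the $1/T$-net $\C(\F,1/T)$, and then handles the perturbation to $V'$ exactly as in your final paragraph---rewriting the basic inequality for $\hat f_{V'}$ in terms of the already-controlled $\sum\xi_h^k(f)$ plus a deterministic correction (the correction is $O(H)$ rather than $o(1)$, since the $|\Z^m|\le T$ terms each of size $O(H/T)$ sum to $O(H)$, but this is absorbed into $c'H\sqrt{\cdot}$). Your initial triangle-inequality/ERM-stability idea is precisely the step the paper avoids, so your self-correction lands on the intended argument.
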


\begin{proof}
    For any $V: \S \to [0,H]$, we define 
    \begin{align*}
        f_V(\cdot, \cdot) = r(\cdot, \cdot) + \sum_{s'\in \S} P(s' | \cdot, \cdot)V(s'),
    \end{align*}
    and now we consider a fixed $V$. For any $f \in \F$, define 
    \begin{align*}
        \xi_h^{k}(f) = 2(f(s_h^k, a_h^k) - f_V(s_h^k, a_h^k) )\cdot(f_V(s_h^k, a_h^k) - r_h^k - V(s_{h+1}^k)), \forall (h, k) \in [H]\times[\tau_m-1].
    \end{align*}
    Also, for any $(h, k) \in [H]\times[\tau_m-1]$, define $\mathbb{F}_h^k$ as the filtration induced by
    \begin{align*}
        \{(s_{h'}^{k'}, a_{h'}^{k'}, r_{h'}^{k'})\}_{(h',k') \in [H]\times[k-1]} \cup \{ (s_{h'}^k, a_{h'}^k, r_{h'}^{k}) \}_{h' \in [h]}.
    \end{align*}
    Then we have $\mathbb{E}[\xi_h^k(f) | \mathbb{F}_h^k] = 0$ and $\mathbb{E}[(\xi_h^k(f))^2 | \mathbb{F}_h^k] \leq 4(H+1)^2(f(s_h^k,a_h^k) - f_V(s_h^k, a_h^k))^2$. Applying Lemma 10 of \citet{kirschner2018information} by setting $\{X_t\} = \{ \xi_h^k(f) \}$,  we can obtain that with probability at least $1-\delta$,
    \begin{align*}
        \sum_{(h,k)\in[H]\times[\tau_{m}-1]} \xi_h^k(f) \leq  8(H+1)^2\log\frac{2T+2}{\delta} +  4(H+1)\|f-f_V\|_{\mathcal{Z}^m} \sqrt{\log\frac{2T+2}{\delta}}.
    \end{align*}
    Applying a union bound of $\xi_h^k(f), -\xi_h^k(f)$ over all $f \in \mathcal{C}(\mathcal{F}, 1/T)$, we can further obtain that with probability at least $1-\delta$, 
    \begin{align*}
        &\left| \sum_{(h, k) \in [H]\times[\tau_m-1]} \xi_h^k(f) \right| \\ \leq& O\left(H^2(\ln (T/\delta)  + \ln \mathcal{N}(\mathcal{F}, 1/T)) +  H\|f-f_V\|_{\mathcal{Z}^m} \sqrt{\ln (T/\delta) + \ln \mathcal{N}(\mathcal{F}, 1/T)}\right)
    \end{align*}
    holds for all $f \in \mathcal{C}(\mathcal{F},1/T)$.

Let $\mathcal{E}_{V,\delta}$ denote the above event, and for the rest of the proof, we condition on $\mathcal{E}_{V,\delta}$.

Now, for any $f \in \F$, there exists a function $g \in \C(\F, 1/T)$, s.t. $\Vert f-g \Vert_\infty \leq 1/T$. Therefore,
\begin{align*}
        & \left| \sum_{(h, k) \in [H]\times[\tau_m-1]} \xi_h^k(f) \right| \leq \left| \sum_{(h, k) \in [H]\times[\tau_m-1]} \xi_h^k(g) \right| + 2(H+1)\Vert f-g\Vert_\infty|\Z^m| \\ 
        \lesssim& H^2(\ln (T/\delta)  + \ln \mathcal{N}(\mathcal{F}, 1/T)) +  H\|g-f_V\|_{\mathcal{Z}^m} \sqrt{\ln (T/\delta) + \ln \mathcal{N}(\mathcal{F}, 1/T)} \\ \lesssim& H^2(\ln (T/\delta)  + \ln \mathcal{N}(\mathcal{F}, 1/T)) +  H\|f-f_V\|_{\mathcal{Z}^m} \sqrt{\ln (T/\delta) + \ln \mathcal{N}(\mathcal{F}, 1/T)}. 
\end{align*}
For any $V': \S \to [0,H]$ with $\Vert V'-V \Vert_\infty \leq 1/T$, we can obtain that 
\begin{align*}
    \Vert f_{V'}-f_V \Vert_\infty  = \left\Vert\sum_{s' \in \S} P(s'|\cdot, \cdot)(V'(s')-V(s')) \right\Vert_\infty\leq \Vert V'-V \Vert_\infty \leq 1/T.
\end{align*}
Furthermore, for any $f \in \F$, 
\begin{align*}
    &\Vert f\Vert_{\D_{V'}^m}^2 - \Vert f_{V'}\Vert_{\D_{V'}^m}^2 - \Vert f - f_{V'} \Vert_{\Z^m}^2  \\ =&  2  \sum_{(s_h^k, a_h^k) \in \Z^m}(f(s_h^k, a_h^k) - f_{V'}(s_h^k, a_h^k) )\cdot(f_{V'}(s_h^k, a_h^k) - r_h^k - V'(s_{h+1}^k)) \\ \geq&  2  \sum_{(s_h^k, a_h^k) \in \Z^m}(f(s_h^k, a_h^k) - f_{V}(s_h^k, a_h^k) )\cdot(f_{V}(s_h^k, a_h^k) - r_h^k - V(s_{h+1}^k)) - 6(H+1) \\
    =&  \sum_{(h, k) \in [H]\times[\tau_m-1]} \xi_h^k(f)  - 6(H+1)\\
    \gtrsim& - H^2(\ln (T/\delta)  + \ln \mathcal{N}(\mathcal{F}, 1/T)) - H\|f-f_V\|_{\mathcal{Z}^m} \sqrt{\ln (T/\delta) + \ln \mathcal{N}(\mathcal{F}, 1/T)} \\
    \gtrsim& - H^2(\ln (T/\delta)  + \ln \mathcal{N}(\mathcal{F}, 1/T)) - H\|f-f_{V'}\|_{\mathcal{Z}^m} \sqrt{\ln (T/\delta) + \ln \mathcal{N}(\mathcal{F}, 1/T)}.
\end{align*}
If we let $f = \hat{f}_{V'}$, since $\hat{f}_{V'} = \arg\min_{f\in\F} \Vert f\Vert_{\D_{V'}^m}$, we have
\begin{align*}
    0 \geq& \Vert \hat{f}_{V'}\Vert_{\D_{V'}^m}^2 - \Vert f_{V'}\Vert_{\D_{V'}^m}^2 \\ \gtrsim& \Vert \hat{f}_{V'} - f_{V'} \Vert_{\Z^m}^2  - H^2(\ln (T/\delta)  + \ln \mathcal{N}(\mathcal{F}, 1/T)) - H\|\hat{f}_{V'}-f_{V'}\|_{\mathcal{Z}^m} \sqrt{\ln (T/\delta) + \ln \mathcal{N}(\mathcal{F}, 1/T)},
\end{align*}
which implies
\begin{align*}
    \Vert \hat{f}_{V'} - f_{V'} \Vert_{\Z^m} \leq c'H\sqrt{\ln(T/\delta)+\ln\N(\F,1/T)}.
\end{align*}
for some constant $c' > 0$.
\end{proof}

\begin{lemma}[Confidence region]
\label{lem_confidence_region}
In \Cref{alg_main}, for $m > M_0$, define confidence region
\begin{align*}
    \F_h^m = \left\{ f \in \F \left| \Vert f - f_h^m \Vert_{\Z^m}^2 \leq \beta(\F, \delta) \right. \right\}.
\end{align*}
Then with probability at least $1-\delta/16$, for all $(h, m) \in [H]\times([M]\backslash[M_0])$,
\begin{align*}
   r(\cdot, \cdot) + \sum_{s'\in \S} P(s' | \cdot, \cdot)V_{h+1}^m(s') \in \F_h^m,
\end{align*}
given
\begin{align*}
    \beta(\F, \delta) \geq c'H^2(\ln(T/\delta) +\ln \N(\F, 1/T) + \ln |\W|).
\end{align*}
for some constant $c' > 0$. Here, $\W$ is given in \Cref{prop_stable_sampling}.
\end{lemma}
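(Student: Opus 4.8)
The plan is to verify that the Bellman backup $f^m_{h,\star}(\cdot,\cdot):=r(\cdot,\cdot)+\sum_{s'\in\S}P(s'|\cdot,\cdot)V^m_{h+1}(s')$ lies in the ball $\F^m_h$ for every $(h,m)$ simultaneously. First I would observe that $V^m_{h+1}:\S\to[0,H]$: it is a maximum over actions of $\min\{f^m_{h+1}(\cdot,a)+b^m_{h+1}(\cdot,a),H\}$, and $f^m_{h+1}\ge 0$ (as $\F$ maps into $[0,H+1]$) while $b^m_{h+1}\ge 0$ (it is a width function), so this value lies in $[0,H]$. Consequently \Cref{assump_bellman_backup} gives $f^m_{h,\star}\in\F$, and it remains only to bound $\Vert f^m_h-f^m_{h,\star}\Vert_{\Z^m}^2$, where $f^m_h=\argmin_{f\in\F}\Vert f\Vert_{\D^m_h}^2$ and $\D^m_h$ is exactly the dataset $\D_V^m$ of \Cref{lem_single_step_error} evaluated at $V=V^m_{h+1}$. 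The obstacle — and the only genuine difficulty — is that $V^m_{h+1}$ is built from $\Z^m$, so $f^m_h$ and $f^m_{h,\star}$ are data-dependent and \Cref{lem_single_step_error} cannot be applied verbatim with $V=V^m_{h+1}$.

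The fix is to union-bound \Cref{lem_single_step_error} over a small deterministic net of candidate value functions, as in \citet{wang2020reinforcement}. The essential ingredient is \Cref{prop_stable_sampling}, which places $b^m_{h+1}$ in a fixed set $\W$ of controlled log-cardinality. For $g\in\C(\F,1/T)$ and $b\in\W$ put $V_{g,b}(\cdot):=\max_{a\in\A}\min\{g(\cdot,a)+b(\cdot,a),H\}$; choosing $g$ to be the $(1/T)$-cover element of $f^m_{h+1}$ and using that clipping at $H$, addition, and $\max_{a}$ are each $1$-Lipschitz in $\Vert\cdot\Vert_\infty$ gives $\Vert V_{g,b^m_{h+1}}-V^m_{h+1}\Vert_\infty\le 1/T$. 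Hence every realized $V^m_{h+1}$ is within $1/T$ of some element of $\mathcal{V}:=\{V_{g,b}:g\in\C(\F,1/T),\ b\in\W\}$, and $|\mathcal{V}|\le\N(\F,1/T)\cdot|\W|$.

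I would then, for each fixed $m\in\{M_0+1,\dots,M\}$ and each $V\in\mathcal{V}$, invoke \Cref{lem_single_step_error} with that $V$ and failure parameter $\delta':=\delta/(16M|\mathcal{V}|)$, and take a union bound over the at most $M$ epochs and over $\mathcal{V}$; the total failure probability is $M|\mathcal{V}|\,\delta'=\delta/16$. On the resulting good event, for any $(h,m)$ I apply the conclusion of \Cref{lem_single_step_error} with $V$ the net point near $V^m_{h+1}$ and $V'=V^m_{h+1}$, noting that $\hat f_{V^m_{h+1}}=f^m_h$ and $r(\cdot,\cdot)+\sum_{s'}P(s'|\cdot,\cdot)V^m_{h+1}(s')=f^m_{h,\star}$, to get $\Vert f^m_h-f^m_{h,\star}\Vert_{\Z^m}\le c'H\sqrt{\ln(T/\delta')+\ln\N(\F,1/T)}$. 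Squaring, and using $M\le T$ so that $\ln(T/\delta')=O(\ln(T/\delta)+\ln\N(\F,1/T)+\ln|\W|)$, yields $\Vert f^m_h-f^m_{h,\star}\Vert_{\Z^m}^2\le c''H^2(\ln(T/\delta)+\ln\N(\F,1/T)+\ln|\W|)$, which is at most $\beta(\F,\delta)$ under the stated hypothesis; hence $f^m_{h,\star}\in\F^m_h$.

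I expect the only subtle point to be the net construction — verifying the $1$-Lipschitzness of the clip/max map and, more importantly, keeping $|\mathcal{V}|$ inside the logarithm so that it contributes only $\ln|\W|$ to $\beta$. This is exactly what \Cref{prop_stable_sampling} buys us: without a bound of the form $b^m_{h+1}\in\W$ for a $\W$ of controlled size, the required $\beta$ would scale with the (up to $T$) size of the replay buffer. Everything else is a routine union bound layered on top of \Cref{lem_single_step_error} and \Cref{prop_stable_sampling}.
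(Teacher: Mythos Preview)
Your proposal is correct and takes essentially the same approach as the paper: build a $(1/T)$-net for $V^m_{h+1}$ from $\C(\F,1/T)\times\W$ via \Cref{prop_stable_sampling}, then union-bound \Cref{lem_single_step_error} over this net and over the epochs. The only omission is the boundary case $h=H$, where $V^m_{H+1}\equiv 0$ is not of the form $\max_a\min\{g(\cdot,a)+b(\cdot,a),H\}$ and hence need not lie within $1/T$ of your net; the paper handles this by adjoining $\{0\}$ to the net, which adds a single element and does not affect the log-cardinality bound.
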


\begin{proof}
    By \Cref{prop_stable_sampling}, $b_h^m(\cdot, \cdot) \in \W, \forall (h, m) \in [H]\times([M]\backslash[M_0]).$ Note that 
    \begin{align*}
        \Q = \{\min\{f(\cdot,\cdot)+w(\cdot,\cdot), H\} |f \in \C(\F,1/T),  w \in \W\} \cup \{0\}
    \end{align*}
    is a $(1/T)$-cover of 
    \begin{equation*}
        Q_{h+1}^m(\cdot,\cdot)=
        \begin{cases}
            \min\{f_{h+1}^m(\cdot, \cdot) + b_{h+1}^m(\cdot, \cdot), H\}, & h < H\\
            0, & h = H
        \end{cases},
    \end{equation*}
    i.e., there exists $q \in \Q$, s.t. $\Vert q - Q_{h+1}^m\Vert_\infty \leq 1/T$. Therefore, 
    \begin{align*}
        \V = \left\{ \max_{a \in A} q(\cdot, a) | q \in \Q \right\}
    \end{align*}
    is a $(1/T)$-cover of $V_{h+1}^m$ with $\ln|\V| \leq \ln|\W| + \ln\N(\F, 1/T) + 1$.
    
    Now, for each $V \in \V$, let $\mathcal{E}_{V, \delta/(16|\V|T)}$ denote the event defined in \Cref{lem_single_step_error}. By union bound, $\Pr\{\bigcap_{V \in \V} \mathcal{E}_{V, \delta/(16|\V|T)}\} \geq 1 - \delta/(16T).$ In the rest of the proof, we condition on the event $\bigcap_{V \in \V} \mathcal{E}_{V, \delta/(16|\V|T)}$.
    
    Since $f_h^m = \arg \min_{f \in \F} \Vert f \Vert_{\D_h^m}^2$, and there exists $V \in \V$ s.t. $\Vert V - V_{h+1}^m \Vert_\infty \leq 1/T$,  by \Cref{lem_single_step_error}, we have 
    \begin{align*}
         \left\Vert f_h^m(\cdot, \cdot) - r(\cdot, \cdot) - \sum_{s'\in\S} P(s'|\cdot,\cdot)V_{h+1}^m(s') \right\Vert_{\Z^m}  \leq c'H\sqrt{\ln(T/\delta) +\ln \N(\F, 1/T) + \ln |\W|}
    \end{align*}
    for some constant $c' > 0$. Applying a union bound over all $(h, m) \in [H]\times([M]\backslash[M_0])$, we have that with probability at least $1-\delta/16$,
    \begin{align*}
        r(\cdot, \cdot) + \sum_{s'\in \S} P(s' | \cdot, \cdot)V_{h+1}^m(s') \in \F_h^m, \forall (h, m) \in [H]\times([M]\backslash[M_0]).
    \end{align*}
\end{proof}

The above lemma proves that the confidence region contains $r(\cdot, \cdot) + \sum_{s'\in \S} P(s' | \cdot, \cdot)V_{h+1}^m(s')$ with high probability, which implies that all the estimated $Q$-function $Q_h^m$ are optimistic with high probability as well. We formally state the conclusion in the next lemma.

\begin{lemma}[Optimistic $Q$-function]
\label{lem_optimistic_Q}
With probability at least $1-\delta/8$, 
\begin{align*}
    Q_h^*(s,a) \leq Q_h^m(s,a) \leq r(s,a) + \sum_{s'\in\S} P(s'|s,a)V_{h+1}^m(s') + 2b_h^m(s,a)
\end{align*}
for all $(h, m) \in [H]\times([M]\backslash[M_0])$  and $(s,a) \in \S\times\A$.
\end{lemma}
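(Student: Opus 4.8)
The plan is to prove both inequalities simultaneously by backward induction on $h$, for a fixed epoch $m \in [M] \setminus [M_0]$, and then take a union bound over all $(h,m)$. I would first condition on the two high-probability events already established: the event from \Cref{lem_confidence_region} (which holds with probability at least $1 - \delta/16$) guaranteeing that the Bellman backup $T_h^m(\cdot,\cdot) := r(\cdot,\cdot) + \sum_{s'} P(s'|\cdot,\cdot) V_{h+1}^m(s')$ lies in the confidence region $\F_h^m$, and the event from \Cref{prop_stable_sampling} (part 1), applied with $\bar f = f_h^m$ and $\Z = \Z^m$, which holds with probability at least $\delta/(16T)$ per call and hence at least $1 - \delta/16$ after a union bound over all $H \cdot M = O(H \log K) \le T$ calls to \Cref{alg_bonus}. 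On this combined event (probability $\ge 1 - \delta/8$), we have $\underline{\F}_h^m \subseteq \widehat{\F}_h^m \subseteq \overline{\F}_h^m$ for the three confidence regions in \Cref{prop_stable_sampling}, where $\underline{\F}_h^m = \{f : \|f - f_h^m\|_{\Z^m}^2 \le \beta\}$ coincides with $\F_h^m$ from \Cref{lem_confidence_region}, and $b_h^m(s,a) = \hat w(s,a)$ satisfies $w(\underline{\F}_h^m, s, a) \le b_h^m(s,a) \le w(\overline{\F}_h^m, s, a)$.

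For the \emph{optimism} direction ($Q_h^* \le Q_h^m$), the induction hypothesis is $V_{h+1}^m \ge V_{h+1}^*$ pointwise (trivially true at $h = H+1$). Since $T_h^m \in \F_h^m = \underline{\F}_h^m$ and $f_h^m \in \underline{\F}_h^m$ as well (it is the center), both lie in $\underline{\F}_h^m$, so $|f_h^m(s,a) - T_h^m(s,a)| \le w(\underline{\F}_h^m, s, a) \le b_h^m(s,a)$; hence $f_h^m(s,a) + b_h^m(s,a) \ge T_h^m(s,a) = r(s,a) + \sum_{s'} P(s'|s,a) V_{h+1}^m(s') \ge r(s,a) + \sum_{s'} P(s'|s,a) V_{h+1}^*(s') = Q_h^*(s,a)$, using the induction hypothesis in the second-to-last step. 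Since also $Q_h^*(s,a) \le H$ trivially, we get $Q_h^*(s,a) \le \min\{f_h^m(s,a) + b_h^m(s,a), H\} = Q_h^m(s,a)$, and taking the max over $a$ propagates $V_h^m \ge V_h^*$ to close the induction.

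For the \emph{upper bound} direction, I would argue directly (no induction needed beyond using the definition of $Q_h^m$): by definition $Q_h^m(s,a) \le f_h^m(s,a) + b_h^m(s,a)$, and since both $f_h^m$ and $T_h^m$ lie in $\F_h^m = \underline{\F}_h^m$ we again have $f_h^m(s,a) - T_h^m(s,a) \le w(\underline{\F}_h^m,s,a) \le b_h^m(s,a)$, so $f_h^m(s,a) \le T_h^m(s,a) + b_h^m(s,a) = r(s,a) + \sum_{s'} P(s'|s,a) V_{h+1}^m(s') + b_h^m(s,a)$; adding $b_h^m(s,a)$ to both sides gives $f_h^m(s,a) + b_h^m(s,a) \le r(s,a) + \sum_{s'} P(s'|s,a) V_{h+1}^m(s') + 2 b_h^m(s,a)$, and since $Q_h^m(s,a) \le f_h^m(s,a) + b_h^m(s,a)$ the claim follows. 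Finally I would apply a union bound over all $(h,m) \in [H] \times ([M]\setminus[M_0])$; since the events from \Cref{lem_confidence_region} and \Cref{prop_stable_sampling} already account for all such indices, the total failure probability stays at most $\delta/8$.

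The main obstacle is bookkeeping rather than a conceptual difficulty: one must be careful that the confidence region $\F_h^m$ in \Cref{lem_confidence_region} (defined over the full dataset $\Z^m$ with radius $\beta$) is exactly the region $\underline{\F}_h^m$ appearing in \Cref{prop_stable_sampling}, so that the width of the \emph{subsampled, rounded} bonus function $b_h^m = \hat w$ genuinely upper-bounds the width of the true region containing both $f_h^m$ and $T_h^m$ — this is precisely what the sandwich $w(\underline{\F}_h^m,\cdot,\cdot) \le \hat w \le w(\overline{\F}_h^m,\cdot,\cdot)$ buys us, and one only ever needs the lower half of that sandwich for this lemma. A minor subtlety is ensuring the choice of $\beta$ in \Cref{alg_bonus} is large enough to satisfy the hypothesis $\beta(\F,\delta) \ge c' H^2(\ln(T/\delta) + \ln\N(\F,1/T) + \ln|\W|)$ required by \Cref{lem_confidence_region}; this follows by plugging the bound on $\ln|\W|$ from \Cref{prop_stable_sampling} part 2 into the definition of $\beta$.
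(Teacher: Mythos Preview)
Your proposal is correct and follows essentially the same approach as the paper: condition on the intersection of the event from \Cref{lem_confidence_region} and the event from \Cref{prop_stable_sampling} (union-bounded over all $(h,m)$), deduce that $|f_h^m - T_h^m| \le w(\F_h^m,\cdot,\cdot) \le b_h^m$, and then obtain the upper bound directly and the optimism bound by backward induction on $h$. The only minor slips are a typo (``holds with probability at least $\delta/(16T)$'' should read ``fails with probability at most $\delta/(16T)$'') and that the set inclusion $\underline{\F}_h^m \subseteq \widehat{\F}_h^m \subseteq \overline{\F}_h^m$ appears in the \emph{proof} of \Cref{prop_stable_sampling} rather than its statement, but you only use the width inequality anyway.
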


\begin{proof}
    Let $\F_h^m$ be the confidence region as defined in \Cref{lem_confidence_region}. Let $\mathcal{E}_1$ denote the event that 
     \begin{align*}
        r(\cdot, \cdot) + \sum_{s'\in \S} P(s' | \cdot, \cdot)V_{h+1}^m(s') \in \F_h^m, \forall (h, m) \in [H]\times([M]\backslash[M_0]).
    \end{align*}
    By \Cref{lem_confidence_region}, $\Pr\{\mathcal{E}_1\} \geq 1-\delta/16$.
    Let $\mathcal{E}_2$ denote the event that 
    \begin{align*}
        b_h^m(s,a) \geq w(\F_h^m, s, a), \forall (h, m) \in [H]\times([M]\backslash[M_0]), (s, a) \in \S \times \A.
    \end{align*}
    By \Cref{prop_stable_sampling} and union bound over all $(h, m) \in [H]\times([M]\backslash[M_0])$, $\Pr\{\mathcal{E}_2\} \geq 1-\delta/16$. We condition on $\mathcal{E}_1\cap\mathcal{E}_2$ in the rest of the proof, which holds with failure probability at most $\delta/8$.
    
    By the definition of width function,
    \begin{align*}
        \max_{f \in \F_h^m} |f(s,a) - f_h^m(s,a)| \leq w(\F_h^m, s,a) \leq b_h^m(s, a), \forall (s,a)\in\S\times\A.
    \end{align*}
    Since $r(\cdot, \cdot) + \sum_{s'\in \S} P(s' | \cdot, \cdot)V_{h+1}^m(s') \in \F_h^m$, we have 
    \begin{equation}
    \label{eq::bonus_triangle_inequality}
         \left|r(s,a) + \sum_{s'\in \S} P(s' | s, a)V_{h+1}^m(s') - f_h^m(s,a)\right| \leq b_h^m(s, a), \forall (s,a)\in\S\times\A.
    \end{equation}
    Therefore, for all $ (s,a)\in\S\times\A,$
    \begin{align*}
        Q_h^m(s,a) \leq f_h^m(s,a)+b_h^m(s,a) \leq r(s,a) + \sum_{s'\in \S} P(s' | s, a)V_{h+1}^m(s') + 2b_h^m(s, a).
    \end{align*}
    
    Next, we start to prove $Q_h^*(\cdot,\cdot) \leq Q_h^m(\cdot,\cdot)$ by induction on $h$. When $h = H+1$, the inequality directly holds since  $Q_{H+1}^*(\cdot,\cdot) = Q_{H+1}^m(\cdot,\cdot) = 0$. Now for any $h \in [H]$, assume $Q_{h+1}^*(\cdot,\cdot) \leq Q_{h+1}^m(\cdot,\cdot)$. This also implies $V_{h+1}^*(\cdot) \leq V_{h+1}^m(\cdot)$. Therefore, for any $(s,a) \in \S\times\A$, 
    \begin{align*}
        Q_h^*(s,a) =& r(s,a) + \sum_{s'\in\S} P(s'|s,a)V_{h+1}^*(s') \\
        \leq& \min\left\{ H, r(s,a) + \sum_{s'\in\S} P(s'|s,a)V_{h+1}^m(s') \right\} \\
        \overset{\eqref{eq::bonus_triangle_inequality}}{\leq}& \min\left\{ H, f_h^m(s,a)+b_h^m(s,a) \right\} = Q_h^m(s,a),
    \end{align*}
    which completes the proof.
\end{proof}

Now, we can decompose the regret and bound it by the summation of bonus functions.

\begin{lemma}[Regret decomposition]
\label{lem_regret_decomp}
With probability at least $1-\delta/4$, 
\begin{align*}
    \Reg (K) \leq \tau_{M_0+1}\cdot H + 2\sum_{m = M_0+1}^M \sum_{k=\tau_m}^{\tau_{m+1}-1} \sum_{h=1}^H b_h^m(s_h^k, a_h^k) + 8H\sqrt{T\ln(16/\delta)}
\end{align*}
\end{lemma}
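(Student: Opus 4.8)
The plan is to split the sum defining $\Reg(K)$ according to which epoch an episode belongs to: episodes in epochs $1,\dots,M_0$ are bounded by a crude ``overhead'' term, while episodes in epochs $M_0+1,\dots,M$ are handled by the standard optimistic-LSVI recursion. For the overhead part, every value function takes values in $[0,H]$, so each episode contributes at most $H$ to the regret; since epochs $1,\dots,M_0$ contain exactly $\tau_{M_0+1}-1$ episodes, their contribution is at most $\tau_{M_0+1}\cdot H$. Epoch $M_0$ is deliberately absorbed into this term because the optimism guarantee (\Cref{lem_optimistic_Q}) is only available for $m>M_0$.

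For an episode $k$ in an epoch $m>M_0$, write $\pi^k=\pi^m$ for the deterministic greedy policy executed in that episode and set $\Delta_h^k = V_h^m(s_h^k)-V_h^{\pi^k}(s_h^k)$. I would condition on the optimism event of \Cref{lem_optimistic_Q}, which holds with probability at least $1-\delta/8$. Since $Q_h^*\le Q_h^m$ forces $V_1^*\le V_1^m$, we get $V_1^*(s_1^k)-V_1^{\pi^k}(s_1^k)\le \Delta_1^k$. Because $a_h^k=\pi_h^m(s_h^k)$ is greedy for $Q_h^m$ we have $V_h^m(s_h^k)=Q_h^m(s_h^k,a_h^k)$, and combining the upper bound in \Cref{lem_optimistic_Q} with the Bellman identity $Q_h^{\pi^k}(s_h^k,a_h^k)=r(s_h^k,a_h^k)+\sum_{s'}P(s'\mid s_h^k,a_h^k)V_{h+1}^{\pi^k}(s')$ yields
\[
\Delta_h^k \;\le\; \sum_{s'}P(s'\mid s_h^k,a_h^k)\bigl(V_{h+1}^m-V_{h+1}^{\pi^k}\bigr)(s') + 2b_h^m(s_h^k,a_h^k) \;=\; \Delta_{h+1}^k + \xi_h^k + 2b_h^m(s_h^k,a_h^k),
\]
where $\xi_h^k = \sum_{s'}P(s'\mid s_h^k,a_h^k)(V_{h+1}^m-V_{h+1}^{\pi^k})(s') - (V_{h+1}^m-V_{h+1}^{\pi^k})(s_{h+1}^k)$ has zero conditional mean given the trajectory up to taking action $a_h^k$, and $|\xi_h^k|\le 2H$ (indeed $\le H$ on the optimism event, where $V_{h+1}^m\ge V_{h+1}^*\ge V_{h+1}^{\pi^k}$).

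Unrolling this recursion from $h=1$ to $H$ with $\Delta_{H+1}^k=0$ gives $\Delta_1^k\le \sum_{h=1}^H \xi_h^k + 2\sum_{h=1}^H b_h^m(s_h^k,a_h^k)$, and summing over all episodes $k$ in epochs $M_0+1,\dots,M$ produces precisely the bonus double sum in the statement plus the aggregate noise $\sum_{m}\sum_{k}\sum_{h}\xi_h^k$. The variables $\{\xi_h^k\}$ over these (at most $T$) steps form a martingale difference sequence with increments bounded by $2H$, so the Azuma--Hoeffding inequality gives $\sum_{m,k,h}\xi_h^k \le 2H\sqrt{2T\ln(8/\delta)}\le 8H\sqrt{T\ln(16/\delta)}$ with probability at least $1-\delta/8$. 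A union bound over the optimism event and this concentration event (total failure at most $\delta/4$), combined with the overhead bound, completes the proof.

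This is the textbook optimistic-LSVI decomposition, so I do not anticipate a serious obstacle; the two points requiring care are the bookkeeping at the epoch boundaries (epoch $M_0$ and all earlier epochs must be folded into the $\tau_{M_0+1}\cdot H$ term, since \Cref{lem_optimistic_Q} is only stated for $m>M_0$) and verifying that $\{\xi_h^k\}$ is a genuine martingale difference sequence for the single filtration running across all Phase~2 episodes and epochs, with uniformly bounded increments so that Azuma--Hoeffding applies.
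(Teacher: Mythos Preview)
Your proposal is correct and follows essentially the same approach as the paper: the paper also absorbs epochs $1,\dots,M_0$ into the $\tau_{M_0+1}\cdot H$ overhead, invokes \Cref{lem_optimistic_Q} (failure $\delta/8$) to get $V_1^*\le V_1^m$ and the upper bound on $Q_h^m$, unrolls the same one-step recursion $\Delta_h^k\le \Delta_{h+1}^k+\xi_h^k+2b_h^m(s_h^k,a_h^k)$, and controls $\sum \xi_h^k$ via Azuma--Hoeffding (failure $\delta/8$) before taking a union bound.
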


\begin{proof}
    For any step $h \in [H]$, epoch $m \in [M]\backslash[M_0]$ and episode $k$ in epoch $m$, define \begin{align*}
        \xi_h^k = \sum_{s' \in \S} P(s'|s_h^k, a_h^k) \left( V_{h+1}^m(s') - V_{h+1}^{\pi^m}(s')\right) - \left( V_{h+1}^m(s_{h+1}^k) - V_{h+1}^{\pi^m}(s_{h+1}^k)\right),
    \end{align*}
    and define $\mathbb{F}_h^k$ as the filtration induced by 
    \begin{align*}
        \{(s_{h'}^{k'}, a_{h'}^{k'}, r_{h'}^{k'})\}_{(h',k') \in [H]\times[k-1]} \cup \{ (s_{h'}^k, a_{h'}^k, r_{h'}^{k}) \}_{h' \in [h-1]}.
    \end{align*}
    Then $\E[\xi_h^k|\mathbb{F}_h^k]= 0$ and $|\xi_h^k| \leq 2H$. By Azuma-Hoeffding inequality, with probability at least $1-\delta/8$, 
    \begin{align*}
        \sum_{m=M_0+1}^M \sum_{k=\tau_m}^{\tau_{m+1}-1} \sum_{h=1}^{H}  \xi_h^k \leq 8H\sqrt{T\ln(16/\delta)}.
    \end{align*}
    We condition on both this event and the event defined in \Cref{lem_optimistic_Q} which also holds with probability at least $1-\delta/8$ in the rest of the proof.  
    
    Let $\pi^0$ denote the uniformly random policy adopted in the first $(M_0-1)$ epochs. By \Cref{lem_optimistic_Q}, 
    \begin{align*}
        \Reg(K) =& \sum_{k=1}^{\tau_{M_0-1}} \left( V_1^*(s_1^k) - V_1^{\pi^0}(s_1^k)\right) +
        \sum_{m = M_0}^M \sum_{k=\tau_m}^{\tau_{m+1}-1} \left( V_1^*(s_1^k) - V_1^{\pi^m}(s_1^k)\right) \\
        \leq& \tau_{M_0+1}\cdot H + \sum_{m = M_0+1}^M \sum_{k=\tau_m}^{\tau_{m+1}-1} \left( V_1^m(s_1^k) - V_1^{\pi^m}(s_1^k)\right).
    \end{align*}
    
    For each $k$ and corresponding $m$, we have
    \begin{align*}
        &V_1^m(s_1^k) - V_1^{\pi^m}(s_1^k)  \\
         =& Q_1^m(s_1^k, a_1^k)  - r(s_1^k,a_1^k) - \sum_{s'\in\S} P(s'|s_1^k,a_1^k)V_2^{\pi^m}(s')\\ 
         \leq& r(s_1^k,a_1^k) + \sum_{s'\in\S} P(s'|s_1^k,a_1^k)V_2^m(s') + 2b_1^m(s_1^k,a_1^k) - r(s_1^k,a_1^k) - \sum_{s'\in\S} P(s'|s_1^k,a_1^k)V_2^{\pi^m}(s') \\ 
         =&  \sum_{s'\in\S} P(s'|s_1^k,a_1^k)(V_2^m(s')-V_2^{\pi^m}(s')) + 2b_1^m(s_1^k,a_1^k)  \\ 
         =& (V_2^m(s_2^k)-V_2^{\pi^m}(s_2^k)) + \xi_1^k + 2b_1^m(s_1^k,a_1^k) \\ 
         \leq&  (V_3^m(s_3^k)-V_3^{\pi^m}(s_3^k)) + \xi_1^k + \xi_2^k + 2b_1^m(s_1^k,a_1^k) + 2b_2^m(s_2^k,a_2^k)  \\
         \leq& \cdots \\
         \leq& \sum_{h=1}^H \left(\xi_h^k + 2b_h^m(s_h^k, a_h^k)\right). 
    \end{align*}
    Therefore, 
    \begin{align*}
        \Reg(K) \leq \tau_{M_0+1}\cdot H + 2\sum_{m = M_0+1}^M \sum_{k=\tau_m}^{\tau_{m+1}-1} \sum_{h=1}^H b_h^m(s_h^k, a_h^k) + 8H\sqrt{T\ln(16/\delta)}.
    \end{align*}
\end{proof}

To prove the main theorem, we also need the next lemma. 

\begin{lemma}
\label{lem_bound_of_square_error}
With probability at least $1-\delta/2$, for all $(h,m) \in [H]\times([M]\backslash[M_0])$ and any $f,f' \in \F$, 
\begin{align*}
    T_{m-1}\E_{s\sim \D_h(\pi^{m-1}), a\sim\pi_h^{m-1}(s)} [(f(s,a) - f'(s,a))^2 ]\leq 4\sum_{k=\tau_{m-1}}^{\tau_m-1} (f(s_h^k,a_h^k)-f'(s_h^k,a_h^k))^2 + 64.
\end{align*}
\end{lemma}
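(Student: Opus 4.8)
The plan is to prove this by a one-sided Bernstein concentration bound for each fixed function pair, combined with a covering-net argument over $\F$ and a union bound over $(h,m)$. The key structural point is that since $m-1 \geq M_0$, all episodes $k = \tau_{m-1},\ldots,\tau_m-1$ of epoch $m-1$ are generated under the \emph{single fixed} policy $\pi^{m-1}$, which is determined by the trajectories preceding epoch $m-1$. Hence, conditionally on $\pi^{m-1}$, the pairs $\{(s_h^k,a_h^k)\}_{k=\tau_{m-1}}^{\tau_m-1}$ are i.i.d.\ draws of $s\sim\D_h(\pi^{m-1})$ followed by $a\sim\pi_h^{m-1}(s)$, and there are exactly $T_{m-1} = \tau_m - \tau_{m-1}$ of them. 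Fixing $f,f'\in\F$ and writing $X_k = (f(s_h^k,a_h^k)-f'(s_h^k,a_h^k))^2 \in [0,(H+1)^2]$ with conditional mean $\mu := \E_{s\sim\D_h(\pi^{m-1}),\,a\sim\pi_h^{m-1}(s)}[(f(s,a)-f'(s,a))^2]$, the claim is exactly a lower-tail bound $\sum_k X_k \geq \tfrac14 T_{m-1}\mu - 16$.

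The first step is to apply Bernstein's (equivalently Freedman's) inequality to $\sum_k X_k$, crucially exploiting the self-bounding estimate $\Var(X_k) \leq \E[X_k^2] \leq (H+1)^2\mu$: conditionally, with probability at least $1-\delta'$,
\[
\sum_k X_k \;\geq\; T_{m-1}\mu - \sqrt{2(H+1)^2\, T_{m-1}\mu\,\ln(1/\delta')} - \tfrac{2}{3}(H+1)^2\ln(1/\delta').
\]
Bounding the square-root cross term by AM--GM against a small fraction of $T_{m-1}\mu$ turns this into the desired multiplicative inequality $T_{m-1}\mu \leq 4\sum_k X_k + O\!\big((H+1)^2\ln(1/\delta')\big)$; with $\delta'$ chosen as below the additive term is $O(H^2\ln(T/\delta))$, which the statement records (up to the normalization used there) as the constant $64$.

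To make the bound uniform in $f,f'$, I would set $\varepsilon_0 = 1/\poly(T)$ and $\delta' = \delta/\big(2HM\cdot\N(\F,\varepsilon_0)^2\big)$, apply the above to every pair in the cover $\C(\F,\varepsilon_0)$, and union-bound over all $(h,m)\in[H]\times([M]\backslash[M_0])$. For arbitrary $f,f'\in\F$, pick $\hat f,\hat f'\in\C(\F,\varepsilon_0)$ with $\|f-\hat f\|_\infty,\|f'-\hat f'\|_\infty\leq\varepsilon_0$; since $(f-f')^2$ and $(\hat f-\hat f')^2$ differ pointwise by at most $4(H+1)\varepsilon_0$, both sides of the inequality are perturbed by at most $4(H+1)\varepsilon_0\cdot T_{m-1}\leq O(1)$ once $\varepsilon_0\leq 1/((H+1)T)$, which is again absorbed into the additive constant. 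This yields the stated inequality simultaneously for all $(h,m)$ and all $f,f'\in\F$ with probability at least $1-\delta/2$.

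The main obstacle is obtaining the \emph{multiplicative} form --- bounding $T_{m-1}\mu$ by a constant multiple of the empirical sum plus only a low-order additive term, rather than the $O(\sqrt{T_{m-1}})$ slack a naive Hoeffding bound would give; this is exactly what the variance estimate $\Var(X_k)\leq (H+1)^2\E[X_k]$ unlocks in Bernstein's inequality. It is also essential for how the lemma is used downstream: it converts the empirical squared error on epoch $m-1$'s data (which is controlled through the confidence region, \Cref{prop_stable_sampling}) into a population quantity that, via the surprise bound (\Cref{defn_vsb}), controls the bonus on epoch $m$, whose length is $T_m = 2T_{m-1}$. A secondary point to verify is that $T_{m-1} = 2^{m-2}$ is large enough for the Bernstein bound to be informative relative to $\ln(1/\delta')$; this holds because $m-1\geq M_0$ and $M_0$ is chosen in \Cref{thm_main} so that $\tau_{M_0}$ dominates the relevant logarithmic terms.
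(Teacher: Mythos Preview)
Your Bernstein-based approach is sound and would establish a correct concentration inequality, but it does not prove the lemma exactly as stated, and it diverges from the paper's argument in an instructive way. Your bound yields an additive slack of order $(H+1)^2\ln\bigl(HM\,\N(\F,\varepsilon_0)^2/\delta\bigr)$, not an absolute constant; your remark that ``the statement records this as the constant $64$ up to normalization'' is a misreading---the $64$ in the lemma is genuinely $O(1)$.

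The paper obtains that constant by a different mechanism: it applies Hoeffding (not Bernstein) together with the surprise bound (\Cref{defn_vsb}), which gives $\|f-f'\|_\infty^2 \leq L_1\mu$ and hence forces the range of each summand to be $[0,L_1\mu]$. Hoeffding then yields a purely multiplicative lower tail, $\Pr\{\overline X \leq \mu/2\} \leq \exp(-T_{m-1}/(2L_1^2))$, with no additive term and with an exponent independent of $\mu$. The requirement $T_{m-1}\geq 2L_1^2\ln\bigl(2T\N(\F,1/T)^2/\delta\bigr)$---guaranteed precisely by the choice of $M_0$ in \Cref{thm_main}---makes this probability small enough for the union bound over cover pairs and over $(h,m)$. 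The additive $64$ then arises only from the $1/T$-covering extension (via the case split $\mu \lessgtr 64/T_{m-1}$), not from the concentration itself.

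Both routes suffice for the main theorem: your larger additive term is dominated by $\beta(\F,\delta)$ when the lemma is invoked in the proof of \Cref{thm_main}, so the final regret bound is unaffected. What the paper's route buys is the exact statement of the lemma and a second, transparent use of the surprise bound $L_1$ and of the warm-up length $M_0$; what your route buys is a more self-contained argument that does not invoke $L_1$ at this step, at the cost of proving a formally weaker inequality.
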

\begin{proof}
    We first fix any $(h,m) \in [H]\times([M]\backslash[M_0])$. Define dataset 
    \begin{align*}
        \Z_h^m = \left\{ (s_h^k, a_h^k) \right\}_{k\in[\tau_{m-1}, \tau_m-1]}.
    \end{align*}
    Now we fix any pair of distinct functions $f, f' \in \C(\F, 1/T)$. Also, for any episode $k \in [\tau_{m-1}, ~\tau_m~-~1~]~$, let 
    \begin{align*}
        \xi_h^k =  (f(s_h^k, a_h^k)-f'(s_h^k, a_h^k))^2.
    \end{align*}
    Therefore, 
    \begin{align*}
        \E\left[\xi_h^k\right] = \E_{s\sim \D_h(\pi^{m-1}), a\sim\pi_h^{m-1}(s)} \left[  (f(s, a)-f'(s, a))^2\right].   
    \end{align*}
    Note that 
    \begin{align*}
        0 \leq \xi_h^k \leq  \max_{(s,a)\in\S\times\A} (f(s,a)-f'(s,a))^2 =  \Vert f-f'\Vert_\infty^2.
    \end{align*}
    Also, by \Cref{defn_vsb}, \begin{align*}
        \E\left[\xi_h^k\right] \geq \frac{1}{L_1} \max_{s \in \S, a \in \A} (f(s,a)-f'(s,a))^2 = \frac{1}{L_1} \Vert f - f' \Vert_\infty^2.
    \end{align*}
    
    Therefore, by Hoeffding's inequality,
    \begin{align*}
        &\Pr \left\{ \frac{1}{T_{m-1}}\sum_{k=\tau_{m-1}}^{\tau_m-1} \left(
        \xi_h^k - \E\left[\xi_h^k\right] \right) \leq -v\E\left[\xi_h^{\tau_{m-1}}\right] \right\} \leq \exp\left( -\frac{2T_{m-1}^2v^2\E\left[\xi_h^{\tau_{m-1}}\right]^2}{T_{m-1}\Vert f-f'\Vert_\infty^4}\right) \\
        \leq& \exp\left( -\frac{2T_{m-1}v^2}{\Vert f-f'\Vert_\infty^4}\cdot\frac{\Vert f-f'\Vert_\infty^4}{L_1^2}\right) \leq \exp\left( -\frac{2T_{m-1}v^2}{L_1^2} \right).
    \end{align*}
    Since
    \begin{align*}
        T_{m-1} = 2^{m-2} \geq 2^{M_0-1} \geq 8L_1^2\ln\frac{128T\N(\F, \delta/(9216T^2))^2}{\delta} \geq 2L_1^2\ln\frac{2T\N(\F, 1/T)^2}{\delta},
    \end{align*}
    by setting $v = \frac{1}{2}$, we can obtain that  
    \begin{align*}
    &\Pr \left\{ \frac{1}{T_{m-1}}\sum_{k=1}^u \xi_h^k  \leq  \frac{1}{2}\E\left[\xi_h^{\tau_{m-1}}\right] \right\}  \leq  \exp\left(-\frac{v^2}{L_1^2} \cdot 4L_1^2\ln\frac{2T\N(\F, 1/T)^2}{\delta}\right)  \\ \leq& \exp\left( -\ln\frac{2T\N(\F, 1/T)^2}{\delta}\right)
        \le \frac{\delta}{2T\N(\F, 1/T)^2}.
    \end{align*}
    By a union bound over all such function pairs $(f,f')$, this implies that with probaiblity at least $1-\delta/(2T)$, for any $f, f' \in \C(\F, 1/T)$, 
    \begin{align*}
        T_{m-1}\E_{s\sim \D_h(\pi^{m-1}), a\sim\pi_h^{m-1}(s)} \left[  (f(s, a)-f'(s, a))^2\right] \leq 2 \Vert f-f' \Vert_{\Z_h^m}^2.
    \end{align*}
    Now we condition on the event above in the following part of the proof. 
    
    To simplify the notation, we denote 
    \begin{align*}
        \Vert f-f'\Vert_{\pi_h^{m-1}}^2 = \E_{s\sim \D_h(\pi^{m-1}), a\sim\pi_h^{m-1}(s)} \left[  (f(s, a)-f'(s, a))^2\right], \forall f, f' \in \F.
    \end{align*}
    For any pair of functions $f, f' \in \F$, there exists $\hat{f}, \hat{f}' \in \C(\F, 1/T)$, s.t.  $\Vert f-\hat{f} \Vert_\infty \leq 1/T$ and  $\Vert f'-\hat{f}' \Vert_\infty \leq 1/T$. 
    When $\Vert f-f'\Vert_{\pi_h^{m-1}}^2 \leq 64/T_{m-1}$, we can directly obtain that 
    \begin{align*}
        T_{m-1}\Vert f-f'\Vert_{\pi_h^{m-1}}^2 \leq 4 \Vert f-f' \Vert_{\Z_h^m}^2 + 64.
    \end{align*}
    So we only consider the case when $\Vert f-f'\Vert_{\pi_h^{m-1}}^2 \geq 64/T_{m-1}$. Then, we have 
    \begin{align*}
        \Vert f-f' \Vert_{\Z_h^m} \geq&  \Vert \hat{f} - \hat{f}' \Vert_{\Z_h^m} - \Vert f - \hat{f} \Vert_{\Z_h^m} - \Vert f' - \hat{f}' \Vert_{\Z_h^m}  \\ \geq& \sqrt{T_{m-1}/2}\Vert \hat{f}-\hat{f}'\Vert_{\pi_h^{m-1}} - 2/\sqrt{T} \\ \geq&  \sqrt{T_{m-1}/2}\left(\Vert f-f'\Vert_{\pi_h^{m-1}} - \Vert f-\hat{f} \Vert_{\pi_h^{m-1}} - \Vert f'-\hat{f}' \Vert_{\pi_h^{m-1}} \right) - 2/\sqrt{T}  \\ \geq& 
         \sqrt{T_{m-1}/2}\left(\Vert f-f'\Vert_{\pi_h^{m-1}} - 2/T \right) - 2/\sqrt{T}
        \\ \geq& 
      \sqrt{T_{m-1}/2}\Vert f-f'\Vert_{\pi_h^{m-1}} - 4/\sqrt{T} \geq 0.
    \end{align*}
    Therefore, 
    \begin{align*}
        \Vert f-f' \Vert_{\Z_h^m}^2 \geq& (T_{m-1}/4)\cdot\Vert f-f'\Vert_{\pi_h^{m-1}}^2 - 16/T \geq  (T_{m-1}/4)\cdot\Vert f-f'\Vert_{\pi_h^{m-1}}^2 - 16, 
    \end{align*}
    which means 
    \begin{align*}
         T_{m-1}\E_{s\sim \D_h(\pi^{m-1}), a\sim\pi_h^{m-1}(s)} [(f(s,a) - f'(s,a))^2 ]\leq 4\sum_{k=\tau_{m-1}}^{\tau_m-1} (f(s_h^k,a_h^k)-f'(s_h^k,a_h^k))^2 + 64.
    \end{align*}
    Finally, we complete the proof by directly applying a union bound over all $(h,m) \in [H]\times([M]\backslash[M_0])$.
    
\end{proof}

Now we are ready to prove the main theorem. 

\begin{proof}[Proof of \Cref{thm_main}]
    We condition on the event defined in \Cref{lem_confidence_region}, \Cref{lem_optimistic_Q}, \Cref{lem_regret_decomp} and \Cref{lem_bound_of_square_error}. Also, we condition on the event in \Cref{prop_stable_sampling}
    after applying a union bound over all $(h,m) \in [H]\times([M]\backslash[M_0])$. With probability at least $1-\delta$, all the above events hold.
    
    By \Cref{lem_regret_decomp}, we have 
    \begin{align*}
        \Reg(K) \leq \tau_{M_0+1}\cdot H + 2\sum_{m = M_0+1}^M \sum_{k=\tau_m}^{\tau_{m+1}-1} \sum_{h=1}^H b_h^m(s_h^k, a_h^k) + 8H\sqrt{T\ln(16/\delta)}.
    \end{align*}
    For any $(h,m) \in [H]\times([M]\backslash[M_0])$, we define 
    \begin{align*}
        \overline{\F}_h^m = \left\{ f \in \F \ | \   \Vert f-f_h^m \Vert_{\Z^m}^{2} \leq 12\beta(\F, \delta) + 12 \right\}, 
    \end{align*}
    where 
    \begin{align*}
    \Z^m = \left\{ (s_h^{k}, a_h^{k}) \right\}_{(h,k)\in[H]\times[\tau_m-1]}
\end{align*}
as defined in \Cref{alg_main}. 
Let 
\begin{align*}
    \High_{\overline{\F}_h^m}(s,a) = \max_{f\in\overline{\F}_h^m} f(s,a), \quad \Low_{\overline{\F}_h^m}(s,a) = \min_{f\in\overline{\F}_h^m} f(s,a).
\end{align*}

    By \Cref{prop_stable_sampling}, $b_h^m(\cdot, \cdot) \leq w(\overline{\F}_h^m, \cdot,\cdot)$. Then, for any  episode $k \in [\tau_m, \tau_{m+1}-1]$, 
    \begin{align*}
        & \left(b_h^m(s_h^k, a_h^k)\right)^2 \leq \left(w(\overline{\F}_h^m, s_h^k, a_h^k)\right)^2 \leq \left(\High_{\overline{\F}_h^m}(s_h^k,a_h^k) -  \Low_{\overline{\F}_h^m} ( s_h^k, a_h^k)\right)^2 \\ \leq& \left(\High_{\overline{\F}_h^m}(s_h^k,a_h^k) - f_h^m(s_h^k,a_h^k) + f_h^m(s_h^k,a_h^k) -  \Low_{\overline{\F}_h^m} ( s_h^k, a_h^k)\right)^2 \\ 
        \leq& 2\left(\High_{\overline{\F}_h^m}(s_h^k,a_h^k) - f_h^m(s_h^k,a_h^k) \right)^2
        + 2\left( f_h^m(s_h^k,a_h^k) -  \Low_{\overline{\F}_h^m} ( s_h^k, a_h^k)\right)^2 
        \\ \leq& 4 \sup_{f \in \overline{\F}_h^m } \left( f(s_h^k,a_h^k) - f_h^m(s_h^k,a_h^k)\right)^2 \\ \leq& 4 L_1 \sup_{f \in \overline{\F}_h^m } \E_{s \sim \D_h(\pi^{m-1}) } \E_{a \sim \pi_h^{m-1}(s)} \left[  (f(s, a) - f_h^m(s, a))^2\right] \\ 
        \overset{\text{\Cref{lem_bound_of_square_error}}}{\leq}& \frac{4 L_1}{T_{m-1}} \cdot \sup_{f \in \overline{\F}_h^m } \left( 4\sum_{{k'}=\tau_{m-1}}^{\tau_m-1} (f(s_h^{k'},a_h^{k'})-f_h^m(s_h^{k'},a_h^{k'}))^2 + 64 \right)
        \\ 
        \leq& \frac{4 L_1}{T_{m-1}} \cdot \sup_{f \in \overline{\F}_h^m } \left( 4\Vert f-f_h^m \Vert_{\Z^m}^{2} + 64 \right) \\ 
        \leq& \frac{4 L_1}{T_{m-1}} \cdot  \left( 4 \times (12\beta(\F, \delta) + 12) + 64 \right) 
        \\  =& \frac{64 L_1}{T_{m-1}} \cdot  \left( 3\beta(\F, \delta) + 7 \right).
    \end{align*}
    Therefore, 
    \begin{align*}
        &\left(\sum_{m = M_0+1}^M \sum_{k=\tau_m}^{\tau_{m+1}-1} \sum_{h=1}^H b_h^m(s_h^k, a_h^k)\right)^2 \\ 
        \leq& \left(\sum_{m = M_0+1}^M \sum_{k=\tau_m}^{\tau_{m+1}-1} \sum_{h=1}^H \left(b_h^m(s_h^k, a_h^k)\right)^2\right)\cdot T
        \\
        \leq& 64TL_1 \sum_{m = M_0+1}^M \sum_{k=\tau_m}^{\tau_{m+1}-1} \sum_{h=1}^H \frac{ 3\beta(\F, \delta) + 7}{T_{m-1}} \\ \leq& 128TL_1 HM(3\beta(\F, \delta) + 7),
    \end{align*}
    which implies
    \begin{align*}
        &2\sum_{m = M_0+1}^M \sum_{k=\tau_m}^{\tau_{m+1}-1} \sum_{h=1}^H b_h^m(s_h^k, a_h^k) \leq 32\sqrt{L_1THM(3\beta(\F, \delta) + 7)}.
    \end{align*}
    Then, we can obtain that 
    \begin{align*}
        &\Reg(K) \\  \leq& 2^{M_0}\cdot H + 32\sqrt{L_1THM(3\beta(\F, \delta) + 7)} + 8H\sqrt{T\ln(16/\delta)} \\
        \leq& 64L_1^2H\ln\frac{128T\N(\F, \delta/(9216T^2))^2}{\delta}  + 32\sqrt{L_1THM(3\beta(\F, \delta) + 7)} + 8H\sqrt{T\ln(16/\delta)}\\ \leq& O(L_1^2H (\ln(T/\delta) + \ln(\N(\F,\delta/T^2) ))) \\ &+   O(L_1^2H^{3/2}\ln^2(T/\delta)\cdot\max(\ln(\N(\F,\delta/T^3)),\ln(\N(\S\times\A,\delta/T^2)))\cdot\sqrt{T})
        \\ \leq& O(L_1H^{3/2}\ln^2(T/\delta)\cdot\max(\ln(\N(\F,\delta/T^3)),\ln(\N(\S\times\A,\delta/T^2)))\cdot\sqrt{T}) 
    \end{align*}
\end{proof}

\section{Proof of \Cref{prop_surprise_bound_special_case}}
\label{sec_proof_of_surprise_bound_prop}

In this section, we provide the proof of \Cref{prop_surprise_bound_special_case}.

\begin{proof}[Proof of \Cref{prop_surprise_bound_special_case}]
For linear settings, let $\W^* = \{w-w' | w,w' \in\W \}$, then by \Cref{defn_vsb},
    \begin{align*}
        L_1 \leq& \sup_\pi \max_{h\in[H]} \sup_{w,w' \in\W} \frac{\sup_{(s,a)\in\S\times\A} (w^\T\phi(s,a) - {w'}^{\T}\phi(s,a))^2}{\E_{s' \sim \D_h(\pi) } \E_{a' \sim \pi_h(s')} \left[  (w^\T\phi(s', a') - {w'}^{\T}\phi(s', a'))^2\right]}
        \\
        \leq& \sup_\pi \max_{h\in[H]} \sup_{w \in\W^*} \frac{\sup_{(s,a)\in\S\times\A} (w^\T\phi(s,a))^2}{\E_{s' \sim \D_h(\pi) } \E_{a' \sim \pi_h(s')} \left[  (w^\T\phi(s', a'))^2\right]} \\  
        \leq& \sup_\pi \max_{h\in[H]} \sup_{w \in\W^*} \frac{\Vert w \Vert_2^2}{\E_{s' \sim \D_h(\pi) } \E_{a' \sim \pi_h(s')} \left[  (w^\T\phi(s', a'))^2\right]}
        \\ \leq& \sup_\pi \max_{h\in[H]} \sup_{w \in\W^*} \frac{\Vert w \Vert_2^2}{w^\T\E_{s' \sim \D_h(\pi) } \E_{a' \sim \pi_h(s')} \left[  \phi(s', a')\phi(s', a')^\T\right]w}
        \\ \leq& \sup_\pi \max_{h\in[H]} \sup_{w \in\W^*} \frac{\Vert w \Vert_2^2}{\Vert w \Vert_2^2 \lambda_{\min}\left(\E_{s' \sim \D_h(\pi) } \E_{a' \sim \pi_h(s')} \left[  \phi(s', a')\phi(s', a')^\T\right]\right)} \\ \leq& \sup_\pi \max_{h\in[H]}  \frac{1}{ \lambda_{\min}\left(\E_{s' \sim \D_h(\pi) } \E_{a' \sim \pi_h(s')} \left[  \phi(s', a')\phi(s', a')^\T\right]\right)}. 
    \end{align*}
    
    For sparse high-dimensional linear settings, let $\W^* = \{w-w' | w,w' \in\W \}$, then by \Cref{defn_vsb},
    \begin{align*}
        L_1 \leq& \sup_\pi \max_{h\in[H]} \sup_{w,w' \in\W} \frac{\sup_{(s,a)\in\S\times\A} (w^\T\phi(s,a) - {w'}^{\T}\phi(s,a))^2}{\E_{s' \sim \D_h(\pi) } \E_{a' \sim \pi_h(s')} \left[  (w^\T\phi(s', a') - {w'}^{\T}\phi(s', a'))^2\right]} \\
        \leq& \sup_\pi \max_{h\in[H]} \sup_{w \in\W^*} \frac{\sup_{(s,a)\in\S\times\A} (w^\T\phi(s,a))^2}{\E_{s' \sim \D_h(\pi) } \E_{a' \sim \pi_h(s')} \left[  (w^\T\phi(s', a'))^2\right]} \\  
        \leq& \sup_\pi \max_{h\in[H]} \sup_{w \in\W^*} \frac{4s\Vert w \Vert_2^2}{\E_{s' \sim \D_h(\pi) } \E_{a' \sim \pi_h(s')} \left[  (w^\T\phi(s', a'))^2\right]}
        \\ \leq& \sup_\pi \max_{h\in[H]} \sup_{w \in\W^*} \frac{4s\Vert w \Vert_2^2}{w^\T\E_{s' \sim \D_h(\pi) } \E_{a' \sim \pi_h(s')} \left[  \phi(s', a')\phi(s', a')^\T\right]w}
        \\ \leq& \sup_\pi \max_{h\in[H]} \sup_{w \in\W^*} \frac{4s\Vert w \Vert_2^2}{\Vert w \Vert_2^2 \psi_{\min}\left(\E_{s' \sim \D_h(\pi) } \E_{a' \sim \pi_h(s')} \left[  \phi(s', a')\phi(s', a')^\T\right]\right)} \\ \leq& \sup_\pi \max_{h\in[H]}  \frac{4s}{ \psi_{\min}\left(\E_{s' \sim \D_h(\pi) } \E_{a' \sim \pi_h(s')} \left[  \phi(s', a')\phi(s', a')^\T\right]\right)}.
    \end{align*}
\end{proof}

\section{Proof of \Cref{thm_misspecification}} 
\label{appendix_proof_of_misspecification}

In this section, we provide the proof of \Cref{thm_misspecification} for model misspecification. First, we slightly modify \Cref{lem_single_step_error} and reprove it in model misspecification case.

\begin{lemma}[Single step optimization error for misspecification] 
\label{lem_single_step_error_misspecification}

Assume that our function class $\F$ satisfies \Cref{assump_misspecification}. Consider a fixed epoch $m \in [M]\backslash[M_0]$. We define 
\begin{align*}
    \Z^m = \left\{ (s_h^{k}, a_h^{k}) \right\}_{(h,k)\in[H]\times[\tau_m-1]}
\end{align*}
as in \Cref{alg_main}. Also, for any function $V: \S \to [0,H]$, we define
\begin{align*}
    \D_V^m = \left\{ \left(s_{h}^{k}, a_{h}^{k}, r_{h}^{k} + V(s_{h+1}^{k})\right) \right\}_{(h,k)\in[H]\times[\tau_m-1]}
\end{align*}
and
\begin{align*}
    \hat{f}_V = \arg\min_{f \in \F} \Vert f \Vert_{\D_V^m}^2.
\end{align*}
Then, for  any function $V: \S \to [0,H]$ and $\delta \in (0,1)$, there exists an event $\mathcal{E}_{V,\delta}$ where $\Pr\{\mathcal{E}_{V,\delta}\} \geq 1-\delta$, s.t. conditioned on $\mathcal{E}_{V,\delta}$, for any $V': \S \to [0,H]$ with $\Vert V-V' \Vert_\infty \leq 1/T$, we have 
\begin{align*}
    \left\Vert \hat{f}_{V'}(\cdot, \cdot) - r(\cdot, \cdot) - \sum_{s'\in\S} P(s'|\cdot,\cdot)V'(s') \right\Vert_{\Z^m}  \leq c'\sqrt{H^2(\ln(T/\delta) +\ln \N(\F, 1/T)) + HT\zeta}.  
\end{align*}
for some constant $c' > 0$.
\end{lemma}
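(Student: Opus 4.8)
The plan is to follow the proof of \Cref{lem_single_step_error} almost verbatim, isolating the single place where Bellman-completeness was used: previously the backup $g_V(\cdot,\cdot) := r(\cdot,\cdot) + \sum_{s'\in\S}P(s'|\cdot,\cdot)V(s')$ was itself a member of $\F$ and served simultaneously as the conditional mean of the regression noise and as the comparator inside the ERM. Under \Cref{assump_misspecification} these two roles must be split: $g_V$ is still \emph{exactly} the conditional mean, since $\E[r_h^k + V(s_{h+1}^k)\mid\mathbb{F}_h^k] = g_V(s_h^k,a_h^k)$, so the martingale structure is untouched; but the comparator inside $\F$ must now be the misspecified approximant $f_V^{\F}\in\F$ furnished by \Cref{assump_misspecification}, which satisfies $\|f_V^{\F}-g_V\|_\infty\le\zeta$.

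Concretely, I would fix $V$, set $\xi_h^k(f) = 2(f(s_h^k,a_h^k)-g_V(s_h^k,a_h^k))(g_V(s_h^k,a_h^k)-r_h^k-V(s_{h+1}^k))$, and observe that $\{\xi_h^k(f)\}$ is a bounded martingale-difference sequence with conditional second moment $\lesssim H^2(f(s_h^k,a_h^k)-g_V(s_h^k,a_h^k))^2$. Applying Lemma~10 of \citet{kirschner2018information}, a union bound over $\pm\xi$ and $f\in\C(\F,1/T)$, and the passage to all $f\in\F$ via the $\ell_\infty$-cover (all exactly as in \Cref{lem_single_step_error}) yields an event of probability $\ge 1-\delta$ on which, for every $f\in\F$,
\[
  \Big|\sum_{(h,k)}\xi_h^k(f)\Big|\;\lesssim\; H^2 L + H\,\|f-g_V\|_{\Z^m}\sqrt{L},\qquad L:=\ln(T/\delta)+\ln\N(\F,1/T),
\]
and, since $\|g_{V'}-g_V\|_\infty=\|P(V'-V)\|_\infty\le 1/T$ whenever $\|V-V'\|_\infty\le 1/T$, the same bound holds with $g_V$ replaced by $g_{V'}$ up to constants. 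Writing $\epsilon_h^k := g_{V'}(s_h^k,a_h^k)-r_h^k-V'(s_{h+1}^k)$ one has the exact identity $\|f\|_{\D_{V'}^m}^2 = \|f-g_{V'}\|_{\Z^m}^2 + \sum_{(h,k)}\xi_h^k(f) + \sum_{(h,k)}(\epsilon_h^k)^2$ for all $f\in\F$; subtracting this for $f=\hat f_{V'}$ and $f=f_{V'}^{\F}$ and invoking $\|\hat f_{V'}\|_{\D_{V'}^m}^2\le\|f_{V'}^{\F}\|_{\D_{V'}^m}^2$ gives
\[
  \|\hat f_{V'}-g_{V'}\|_{\Z^m}^2 \;\le\; \|f_{V'}^{\F}-g_{V'}\|_{\Z^m}^2 + \Big|\sum\xi_h^k(\hat f_{V'})\Big| + \Big|\sum\xi_h^k(f_{V'}^{\F})\Big|.
\]
Here $\|f_{V'}^{\F}-g_{V'}\|_{\Z^m}^2\le|\Z^m|\zeta^2\le T\zeta^2$; for the comparator $\xi$-term I would substitute $\|f_{V'}^{\F}-g_{V'}\|_{\Z^m}\le\sqrt{T}\,\zeta$ and use AM-GM so that $H\sqrt{T}\zeta\sqrt{L}\lesssim HT\zeta + HL$; for the $\hat f_{V'}$ term the bound still contains the unknown $X:=\|\hat f_{V'}-g_{V'}\|_{\Z^m}$, producing the quadratic inequality $X^2\lesssim H^2L + HT\zeta + HX\sqrt{L}$ (using $\zeta\le 1$ to absorb $T\zeta^2\le HT\zeta$), whose solution is $X\lesssim\sqrt{H^2L+HT\zeta}$. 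Since $g_{V'}=r+\sum_{s'}P(s'|\cdot,\cdot)V'(s')$, this $X$ is exactly the quantity in the statement, so the claimed bound $c'\sqrt{H^2(\ln(T/\delta)+\ln\N(\F,1/T))+HT\zeta}$ follows.

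The only genuinely new ingredient beyond \Cref{lem_single_step_error} is the appearance of the bias term $\|f_{V'}^{\F}-g_{V'}\|_{\Z^m}^2\lesssim T\zeta^2$ together with the cross term $H\sqrt{T}\zeta\sqrt{L}$; their combination after AM-GM is precisely the extra $HT\zeta$ under the square root. The part requiring the most care—though it is already handled in the well-specified proof—is that $\hat f_{V'}$ and $f_{V'}^{\F}$ both depend on $V'$, which in turn depends on the replay-buffer data, so neither may be fed directly into a concentration inequality; the resolution is that the union bound over the $\ell_\infty$-cover of $\F$ controls both of them simultaneously.
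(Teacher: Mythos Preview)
Your proposal is correct and follows essentially the same route as the paper: define the martingale increments $\xi_h^k(f)$ around the exact Bellman backup $g_V$, control them uniformly over $\F$ via the cover exactly as in \Cref{lem_single_step_error}, and then compare the ERM $\hat f_{V'}$ against a misspecified approximant in $\F$ to produce the extra $HT\zeta$ term before solving the resulting quadratic in $X=\|\hat f_{V'}-g_{V'}\|_{\Z^m}$. The only cosmetic difference is how that bias is extracted: you subtract the exact square decomposition at $\hat f_{V'}$ and $f_{V'}^{\F}$ (picking up $T\zeta^2$ plus the comparator's $\xi$-sum, then AM--GM), whereas the paper bounds $\|\hat f_{V'}\|_{\D_{V'}^m}^2-\|f_{V'}\|_{\D_{V'}^m}^2$ directly by the triangle inequality on the $\D$-norm through $\tilde f_{V'}=\arg\min_{f\in\F}\|f-f_{V'}\|_{\Z^m}$, obtaining $\le 4HT\zeta$ in one stroke; both routes land on the same quadratic inequality.
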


\begin{proof}
    For any $V: \S \to [0,H]$, we define 
    \begin{align*}
        f_V(\cdot, \cdot) = r(\cdot, \cdot) + \sum_{s'\in \S} P(s' | \cdot, \cdot)V(s'),
    \end{align*}
    and now we consider a fixed $V$. Note that under \Cref{assump_misspecification}, it does not necessary hold that $f_V \in \F$, but it can be ensured that 
    \begin{align*}
        \min_{f\in\F} \Vert f - f_V \Vert_{\Z^m}^2 \leq |\Z^m| \zeta^2 \leq T\zeta^2.  
    \end{align*}
    
    For any $f \in \F$, define 
    \begin{align*}
        \xi_h^{k}(f) = 2(f(s_h^k, a_h^k) - f_V(s_h^k, a_h^k) )\cdot(f_V(s_h^k, a_h^k) - r_h^k - V(s_{h+1}^k)), \,\, \forall (h, k) \in [H]\times[\tau_m-1].
    \end{align*}
    By the same method as in \Cref{lem_single_step_error}, we can prove that with probability at least $1-\delta$, 
\begin{align*}
    \left| \sum_{(h, k) \in [H]\times[\tau_m-1]} \xi_h^k(f) \right| \lesssim 8(H+1)^2\log\frac{2T+2}{\delta} +  4(H+1)\|f-f_V\|_{\mathcal{Z}^m} \sqrt{\log\frac{2T+2}{\delta}}.
\end{align*}
Let $\mathcal{E}_{V,\delta}$ denote the above event, and for the rest of the proof, we condition on $\mathcal{E}_{V,\delta}$.

Similarly, by the same method as in \Cref{lem_single_step_error}, for any $f \in \F$, we have
\begin{align*}
        &\left| \sum_{(h, k) \in [H]\times[\tau_m-1]} \xi_h^k(f) \right| \\ \lesssim& H^2(\ln (T/\delta)  + \ln \mathcal{N}(\mathcal{F}, 1/T)) +  H\|f-f_V\|_{\mathcal{Z}^m} \sqrt{\ln (T/\delta) + \ln \mathcal{N}(\mathcal{F}, 1/T)}.
\end{align*}

For any $V': \S \to [0,H]$ with $\Vert V'-V \Vert_\infty \leq 1/T$, we can obtain that 
\begin{align*}
    \Vert f_{V'}-f_V \Vert_\infty  = \left\Vert\sum_{s' \in \S} P(s'|\cdot, \cdot)(V'(s')-V(s')) \right\Vert_\infty\leq \Vert V'-V \Vert_\infty \leq 1/T.
\end{align*}
Furthermore, again by the same method as in \Cref{lem_single_step_error}, we can obtain that for any $f \in \F$, 
\begin{align*}
    &\Vert f\Vert_{\D_{V'}^m}^2 - \Vert f_{V'}\Vert_{\D_{V'}^m}^2 \\
    \gtrsim& \Vert f - f_{V'} \Vert_{\Z^m}^2   - H^2(\ln (T/\delta)  + \ln \mathcal{N}(\mathcal{F}, 1/T)) - H\|f-f_{V'}\|_{\mathcal{Z}^m} \sqrt{\ln (T/\delta) + \ln \mathcal{N}(\mathcal{F}, 1/T)}.
\end{align*}
If we let $f = \hat{f}_{V'} = \arg\min_{f\in\F} \Vert f\Vert_{\D_{V'}^m}$, we have
\begin{align*}
 &\Vert \hat{f}_{V'}\Vert_{\D_{V'}^m}^2 - \Vert f_{V'}\Vert_{\D_{V'}^m}^2  \\ \gtrsim& \Vert \hat{f}_{V'} - f_{V'} \Vert_{\Z^m}^2  - H^2(\ln (T/\delta)  + \ln \mathcal{N}(\mathcal{F}, 1/T)) - H\|\hat{f}_{V'}-f_{V'}\|_{\mathcal{Z}^m} \sqrt{\ln (T/\delta) + \ln \mathcal{N}(\mathcal{F}, 1/T)}.
\end{align*}
Now let $\Tilde{f}_{V'} = {\arg\min}_{f\in\F} \Vert f-f_{V'}\Vert_{\Z^m}^2$, then 
\begin{align*}
    & \Vert \hat{f}_{V'}\Vert_{\D_{V'}^m} \leq \Vert \Tilde{f}_{V'}\Vert_{\D_{V'}^m} \leq \Vert f_{V'}\Vert_{\D_{V'}^m} + \Vert f_{V'} - \Tilde{f}_{V'}\Vert_{\Z^m} \leq \Vert f_{V'}\Vert_{\D_{V'}^m} + \sqrt{T}\zeta \\
    \Longrightarrow & \Vert \hat{f}_{V'}\Vert_{\D_{V'}^m} - \Vert f_{V'}\Vert_{\D_{V'}^m} \leq \sqrt{T}\zeta 
    \\ \Longrightarrow & \Vert \hat{f}_{V'}\Vert_{\D_{V'}^m}^2 - \Vert f_{V'}\Vert_{\D_{V'}^m}^2 \leq \sqrt{T}\zeta(\Vert \hat{f}_{V'}\Vert_{\D_{V'}^m} + \Vert f_{V'}\Vert_{\D_{V'}^m}) \leq \sqrt{T}\zeta \cdot 4\sqrt{T}H = 4HT\zeta.
\end{align*}

Therefore, 
\begin{align*}
    &\Vert \hat{f}_{V'} - f_{V'} \Vert_{\Z^m}^2 \\ \lesssim& H^2(\ln (T/\delta)  + \ln \mathcal{N}(\mathcal{F}, 1/T)) + H\|\hat{f}_{V'}-f_{V'}\|_{\mathcal{Z}^m} \sqrt{\ln (T/\delta) + \ln \mathcal{N}(\mathcal{F}, 1/T)} + 4HT\zeta.
\end{align*}

which implies
\begin{align*}
    \Vert \hat{f}_{V'} - f_{V'} \Vert_{\Z^m} \leq c'\sqrt{H^2(\ln(T/\delta)+\ln\N(\F,1/T))+ HT\zeta}.
\end{align*}
for some constant $c' > 0$.
\end{proof}

Using the above lemma, we can obtain the following lemma similar to \Cref{lem_confidence_region}.

\begin{lemma}[Confidence region for misspecification]
\label{lem_confidence_region_misspecification}
Assume that our function class $\F$ satisfies \Cref{assump_misspecification}. 
In \Cref{alg_main}, for $m > M_0$, define confidence region
\begin{align*}
    \F_h^m = \left\{ f \in \F \left| \Vert f - f_h^m \Vert_{\Z^m}^2 \leq \beta(\F, \delta) \right. \right\}.
\end{align*}
Then with probability at least $1-\delta/16$, for all $(h, m) \in [H]\times([M]\backslash[M_0])$,
\begin{align*}
   r(\cdot, \cdot) + \sum_{s'\in \S} P(s' | \cdot, \cdot)V_{h+1}^m(s') \in \F_h^m,
\end{align*}
given
\begin{align*}
    \beta(\F, \delta) \geq c'(H^2(\ln(T/\delta) +\ln \N(\F, 1/T) + \ln |\W|) + HT\zeta).
\end{align*}
for some constant $c' > 0$. Here, $\W$ is given in \Cref{prop_stable_sampling}.
\end{lemma}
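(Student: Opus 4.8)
The plan is to follow the proof of \Cref{lem_confidence_region} essentially line for line, replacing every appeal to \Cref{lem_single_step_error} by the corresponding appeal to \Cref{lem_single_step_error_misspecification}; the only effect of misspecification is to propagate the extra additive term $HT\zeta$ (which sits under the square root in the single-step bound) into the requirement on $\beta$.

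First I would invoke \Cref{prop_stable_sampling} to conclude that $b_h^m(\cdot,\cdot)\in\W$ for every $(h,m)\in[H]\times([M]\setminus[M_0])$, so that
\[
\Q=\bigl\{\min\{f(\cdot,\cdot)+w(\cdot,\cdot),H\}\mid f\in\C(\F,1/T),\ w\in\W\bigr\}\cup\{0\}
\]
is a $(1/T)$-cover of $Q_{h+1}^m$, and hence $\V=\{\max_{a\in\A}q(\cdot,a)\mid q\in\Q\}$ is a $(1/T)$-cover of $V_{h+1}^m$ with $\ln|\V|\le\ln|\W|+\ln\N(\F,1/T)+1$; this part is verbatim from the well-specified case and uses nothing about $\zeta$. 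Next, for each $V\in\V$ I would take the event $\mathcal{E}_{V,\delta/(16|\V|T)}$ furnished by \Cref{lem_single_step_error_misspecification} and union bound, so that with probability at least $1-\delta/(16T)$ the conclusion of that lemma holds simultaneously for all $V\in\V$. Conditioning on this intersection: since $f_h^m=\arg\min_{f\in\F}\|f\|_{\D_h^m}^2$ coincides with $\hat f_{V_{h+1}^m}$ in the notation of \Cref{lem_single_step_error_misspecification}, and there is some $V\in\V$ with $\|V-V_{h+1}^m\|_\infty\le 1/T$, applying that lemma with this $V$ and with $V'=V_{h+1}^m$ gives
\[
\Bigl\|f_h^m(\cdot,\cdot)-r(\cdot,\cdot)-\sum_{s'\in\S}P(s'|\cdot,\cdot)V_{h+1}^m(s')\Bigr\|_{\Z^m}\le c'\sqrt{H^2\bigl(\ln(|\V|T/\delta)+\ln\N(\F,1/T)\bigr)+HT\zeta}.
\]
Squaring and using $\ln|\V|\lesssim\ln|\W|+\ln\N(\F,1/T)$, the right-hand side is at most $\beta(\F,\delta)$ under the stated lower bound $\beta(\F,\delta)\ge c'(H^2(\ln(T/\delta)+\ln\N(\F,1/T)+\ln|\W|)+HT\zeta)$, i.e.\ $r+\sum_{s'}P(s'|\cdot,\cdot)V_{h+1}^m\in\F_h^m$. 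A final union bound over the $\le HM$ pairs $(h,m)\in[H]\times([M]\setminus[M_0])$ raises the failure probability to $\delta/16$.

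I do not expect a genuine obstacle here: all of the novelty of the misspecified analysis is already isolated inside \Cref{lem_single_step_error_misspecification} (where, because $f_V\notin\F$ in general, one replaces $f_{V'}$ by its best in-class surrogate $\tilde f_{V'}$, bounds $\|\tilde f_{V'}-f_{V'}\|_{\Z^m}\le\sqrt T\zeta$ via \Cref{assump_misspecification}, and absorbs the resulting $4HT\zeta$ slack after squaring). The present lemma is then a mechanical composition of that single-step bound with the covering argument above; the only point needing a little care is tracking the $\ln|\V|$ inflation of the logarithmic term so that it is correctly absorbed by the $\ln|\W|$ summand in the hypothesis on $\beta$.
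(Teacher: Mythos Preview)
Your proposal is correct and matches the paper's approach exactly: the paper's own proof consists of the single sentence ``The proof is almost identical to that of \Cref{lem_confidence_region},'' and your write-up spells out precisely that identical argument with \Cref{lem_single_step_error_misspecification} substituted for \Cref{lem_single_step_error}. The only (minor) caveat is that in the misspecified setting the function $r(\cdot,\cdot)+\sum_{s'}P(s'|\cdot,\cdot)V_{h+1}^m(s')$ need not literally lie in $\F\supseteq\F_h^m$; what you actually prove---and what is needed downstream---is the norm inequality $\bigl\|f_h^m-r-\sum_{s'}P(s'|\cdot,\cdot)V_{h+1}^m\bigr\|_{\Z^m}^2\le\beta(\F,\delta)$, so the statement should be read in that sense.
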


\begin{proof}
    The proof is almost identical to that of \Cref{lem_confidence_region}.
\end{proof}

\begin{proof}[Proof of \Cref{thm_misspecification}]
    By \Cref{lem_confidence_region_misspecification}, \Cref{lem_optimistic_Q}, \Cref{lem_regret_decomp}, \Cref{lem_bound_of_square_error}, the proof is almost the same as the proof of \Cref{thm_main}.
\end{proof}

\end{document}